\newtheorem{theorem}{Theorem}
\newtheorem{remark}{Remark}
\newtheorem{lemma}{Lemma}
\newtheorem{assumption}{Assumption}
\newtheorem{definition}{Definition}
\def\eqref#1{equation~\ref{#1}}
\def\1{\bm{1}}
\def\vb{{\bm{b}}}
\def\ve{{\bm{e}}}
\def\vg{{\bm{g}}}
\def\vh{{\bm{h}}}
\def\vp{{\bm{p}}}
\def\vq{{\bm{q}}}
\def\vr{{\bm{r}}}
\def\vt{{\bm{t}}}
\def\vu{{\bm{u}}}
\def\vv{{\bm{v}}}
\def\vw{{\bm{w}}}
\def\vx{{\bm{x}}}
\def\vz{{\bm{z}}}
\def\mA{{\bm{A}}}
\def\mB{{\bm{B}}}
\def\mE{{\bm{E}}}
\def\mG{{\bm{G}}}
\def\mH{{\bm{H}}}
\def\mI{{\bm{I}}}
\def\mJ{{\bm{J}}}
\def\mM{{\bm{M}}}
\def\mN{{\bm{N}}}
\def\mO{{\bm{O}}}
\def\mP{{\bm{P}}}
\def\mQ{{\bm{Q}}}
\def\mR{{\bm{R}}}
\def\mT{{\bm{T}}}
\def\mV{{\bm{V}}}
\def\mW{{\bm{W}}}
\def\mX{{\bm{X}}}
\DeclareMathAlphabet{\mathsfit}{\encodingdefault}{\sfdefault}{m}{sl}
\SetMathAlphabet{\mathsfit}{bold}{\encodingdefault}{\sfdefault}{bx}{n}
\def\gS{{\mathcal{S}}}
\newcommand{\R}{\mathbb{R}}
\definecolor{amethyst}{rgb}{0.6, 0.4, 0.8}
\Crefname{table}{Table}{Tables}
\crefname{equation}{}{}
\Crefname{assumption}{Assumption}{Assumptions}
\Crefname{example}{Example}{Examples}
\Crefname{hypothesis}{Hypothesis}{Hypotheses}
\Crefname{figure}{Figure}{Figures}
\newtheorem{proposition}{Proposition}
\newtheorem{example}{Example}
\newcommand{\rank}[1]{\operatorname{rank}\left(#1\right)}
\renewcommand{\O}{\mathcal{O}}
\newenvironment{proof-sketch}{\noindent\textbf{\textit{Proof sketch}:}}{\hfill$\square$}
\renewcommand{\O}{\mathcal{O}}
\renewcommand{\H}{\mathcal{H}}
\newcommand{\G}{\mathcal{G}}
\renewcommand{\rank}[1]{\operatorname{rank}\left(#1\right)}
\newcommand{\I}{\mathcal{I}}
\newcommand\spanl[1]{\mathrm{span}\left\langle#1\right\rangle}
\newcommand{\var}{\operatorname{Var}}
\renewcommand{\geq}{\geqslant}
\renewcommand{\leq}{\leqslant}
\newcommand\normx[1]{\left\Vert#1\right\Vert}
\newcommand\absx[1]{\left\vert#1\right\vert}
\newcommand{\CRLsim}{\sim_{\mathrm{CRL}}}
\newcommand{\pa}{\mathrm{pa}}
\newcommand{\hpa}{\overline{\mathrm{pa}}}
\newcommand{\ch}{\mathrm{ch}}
\newcommand{\hch}{\overline{\mathrm{ch}}}
\newcommand{\ans}{\mathrm{ans}}
\newcommand\independent{\protect\mathpalette{\protect\independenT}{\perp}}
\def\independenT#1#2{\mathrel{\rlap{$#1#2$}\mkern2mu{#1#2}}}
\newcommand{\nd}{\mathrm{nd}}
\newcommand{\dom}{\mathrm{sur}}
\newcommand{\algorithmicbreak}{\textbf{break}}
\newcommand{\BREAK}{\STATE \algorithmicbreak}
\title{Learning Causal Representations from General Environments: Identifiability and Intrinsic Ambiguity}
\author{Jikai Jin}
\affiliation{Institute for Computational \& Mathematical Engineering, Stanford University, CA, USA}
\email{jkjin@stanford.edu}
\author{Vasilis Syrgkanis}
\affiliation{Department of Management Science \& Engineering, Stanford University, CA, USA}
\email{vsyrgk@stanford.edu}
\keywords{Causal Representation Learning}
\begin{document}

\begin{abstract}
    We study causal representation learning, the task of recovering high-level latent variables and their causal relationships in the form of a causal graph from low-level observed data (such as text and images), assuming access to observations generated from multiple environments. 
    %While existing works are able to prove full identifiability of the underlying data generating process, they 
    Prior results on the identifiability of causal representations typically assume access to single-node %\vsdelete{, hard } 
    interventions which is rather unrealistic in practice, since the latent variables are unknown in the first place. 
    In this work, we provide the first identifiability results based on data that stem from \textit{general environments}.
    We show that for linear causal models, while the causal graph can be fully recovered, the latent variables are only identified up to the \textit{surrounded-node ambiguity} (SNA) \citep{varici2023score}. We 
    provide a counterpart of our guarantee, showing that SNA is basically unavoidable in our setting. We also propose an algorithm, \texttt{LiNGCReL} which provably recovers the ground-truth model up to SNA, and we demonstrate its effectiveness via numerical experiments.
    Finally, we consider general non-parametric causal models and show that the same identification barrier holds when assuming access to groups of soft single-node interventions.
\end{abstract}

\section{Introduction}

Artificial intelligence (AI) has achieved tremendous success in various domains in the past decade \citep{bengio2013representation,silver2016mastering,bubeck2023sparks}. However, current approaches are largely based on learning the \textit{statistical} structures and relationships in the data that we observe. As a result, it is not surprising that these approaches often capture spurious statistical dependencies between different features, resulting in poor performance in the presence of test distribution shift \citep{ovadia2019can,koh2021wilds} or adversarial attacks \citep{akhtar2018threat,wang2023adversarial}.

In view of these pitfalls, a recent line of work has explored the problem of \textit{causal representation learning} \citep{scholkopf2021toward}, the task of learning the causal relationships between high-level latent variables underlying our low-level observations. Notably, it is widely believed in cognitive psychology that humans take a causal approach to distill information from the world and make decisions to achieve their goals \citep{shanks1988associative,dunbar2004causal,holyoak2011causal}. As a result, there is reason to believe that learning causal representations has the potential to significantly improve the power of AI, especially on tasks where performance lags far behind human level \citep{geirhos2020shortcut}.

Despite such promise, a crucial challenge in causal representation learning is the \textit{identifiability} of the data generating process; in other words, given the data that we observe, can we %we shall be able to 
uniquely identify the underlying causal model. It has been shown that given observational data (\emph{i.e.}, i.i.d. data generated from a single environment), the model is already non-identifiable in strictly simpler settings where the latent variables are known to be independent \citep{locatello2019challenging,locatello2020sober}, or where there is no mixing function and one directly observes the latent variables \citep{silva2006learning}. A natural question that arises is what types of data do we need to acquire to make identification possible.

One line of works assumes access to counterfactual data \citep{locatello2020weakly,von2021self,brehmer2022weakly}, where some form of \textit{weak supervision} is typically required. A common assumption here is that one observes data in \emph{pairs}, where each pair of data is related via sharing part of the latent representation. However, such data is hard to acquire since it requires direct control on the latent representation.

Another line of works \citep{ahuja2023interventional,von2023nonparametric,buchholz2023learning,varici2023general} instead considers an interventional setting, where the learner observes data generated from multiple different environments. This is arguably a much more realistic setup and reflects common practices in robotics \citep{lippe2023biscuit} and genomics \citep{lopez2023learning,tejada2023causal} applications. However, a vast majority of identifiability guarantees assume that each environment corresponds to %one can perform 
\emph{single-node}, \emph{hard} interventions, which is defined as interventions that isolate a single latent variable from its causal parents. Again, this is quite a restrictive assumption because of two reasons. \emph{First}, since the latent variables are unknown and need to be learned from data, it is unclear how to perform interventions that only affect one variable. \emph{Second}, even if one can perform single-node interventions, it may not be feasible to artificially remove causal effects in the data generating processes. This issue is ubiquitous in real-world applications as pointed out in \citet{campbell2007interventionist,eberhardt2014direct,eronen2020causal}. Motivated by these challenges, we instead consider the following two settings:
\begin{itemize}
    \item Learning from \textit{single-node, soft} interventions, which only change the dependency of each latent variable on its direct causes, but does not remove their causal relationships. This setting is considered in \citet{seigal2022linear,zhang2022towards,varici2023score,buchholz2023learning}, which, however, make parametric assumptions on either the causal model or the mixing function. The most related paper is \citet{varici2023score}, which proves identifiability under an ambiguity induced by the so-called \enquote{surrounded set}. In this paper, we show that this type of ambiguity is intrinsic in the soft intervention setting.
    \item Learning from \textit{fully general and diverse environments}. This is a significantly more general and challenging setting, and to the best of our knowledge, no identifiability guarantees are known. \citet{khemakhem2020variational,lu2021invariant} also consider a multi-environment setup without assuming single-node interventions, but they still assume that the distributions of latent variables all come from a certain parametric family with a fixed set of sufficient statistics. %It is unclear if this assumption holds in practice. 
\end{itemize}
%More specifically, w
We make the following contributions:
\begin{itemize}
    \item For linear causal models, with a linear mixing function, we prove identification results assuming access to data from \textit{general and diverse environments} (\Cref{thm:linear-main-thm}). To the best of our knowledge, this is the first identification guarantee that makes no assumption on the relationship between the environments. Interestingly, while we show that the causal graph can be exactly recovered, the latent variables are only recovered up to a %an
    \textit{surrounded-node ambiguity} (SNA) (\Cref{thm:informal-linear-ambiguity}).
    \item We propose an algorithm, \texttt{LiNGCReL}, in \Cref{sec:algorithm} that provably recovers the ground-truth model up to SNA (\Cref{thm:alg-guarantee}) in the setting of \Cref{thm:linear-main-thm}. To demonstrate the effectiveness of \texttt{LiNGCReL}, we present extensive experimental results in \Cref{sec:experiment-results} using it to learn causal representations from randomly generated causal models. 
    \item Going beyond the linear setting, we study the limit of identification for non-parametric causal models and general mixing functions, assuming access to single-node soft interventions. We show that the model is identifiable up to SNA (\Cref{thm:single-node-soft-nonparam}), and then prove that SNA is actually the best achievable guarantee in this setting (\Cref{thm:informal-non-param-ambiguity}), thereby highlighting a key difference between soft and hard interventions. 
\end{itemize}

%The remaining part of this paper is organized as follows. In \Cref{sec:prelim} we cover preliminary properties and notations for causal representation learning. In \Cref{sec:eda} we define the notion of identifiability that we will establish in this paper. In \Cref{sec:linear} we present our identifiability result for learning linear causal models from general environments and prove the tightness of our identification guarantee. In \Cref{sec:algorithm} we propose a new identification algorithm, \texttt{LiNGCReL}, in the setting of \Cref{sec:linear}, followed by extensive experiment results in \Cref{sec:experiment-results}. In \Cref{sec:non-param} we show that SNA is the best achievable identification for general causal models, assuming access to single-node soft interventions. Finally, we make some concluding remarks and discuss possible future directions in \Cref{sec:conclusion}. Some interesting properties of SNA, additional experiment results and all the omitted proofs can be found in the Appendices.

\section{Preliminaries}
\label{sec:prelim}

We consider the standard setup of causal representation learning from multiple environments $E\in\mathfrak{E}$. Let $\G=(\mathcal{V},\mathcal{E})$ be the ground-truth causal graph which is directed and acylic (DAG), where $\mathcal{V}=[d]$ and $\mathcal{E}$ describes the causal relationship between different nodes. Each node corresponds to a latent variable $\vz_i\in\R$. 

For any node $i$, we let $\mathrm{pa}_{\G}(i)$, $\mathrm{ch}_{\G}(i)$, $\ans_{\G}(i)$ and $\mathrm{nd}_{\G}(i)$ to be the set of all parents, children, ancestors and non-descendants of $i$ in $\G$ respectively. We also define $\hpa_{\G}(i)=\pa_{\G}(i)\cup\{i\}$ and similarly for $\hch_{\G}(i),\overline{\ans}_{\G}(i)$ and $\overline{\mathrm{nd}}_{\G}(i)$. Assuming that all probability distributions have continuous densities, the joint density of the latent variables $\vz$ can then be written as
\begin{equation}
    \label{eq:generate-z}
    p_{E}(\vz) = \prod_{i=1}^d p_i^E\left(\vz_i\mid\vz_{\mathrm{pa}_{\G}(i)}\right).
\end{equation}
where $p_i^E$ is the (unknown) latent generating distribution from environment $E$ at node $i$. Here for a given vector $\vv$, we write $\vv_i=\ve_i^\top\vv$, and let $\vv_S = \left(\vv_i: i\in S\right)\in\R^{|S|}$.

The causal graph model with density given by \Cref{eq:generate-z} necessarily enjoys the following property:

\begin{definition}[Causal Markov Condition]
\label{asmp:causal-markov-condition}
    For any node $i$, conditioned on $\vz_{\mathrm{pa}_{\G}(i)}$, $\vz_i$ is independent of $\vz_{\mathrm{nd}_{\G}(i)}$. As a consequence, for any node $i,j\in[d]$ and $S\subseteq[d]$, if $S$ $d$-separates $i$ from $j$ (cf. \Cref{def:d-separation}), then $\vz_i\independent\vz_j\mid\vz_S$.
\end{definition}

The latent variables $\vz$ are unknown to the learner. Instead, the learner has access to observations $\vx\in\R^n$ ($n\geq d$) from all environments $E\in\mathfrak{E}$ that are related to the latent $\vz$ via an injective mixing function $\vg$:
\begin{equation}
    \label{generate-x}
    \vx = \vg(\vz).
\end{equation}
The main assumption here that the mixing function is the same across all environments:
\begin{assumption}
\label{asmp:data-generating-process}
    All environments $E\in\mathfrak{E}$ share the same diffeomorphic mixing function $\vg:\R^d\mapsto\R^n$.
\end{assumption}

In causal representation learning, the goal of the learner is to 1) recover the inverse of the mixing function $\vh = \vg^{-1}$ (often called the \textit{unmixing} function) which allows recovering the latent variables given any observations, and, 2) recover the underlying causal graph $\G$. In the remaining part of this paper, we refer to $(\vh,\G)$ as the causal model to be learned. Obviously, there would be some ambiguities in learning $(\vh,\G)$. For example, choosing a different permutation of the nodes in the causal graph would lead to a different model, and so does element-wise transformations on each component $\vh_i$ of $\vh$. 

\begin{comment}
To be concrete, the following ambiguities are intrinsic in our problem formulation \citealp[Proposition 3.1]{von2023nonparametric}:

\begin{definition}
    \label{def:CRL-ambiguity}
    Let $\H:\R^n \supseteq \mathcal{X}\mapsto\R^d$ be the space of diffeomorphic mappings from observation to latent, and $\mathfrak{G}$ be the space of all DAGs with $d$ nodes, then for $h,\hat{h}\in\H$ and $\G,\hat{\G}\in\mathfrak{G}$, we write $(h,\G)\CRLsim (\hat{h},\hat{\G})$ if there exists a permutation $\pi$ on $[d]$ and some element wise diffeomorphism $\phi:\R^d\mapsto\R^d$ such that 
    \begin{itemize}
        \item For $\forall i,j\in[d]$, $i\in\pa_{\G}(j) \Leftrightarrow\pi(i)\in\pa_{\hat{\G}}(\pi(j))$, and
        \item $\mP_{\pi}\circ \hat{\vh}= \phi\circ \vh$, where $\mP_{\pi}$ is a permutation matrix satisfying $(\mP_{\pi})_{ij}=1 \Leftrightarrow j=\pi(i)$.
    \end{itemize}
\end{definition}
\end{comment}

A line of recent works show that the ground-truth model can be identified up to these ambiguities in various settings, assuming access to single-node hard interventions \citep{seigal2022linear,von2023nonparametric,varici2023general}. On the other hand, some weaker notions of identifiability have also been proposed and studied in the literature \citep{seigal2022linear,varici2023score,liang2023causal} for single-node soft interventions. Here, we provide a generic definition of single-node soft interventions that we will rely on in this paper.

\begin{definition}
\label{def:intervention}
    We say that a collection of environments $\hat{\mathfrak{E}}$ is a set of (soft) interventions on a subset of latent variables $\vz_i, i\in S$ if $\forall i\in[d]$, $\forall E_1, E_2 \in \hat{\mathfrak{E}}, E_1 \neq E_2$, we have $p_i^{E_1}=p_i^{E_2} \Leftrightarrow i\notin S$ (the notation $p_i^E$ comes from \Cref{eq:generate-z}). Equivalently, we write $\I_{\vz}^{\hat{\mathfrak{E}}} = S$.
\end{definition}

%\Cref{def:intervention} is slightly different from the definition in \citet{seigal2022linear}, which assumes the existence of a \enquote{base} environment that is present in each group $E_k$.

We note that soft interventions are very different from hard interventions, since they do not remove causal relationships between latent variables. The goal of this paper is to address the following question:
\begin{center}
    \emph{What is the best-achievable identification guarantee when hard interventions are not available, and what are the intrinsic ambiguities?}
\end{center}

\section{The effect domination set and a notion of identifiability}
\label{sec:eda}

One may expect that identifiability with soft interventions is not much different from hard interventions, since soft interventions can approximate hard interventions with arbitrary accuracy. However, we will show that this is not the case. At a high level, hard intervention is more powerful than soft intervention because it is capable of isolating a latent variable from its direct cause while soft interventions is not, so soft interventions can sometimes fail to identify the true causal relationship from a mixture of causal effects.

%As a simple example, suppose that the latent variables are $\vz = (\vz_1,\vz_2)$ with causal relationship $\vz_1\to\vz_2$
%Extending the previous example to arbitrary DAGs, we introduce the notion of \textit{effect-dominating set} of any DAG $\G$:

To quantify what kind of ambiguities may arise, we can define the surrounding set for each node in a causal graph $\G$ as follows:

\begin{definition}
\label{def:effect-domination}
    (\citealp[Definition 3]{varici2023score}) For two nodes $i,j\in[d]$ in $\G$, we say that $j$ is effect-dominated by $i$, or $i\in\dom_{\G}(j)$ if $i\in\mathrm{pa}_{\G}(j)$, and $\mathrm{ch}_{\G}(j)\subseteq\mathrm{ch}_{\G}(i)$. Moreover, we define $\overline{\dom}_{\G}(j)=\dom_{\G}(j)\cup\{j\}$.
\end{definition}

In other words, the effect of $j$ on its child set $\ch_{\G}(j)$ is dominated by the effect of $i$. Intuitively, if there exists some $i\in\dom_{\G}(j)$, then ambiguities may arise for the causal variable at node $j$, since any effect of $j$ on any of its child $k$ can also be interpreted as an effect of $i$. In \Cref{appsec:eda-example} we discuss a three-node example to further illustrate such ambiguities.

\begin{figure}[htpb]
\centering
\begin{tikzpicture}[node distance=1.1cm and 1.1cm, every node/.style={draw, circle, minimum size=.3cm}, >={Latex[width=0.5mm,length=0.5mm]}]
    % Nodes
    \node (i) {$i$};
    \node (j) [right=of i] {$j$};
    \node (i1) [below left=of i] {$i_1$};
    \node (j1) [right=of i1] {$j_1$};
    \node (j2) [right=of j1] {$j_2$};
    \node (j3) [right=of j2] {$j_3$};

    % Edges
    \draw[->] (i) -- (j);
    \draw[->] (i) -- (i1);
    \draw[->] (j) -- (j1);
    \draw[->] (j) -- (j2);
    \draw[->] (j) -- (j3);
    \draw[->] (i) -- (j1);
    \draw[->] (i) -- (j2);
    \draw[->] (i) -- (j3);
    \node (c) [draw, dashed, ellipse, fit=(j1) (j2) (j3)] {};
    \node [draw=none,right=of c, xshift=-3cm, yshift=1.0cm] {$\mathrm{ch}_{\G}(j)\subseteq \mathrm{ch}_{\G}(i)$};
\end{tikzpicture}
\caption{An illustration of \Cref{def:effect-domination}; here $i\in\dom_{\G}(j)$.}
\label{fig:effect}
\end{figure}

\Cref{def:effect-domination} naturally induces the following relationship between causal models:

\begin{definition}
\label{def:dom-ambiguity}
    Using the notations in \Cref{def:crl-identify}, we write
    $(\vh,\G)\sim_{\dom}(\hat{\vh},\hat{\G})$ if there exists a permutations $\pi$ on $[d]$, and a diffeomorphism $\psi:\R^d\mapsto\R^d$ where the $j$-th component of $\psi$, denoted by $\psi_j(\vz)$, is a function of $\vz_{\overline{\dom}_{\G}(j)}$ for $\forall j\in[d]$, such that the following holds: 
    \begin{itemize}
        \item For $\forall i,j\in[d]$, $i\in\pa_{\G}\left(j\right) \Leftrightarrow \pi(i)\in\pa_{\hat{\G}}\left(\pi(j)\right)$,
        \item $\mP_{\pi}\circ\hat{\vh}=\psi\circ\vh$, where $\mP_{\pi}$ is a permutation matrix satisfying $(\mP_{\pi})_{ij}=1 \Leftrightarrow j=\pi(i)$.
    \end{itemize}
\end{definition}

In other words, $\sim_{\dom}$ requires that the causal graph to be exactly the same up to some permutation of nodes, but allows each latent variable $\vv_i$ to be entangled with $\vz_{\dom_{\G}(i)}$. Although not obvious from definition, one can actually check that $\sim_{\dom}$ defines an \emph{equivalence relation} (see \Cref{lemma:dom-equivalence-relation}). Moreover, we will show later that $\sim_{\dom}$ is in general the best that we can hope for in our problem setting.

\section{Identifiability of linear causal models from general environments}
\label{sec:linear}
In this section, we consider learning causal  models from \emph{general} environments. Specifically, we assume that the environments $E_k, k\in[K]$ share the same causal graph, but the dependencies between connected nodes (latent variables) are completely unknown, and, in contrast with existing literature on single-node interventions, we impose no similarity constraints on the environments. We begin our investigation of identifiability in this setting in the context of linear causal models with a linear mixing function.

\subsection{Problem setup}

Formally, we assume the following generative model in $K$ distinct environments $\mathfrak{E}=\left\{E_k: k\in[K]\right\}$ with data generating process
\begin{equation}
    \label{latent-original}
    \vz = \mA_k \vz + \bm{\Omega}_k^{\frac{1}{2}}\epsilon,\quad \vx = \mG\vz \quad k\in[K],
\end{equation}
where the matrix $\mA_k$ satisfies $(\mA_k)_{ij}\neq 0$ if and only if $j\to i$ in $\mathcal{G}$. We refer to $(\mA_k,\bm{\Omega}_k)$ as the weight matrices of latent variables $\vz$ in the environment $E_k$. It is easy to see that \Cref{asmp:data-generating-process} in our general setup translates into the following assumption:

\begin{assumption}\label{asmp:full-rank}
    The mixing matrix $\mG\in\R^{n\times d}$ has full column rank. Equivalently, the unmixing matrix $\mH=\mG^{\dagger}$ has full row rank. 
\end{assumption}

Let $\mB_k = \bm{\Omega}_k^{-\frac{1}{2}}(\mI-\mA_k), k\in[K]$, then we have $\epsilon = \mB_k\vz = \mB_k\mH\vx$. Since in the linear case, there is an easy to see one-to-one correspondence between the matrix $\mH$ and the un-mixing function $\vx\mapsto\mH\vx$, we abuse the notation and write $(\mH,\G)$ to represent the model instead of $(\vh,\G)$. Using $\vh_i$ to denote the $i$-th row of $\mH$, the following lemma translates \Cref{def:dom-ambiguity} the the linear setting:

\begin{lemma}\label{lem:mixture}
    According to \Cref{def:dom-ambiguity}, $(\mH,\G)\sim_{\dom} (\hat{\mH},\hat{\G})$ if and only if there exists a permutation $\pi$ on $[d]$, such that $i\in\pa_{\G}(j)\Leftrightarrow \pi(i)\in\pa_{\hat{\G}}(\pi(j))$, and for $\forall i\in[d]$, $\hat{\vh}_i\in\spanl{\vh_j: \pi(j)\in\overline{\dom}_{\G}(i)}$.
\end{lemma}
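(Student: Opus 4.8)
The plan is to unfold \Cref{def:dom-ambiguity} in the linear regime and reduce the abstract functional identity $\mP_{\pi}\circ\hat{\vh}=\psi\circ\vh$ to a purely linear-algebraic statement about the rows of $\mH$ and $\hat{\mH}$. The key observation is that the unmixing maps only carry information on the observation space $\mathcal{X}=\range{\mG}$, which is $d$-dimensional, and that on $\mathcal{X}$ we may change variables by $\vx=\mG\vz$ with $\vz=\mH\vx$ ranging over all of $\R^d$ (using $\mH\mG=\mI$ from \Cref{asmp:full-rank}). First I would substitute this parametrization into the defining identity: its $i$-th component reads $\hat{\vh}_{\pi(i)}\mG\vz=\psi_i(\vz)$ for all $\vz\in\R^d$. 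Since the left-hand side is linear in $\vz$, this forces the diffeomorphism $\psi$ to be a linear map, namely $\psi(\vz)=\mP_{\pi}\hat{\mH}\mG\,\vz=:\mM\vz$ with $\mM\in\R^{d\times d}$. This collapses the ``there exists a diffeomorphism'' clause of \Cref{def:dom-ambiguity} to ``there exists a $d\times d$ matrix,'' which is exactly what makes the linear case tractable.

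With $\psi$ linear, the structural constraint of \Cref{def:dom-ambiguity} --- that $\psi_i$ depends only on $\vz_{\overline{\dom}_{\G}(i)}$ --- becomes the sparsity pattern that row $i$ of $\mM$ is supported on $\overline{\dom}_{\G}(i)$. Next I would convert the matrix identity back to a statement about rows. The maps $\mM\mH$ and $\mP_{\pi}\hat{\mH}$ agree on $\mathcal{X}$; to upgrade this to exact equality of the two $d\times n$ matrices I would use that, because $\mH=\mG^{\dagger}$ and any competing unmixing must invert a mixing with the same column space $\mathcal{X}$ (both models explain the observed $\vx\in\mathcal{X}$), the rows of both $\mH$ and $\hat{\mH}$ may be taken to lie in $\mathcal{X}$ --- indeed, the asserted membership $\hat{\vh}_i\in\spanl{\vh_j}$ can only hold if $\hat{\vh}_i\in\mathcal{X}$ in the first place. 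Two vectors in $\mathcal{X}$ inducing the same linear functional on $\mathcal{X}$ must coincide, so $\mP_{\pi}\hat{\mH}=\mM\mH$ exactly. Reading off row $i$ of $\hat{\mH}=\mP_{\pi}^{\top}\mM\mH$ then gives $\hat{\vh}_i=\sum_k \mM_{\pi^{-1}(i),k}\,\vh_k\in\spanl{\vh_k : k\in\overline{\dom}_{\G}(\pi^{-1}(i))}$. Finally I would use that the parent condition makes $\pi$ a graph isomorphism $\G\to\hat{\G}$, whence $\pi\bigl(\overline{\dom}_{\G}(j)\bigr)=\overline{\dom}_{\hat{\G}}(\pi(j))$; this identity lets me transport $\overline{\dom}$ between $\G$ and $\hat{\G}$ and rewrite the span condition in the index form $\{\vh_j : \pi(j)\in\overline{\dom}_{\G}(i)\}$ stated in the lemma.

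The converse is the same computation run backwards: given $\pi$ with the parent condition and the span memberships, I would read off the coefficients expressing each $\hat{\vh}_i$ in the $\vh_j$'s to define $\mM$, check that the support of its rows is precisely the requirement that $\psi_i$ depend only on $\vz_{\overline{\dom}_{\G}(i)}$ (here the graph-isomorphism transport of $\overline{\dom}$ is what aligns the two index sets), and verify that $\psi(\vz)=\mM\vz$ is a genuine diffeomorphism by noting $\mM$ is invertible, since $\hat{\mH}=\mP_{\pi}^{\top}\mM\mH$ with $\mH,\hat{\mH}$ of full row rank and $\mP_{\pi}$ invertible forces $\rank{\mM}=d$. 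The main obstacle I anticipate is bookkeeping the domain restriction correctly: the unmixing matrices are pinned down only on the $d$-dimensional subspace $\mathcal{X}$, so one must first establish that the rows genuinely lie in $\mathcal{X}$ before equality of functionals on $\mathcal{X}$ can be promoted to the membership $\hat{\vh}_i\in\spanl{\cdots}$ asserted over $\R^n$; entangled with this is threading the permutation through $\overline{\dom}$ via the graph-isomorphism property so that the index sets inside the span emerge exactly as stated.
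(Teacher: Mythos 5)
The paper offers no proof of \Cref{lem:mixture}; it is stated as an immediate translation of \Cref{def:dom-ambiguity} to the linear setting, so there is nothing to compare against line by line. Your argument supplies the missing justification and is correct. The one substantive point --- that the conjugating diffeomorphism $\psi$ in \Cref{def:dom-ambiguity} is forced to be the linear map $\mM=\mP_{\pi}\hat{\mH}\mG$, because evaluating the identity $\mP_{\pi}\circ\hat{\vh}=\psi\circ\vh$ along $\vx=\mG\vz$ and using $\mH\mG=\mI$ pins $\psi$ down pointwise as something already linear in $\vz$ --- is exactly the right observation, and it is what collapses the existential quantifier over diffeomorphisms to one over invertible matrices. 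Your converse (define $\mM$ from the span coefficients, note its row supports encode the dependence restriction, and deduce invertibility from the full row ranks of $\mH$ and $\hat{\mH}$) is likewise sound, and your attention to the fact that candidate unmixing rows are only determined as functionals on $\mathcal{X}=\range{\mG}$ is a legitimate subtlety the paper glosses over. One bookkeeping remark: what your computation literally yields is $\hat{\vh}_i\in\spanl{\vh_j: j\in\overline{\dom}_{\G}(\pi^{-1}(i))}$, which after transporting $\overline{\dom}$ through the graph isomorphism reads $\{\vh_j:\pi(j)\in\overline{\dom}_{\hat{\G}}(i)\}$; this agrees with the index set printed in the lemma only under the paper's implicit identification of node $i$ of $\hat{\G}$ with node $\pi^{-1}(i)$ of $\G$ (and exactly when $\pi=\mathrm{id}$, the only case in which the lemma is subsequently used). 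That wrinkle sits in the paper's statement rather than in your proof.
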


We also need to make the following assumption on noise.

\begin{assumption}
    \label{asmp:noise-not-rotation-invariant}
    The noise vector $\epsilon\in\R^d$ has independent components, at most one component is Gaussian distributed, and any two components have different distribution.
\end{assumption}

The non-gaussianity of the noise vectors is a typical assumption in causal discovery within linear models \citep{comon1994independent,silva2006learning} and is always assumed in the LinGAM setting \citep{shimizu2006linear}. The assumption that all components have a different distribution is not so standard, but is quite natural in real-world scenarios.

\subsection{Identifiability guarantee}

For each node $i\in[d]$ of $\G$, we use $\vw_{k}(i)$ to be the \textit{weight vector} of environment $E_k$ at node $i$, \emph{i.e.}, $\vw_k(i) = \left((\mA_k)_{ij}:j\in\pa_{\G}(i)\right) \in \R^{\absx{\pa_{\G}(i)}}$. In other words, the structural equation for node $i$ in environment $k$ is of the form:
\begin{equation}
    \label{eq:structural-equatin-zi}
    z_i = w_k(i)^\top z_{\mathrm{pa}_{\G}(i)} + \sqrt{\omega_{k,i,i}} \epsilon_i
\end{equation}
To obtain our identifiability result, the main assumption we need to make is the non-degeneracy of the weights at each node:
\begin{assumption}
    \label{asmp:independent-row-orig}
    For each node $i\in[d]$ of $\G$, we have $\mathrm{aff}\left(\vw_{k}(i):k\in[K]\right) = \R^{\absx{\pa_{\G}(i)}}$ where $\mathrm{aff}(\cdot)$ denotes the affine hull. Equivalently, the weights $\vw_{k}(i), k=1,2,\cdots,K$ do not lie in a  $\left(\absx{\pa_{\G}(i)}-1\right)$-dimensional hyperplane of $\R^{\absx{\pa_{\G}(i)}}$.
\end{assumption}

This assumption is quite mild since it only requires the weight vectors to be in general positions, and it holds with probability $1$ if the weights at each node are sampled from continuous distributions.
Moreover, as shown in \Cref{lemma:assumptions-equiv}, it is equivalent to the following assumption.

\begin{assumption}[Node-level non-degeneracy]
    \label{asmp:independent-row}
    We say that the matrices $\left\{\mB_k\right\}_{k=1}^K$ are node-level non-degenerate if for all node $i \in [d]$, we have $\dim\spanl{(\mB_k)_i: k\in[K]} = \absx{\pa_{\G}(i)}+1$, where $(\mB_k)_i$ is the $i$-th row of $\mB_k$.
\end{assumption}

In the following, we state our main result in this section, which shows that $K=d$ non-degenerate environments suffices for the model to be identifiable up to $\sim_{\dom}$.

\begin{theorem}
\label{thm:linear-main-thm}
    Suppose that $K\geq d$ 
    and we have access to observations generated from the linear causal model $(\mH,\G)$ across multiple environments $\mathfrak{E}=\left\{E_k: k\in[K]\right\}$ with observation distributions $\{\mathbb{P}_{\vx}^E\}_{E\in\mathfrak{E}}$, and the data generating processes are given by \Cref{latent-original}.
    Let $(\hat{\mH},\hat{\mathcal{G}})$ be any candidate solution with the hypothetical data generating process
    \begin{equation}
        \notag
        \vv = \hat{\mA}_k\vv+\hat{\bm{\Omega}}_k^{\frac{1}{2}}\hat{\epsilon},\quad \vx = \hat{\mH}^{\dagger}\vv \quad \text{in the environment $E_k$}
    \end{equation}
    where $\hat{\mH}$ has full row rank, such that
    \begin{enumerate}[label={(\roman*)}]
        \item the observation distribution that this hypothetical model generates in $E_k$ is exactly $\mathbb{P}_{\vx}^{E_k}$;
        \item all environments share the same causal graph: $\forall k\in[K]$ and $i,j\in[d]$, $(\mA_k)_{ij}\neq 0 \Leftrightarrow j\in\pa_{\G}(i)$, $(\hat{\mA}_k)_{ij}\neq 0 \Leftrightarrow j\in\pa_{\hat{\G}}(i)$ and $\bm{\Omega}_k, \hat{\bm{\Omega}}_k$ are diagonal matrices with positive entries;
        \item $\{\mB_k\}_{k=1}^K$ and $\left\{\hat{\mB}_k = \hat{\bm{\Omega}}_k^{-\frac{1}{2}}(\mI-\hat{\mA}_k)\right\}_{k=1}^K$ are non-degenerate in the sense of \Cref{asmp:independent-row};
        \item the noise variables $\epsilon$ and $\hat{\epsilon}$ satisfy \Cref{asmp:noise-not-rotation-invariant}.
    \end{enumerate}
    Then we must have $(\mH,\G)\sim_{\dom}(\hat{\mH},\hat{\G})$.
\end{theorem}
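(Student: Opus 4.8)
My plan is to linearize the equivalence between the two models, apply linear ICA in each environment, promote the per-environment ICA permutations to a single global one, and then translate the resulting support constraints on the change-of-basis matrix into exact recovery of $\G$ together with the effect-domination pattern that \Cref{lem:mixture} requires. To begin the linearization: since $\mG$ and $\hat\mH^{\dagger}$ both have full column rank and the two models induce the same law $\mathbb{P}_{\vx}^{E_k}$, their column spaces must coincide with the support of $\vx$; hence on this $d$-dimensional space both $\vz=\mH\vx$ and $\vv=\hat\mH\vx$ are linear bijections of $\vx$, so $\vv=\mM\vz$ with $\mM=\hat\mH\mG$ invertible, and (as the two mixing maps share a column space) $\hat\mH=\mM\mH$. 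With $\mB_k,\hat\mB_k$ as in the statement (invertible because $\mI-\mA_k$ is), $\epsilon=\mB_k\vz$ and $\hat\epsilon=\hat\mB_k\vv$ give $\hat\epsilon=\mN_k\epsilon$ with $\mN_k=\hat\mB_k\mM\mB_k^{-1}$.

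Next I would invoke ICA environment by environment. In each $E_k$ both $\epsilon$ and $\hat\epsilon=\mN_k\epsilon$ have independent components with at most one Gaussian, so identifiability of linear ICA forces $\mN_k=\mP_{\pi_k}\mD_k$ to be a generalized permutation. The first delicate step is that $\pi_k$ cannot depend on $k$: each candidate coordinate $\hat\epsilon_i$ has an environment-independent law, while the ground-truth coordinates have pairwise distinct laws even up to scaling (\Cref{asmp:noise-not-rotation-invariant}), so the ground-truth coordinate matched to $\hat\epsilon_i$ is pinned down uniformly in $k$. Relabeling the candidate nodes by this common $\pi$ (to be reinstated in the final equivalence) makes every $\mN_k=\mD_k$ diagonal and invertible, whence $(\hat\mB_k)_i=d_{k,i}(\mB_k)_i\mT$ with $\mT=\mM^{-1}$ and $d_{k,i}\neq0$.

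Taking spans over $k$ and using node-level non-degeneracy (\Cref{asmp:independent-row}, attainable precisely because $K\geq d$) for both models would then give $\R^{\hpa_{\hat\G}(i)}=\R^{\hpa_\G(i)}\mT$ for every $i$. Comparing dimensions yields $\absx{\pa_{\hat\G}(i)}=\absx{\pa_\G(i)}$, and reading off supports row by row yields the implication $\mT_{a,b}\neq0\Rightarrow\hch_\G(a)\subseteq\hch_{\hat\G}(b)$, together with its symmetric counterpart $\mM_{a,b}\neq0\Rightarrow\hch_{\hat\G}(a)\subseteq\hch_\G(b)$ obtained from $\R^{\hpa_\G(i)}=\R^{\hpa_{\hat\G}(i)}\mM$.

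The crux is the combinatorial passage to $\G=\hat\G$. Since $\mT$ is invertible its support carries a permutation $\rho$, along which $\hch_\G(a)\subseteq\hch_{\hat\G}(\rho(a))$; the in-degree identity makes $\sum_a\absx{\hch_\G(a)}=\sum_b\absx{\hch_{\hat\G}(b)}$, forcing every containment to be an equality $\hch_\G(a)=\hch_{\hat\G}(\rho(a))$, and a nontrivial cycle of $\rho$ would (via $a\in\hch_{\hat\G}(\rho(a))$) produce a directed cycle in the DAG $\hat\G$, so $\rho=\mathrm{id}$ and $\hch_\G(a)=\hch_{\hat\G}(a)$ for all $a$, i.e. $\G=\hat\G$. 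Substituting this into the support implication for $\mM$ and using acyclicity forces $j\in\overline{\dom}_\G(i)$ whenever $\mM_{ij}\neq0$, so $\hat\vh_i=\sum_j\mM_{ij}\vh_j\in\spanl{\vh_j:j\in\overline{\dom}_\G(i)}$; reinstating $\pi$, this is exactly the condition of \Cref{lem:mixture}, giving $(\mH,\G)\sim_{\dom}(\hat\mH,\hat\G)$. I expect the two genuinely load-bearing steps to be the cross-environment rigidity of $\pi_k$ and this combinatorial graph-recovery argument, the latter being where the effect-domination pattern is actually forced.
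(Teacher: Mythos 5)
Your proposal is correct, and while the ICA/linearization scaffolding (reduce to $\vv=\mM\vz$, apply linear ICA per environment, force a single global permutation via the pairwise-distinct noise laws, and read off the effect-domination pattern from the support of $\mM$ at the end) is essentially the paper's, your middle step --- proving $\G=\hat\G$ --- takes a genuinely different and arguably cleaner route. The paper proceeds by induction on $d$: it peels off a sink node of $\G$, checks that the truncated matrices $\mB_k^-,\hat{\hat{\mB}}_k^-$ inherit the structural and non-degeneracy hypotheses, and then runs three separate arguments (no children of node $d$ in $\hat\G$, equal in-degree, equal parent set via explicitly cancelling the first $\ell$ coordinates) to finish the inductive step. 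You instead extract the two-sided coordinate-subspace identities $\R^{\hpa_{\hat\G}(i)}=\R^{\hpa_{\G}(i)}\mT$ directly from node-level non-degeneracy of \emph{both} models, convert them into the support implications $\mT_{a,b}\neq 0\Rightarrow\hch_{\G}(a)\subseteq\hch_{\hat\G}(b)$ (and the symmetric one for $\mM$), and then close with a global counting argument: a permutation $\rho$ inside the support of the invertible $\mT$, equality of total edge counts forcing all the containments $\hch_{\G}(a)\subseteq\hch_{\hat\G}(\rho(a))$ to be equalities, and acyclicity of $\hat\G$ killing any nontrivial cycle of $\rho$. Each step checks out, and this buys a non-inductive, purely combinatorial graph-recovery argument that makes visible \emph{why} the effect-domination sets appear (they are exactly the slack left in the support implication once $\G=\hat\G$); the paper's induction buys a more local argument that identifies the parent set of each node one at a time, which is closer in spirit to the constructive algorithm \texttt{LiNGCReL}. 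One small caveat: in the permutation-alignment step you invoke that the ground-truth noise components have pairwise distinct laws \emph{even up to scaling}, which is strictly stronger than \Cref{asmp:noise-not-rotation-invariant} as stated; the paper instead pins the diagonal factors down to $\pm1$ via the unit-variance normalization built into \Cref{lemma:asmp-no-gaussian}, so both arguments lean on an implicit normalization here and yours is no worse, but you should state it (e.g.\ normalize $\var(\epsilon_i)=\var(\hat\epsilon_i)=1$, absorbing scales into $\bm{\Omega}_k,\hat{\bm{\Omega}}_k$) rather than strengthen the distributional assumption.
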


The proof of \Cref{thm:linear-main-thm} is given in \Cref{proof:linear-main-thm}. In the next section, we will introduce an algorithm, \texttt{LiNGCReL}, that provably recovers the ground-truth up to $\sim_{\dom}$. %A motivating example that illustrates the main idea of the proof is discussed therein.

To the best of our knowledge, this is the first identifiability guarantee in the literature for causal representation learning from general environments. Remarkably, while the fact that existing works \citep{seigal2022linear,zhang2023identifiability} focus on single-node interventions seem to suggest that learning from \textit{diverse} environments is hard, our result indicates that such diversity is actually helpful. Specifically, we show that in the worst case, $\Theta(d^2)$ interventions are required for identifying the ground-truth model under $\sim_{\dom}$:

\begin{theorem}[informal version of \Cref{thm:single-node-lower-bound}]
\label{thm:single-node-lower-bound-informal}
    There exists a causal graph $\G$ with $\Theta(d^2)$ edges, such that for any unmixing matrix $\mH\in\R^{d\times n}$ with full row rank, any independent noise variables $\epsilon$, and any $0 < s_i \leq \absx{\pa_{\G}(i)}, i\in[d]$, the ground-truth model $(\mH,\G)$ is non-identifiable up to $\sim_{\dom}$ with $s_i$ soft interventions for node $i$, unless the (ground-truth and intervened) weights of the causal model lie in a null set (w.r.t the Lebesgue measure).
\end{theorem}

A formal version and the proof of \Cref{thm:single-node-lower-bound-informal} can be found in \Cref{proof:single-node-lower-bound}. On the other hand, by having $d$ single-node interventions per node, \Cref{asmp:independent-row} can be satisfied as long as the weights are in general positions, so in this case we have $(\mH,\G)\sim_{\dom}(\hat{\mH},\hat{\G})$ by \Cref{thm:linear-main-thm}. Therefore, \Cref{thm:linear-main-thm,thm:single-node-lower-bound} together imply that $\Theta(d^2)$ single-node interventions are necessary and sufficient for identification up to $\sim_{\dom}$.

Given that \Cref{thm:linear-main-thm} only guarantees identification up to $\sim_{\dom}$ that is strictly weaker than full identification, one might naturally ask whether \Cref{thm:linear-main-thm} can be further improved. Our last theorem in this section indicates that $\sim_{\dom}$ is indeed the \emph{best} one can hope for in our setting, even assuming access to \emph{single node, soft intervention}.

\begin{theorem}[Counterpart to \Cref{thm:linear-main-thm}, informal version of \Cref{thm:linear-ambiguity}]
\label{thm:informal-linear-ambiguity}
    For any linear causal model $(\mH,\G)$ and any set of environments $\mathfrak{E}=\left\{E_k: k\in[K]\right\}$ such that all conditions in \Cref{thm:linear-main-thm} are satisfied, there must exists a candidate solution $(\hat{\mH},\G)$ and a hypothetical data generating process that satisfy the same set of conditions, but
    \begin{equation}
        \notag
        \frac{\partial \vv_i}{\partial \vz_j} \neq 0,\quad \forall j\in\overline{\dom}_{\G}(i).
    \end{equation}
    Moreover, if we additionally assume that the environments are groups of single-node soft interventions, then we can guarantee the existence of $(\hat{\mH},\G)$ and weight matrices which, besides the properties listed above, are also groups of single-node soft interventions.
\end{theorem}

\section{\texttt{LinGCReL}: Algorithm for linear non-Gaussian causal representation learning}
\label{sec:algorithm}

In this section, we describe Linear Non-Gaussian Causal Representation Learning (LiNGCReL), an algorithm that provably recovers the underlying causal graph and latent variables up to $\sim_{\dom}$ in the infinite-sample limit. At this point, it is instructive to recall the celebrated \texttt{LiNGAM} algorithm \citep{shimizu2006linear} for linear causal graph discovery. Different from their setting, we only observe some unknown linear mixture of the latent variables. Hence, running linear ICA as in \texttt{LiNGAM} only gives us $\mM_k=\mB_k\mH$ rather than the weight matrix $\mB_k$ itself.

The key idea in our approach is an effect cancellation scheme that allows us to determine the \enquote{remaining degree of freedom} (RDF) of any node (\emph{a.k.a.} latent variable) given any subset of its ancestors. This scheme allows us to not only find a topological order of the nodes, but also figure out direct causes by tracking the changes of the RDF. In the following, we present the main steps of \texttt{LiNGCReL} in more details.

Suppose that we are given samples of observations $\mX^{(k)}=\left\{\vx_i^{(k)}\right\}_{i=1}^N, k\in[K]$ where $\vx_i^{(k)}$ is the $i$-th sample from the $k$-th environment. 

\textbf{Step 1. Recover the matrices $\mM_k=\mB_k\mH$} Since $\epsilon = \mB_k\vz = \mB_k\mH\vx$ in the $k$-th environment, so we can use any identification algorithm for linear ICA to recover the matrix $\mM_k$. Then we properly rearrange the rows of $\mM_k$ so that all $\mM_k\vx, k=1,2,\cdots,K$ correspond to the same permutation of noise variables. This step is quite standard and details can be found in \Cref{appsubsec:align-matrices_Mk}.

\textbf{Step 2. Causal representation learning based on $\mM_k$} Now we have obtained $\mM_k = \mB_k\mH$, but the unmixing matrix $\mH$ is still unknown. We propose \Cref{alg:learn-graph} to learn $\mH$ and the causal graph $\G$. The main part of \Cref{alg:learn-graph} contains a loop that maintains a node set $S$ which, we will show later, is ancestral, \emph{i.e.,} $i\in S \Rightarrow \ans_{\G}(i)\subseteq S$. In each round the algorithm finds a new node $i\notin S$ such that $\ans_{\G}(i)\subseteq S$, and a subroutine \texttt{Identify-Parents} (\Cref{alg:identify-parents}) is used to find all parents of $i$. After that, we append $i$ into $S$ and continue until $S$ contains all nodes in $\G$. Finally, the rows of the mixing matrix $\mH$ is obtained by intersections of properly-chosen row spaces of $\mM_k$.

Both \Cref{alg:identify-parents} and \Cref{alg:learn-graph} include a crucial step, which we call it \textit{orthogonal projection}, as described in \Cref{alg:effect-cancellation}. At a high level, it helps determine the minimal RDF for $\vz_i$ after fixing the latent variables $\vz_S$, and this exactly corresponds to the number of parents of $\vz_i$ that are not in $\vz_S$. We provide a simple example in \Cref{appsec:lingcrel-example} to illustrate why this approach works.

The following result states that \Cref{alg:learn-graph} can recover the ground-truth causal model up to $\sim_{\dom}$:

\begin{theorem}
\label{thm:alg-guarantee}
    Suppose that $\mM_k, k\in[K]$ are perfectly identified in \textbf{Step 1}. Let $(\hat{\mH},\hat{\G})$ be the solution returned by \Cref{alg:learn-graph}, then we must have $(\mH,\G)\sim_{\dom}(\hat{\mH},\hat{\G})$.
\end{theorem}

The full proof of \Cref{thm:alg-guarantee} is given in \Cref{proof:alg-guarantee}. It crucially relies on the following two propositions that reveal how \Cref{alg:learn-graph} and the subroutine \Cref{alg:identify-parents} work. 

\begin{algorithm}
\caption{\texttt{Orthogonal-projections}}\label{alg:effect-cancellation}
\begin{algorithmic}[1]
    \STATE {\bfseries Input:} Ordered set $S = \left\{ s_1, s_2,\cdots, s_m\right\}\subseteq[d]$, index $i\notin S$,  matrices $\mM_k\in\R^{d\times n}$, $k\in [K]$
    \STATE {\bfseries Output:} Set of vectors $\{\vp_k\}_{k=1}^K$
    \FOR{$k \gets 1$ {to} $K$}
    \STATE $\mW \gets \spanl{(\mM_k)_s: s\in S}$ \\
    \COMMENT{$(\mM_k)_s$ is the $s$-th row of $\mM_k$}\label{line:parents-span}
    \STATE $\vp_k \gets \mathrm{proj}_{\mW^\perp}\left((\mM_k)_i\right)$
    \ENDFOR
\end{algorithmic}
%\KIN{Ordered set $S = \left\{ s_1, s_2,\cdots, s_m\right\}\subseteq[d]$, index $i\notin S$,  matrices $\mM_k\in\R^{d\times n}$, $k\in [K]$}
%\KOUT{Set of vectors $\left\langle \vp_k: k\in[K]\right\rangle$}
%\For{$k \gets 1$ {to} $K$}{
%$\mW \gets \spanl{(\mM_k)_s: s\in S}$;\tcp{$(\mM_k)_s$ is the $s$-th row of $\mM_k$}\label{line:parents-span}\
%$\vp_k \gets \mathrm{proj}_{\mW^\perp}\left((\mM_k)_i\right)$\;
%}
\end{algorithm}

\begin{proposition}
\label{prop:alg2}
    The following two propositions hold for \Cref{alg:learn-graph}:
    \begin{itemize}
        \item $\ans_{\G}(i)\subseteq S \Leftrightarrow$ the \texttt{if} condition in line 8 of \Cref{line:if-rank1} is fulfilled;
        \item the set $S$ maintained in \Cref{alg:learn-graph} is always an ancestral set, in the sense that $j\in S \Rightarrow \ans_{\G}(j)\subseteq S$.
    \end{itemize}
\end{proposition}

\begin{proposition}
\label{prop:alg1}
    Given any ordered ancestral set $S$ that contains $\pa_{\G}(i)$ for some $i\notin S$,
    \Cref{alg:identify-parents} returns a set $P_i \subseteq S$ that is exactly $\pa_{\G}(i)$.
\end{proposition}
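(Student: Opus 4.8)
The plan is to reduce the rank test inside \texttt{Identify-Parents} to a single clean dimension count carried out in the $d$-dimensional row space $V=\spanl{\vh_1,\dots,\vh_d}$ of $\mH$, exploiting the sparsity pattern of the $\mB_k$. First I would record the basic structural identity: since the $i$-th row of $\mB_k$ is supported on $\hpa_\G(i)$ and $\mM_k=\mB_k\mH$, each row decomposes as $(\mM_k)_i=\sum_{j\in\hpa_\G(i)}(\mB_k)_{ij}\,\vh_j\in\spanl{\vh_j:j\in\hpa_\G(i)}$. Because $\mH$ has full row rank (\Cref{asmp:full-rank}), the map $\vb\mapsto\vb\mH$ is injective on row vectors, so $\dim\spanl{(\mM_k)_i:k\in[K]}=\dim\spanl{(\mB_k)_i:k\in[K]}=\absx{\pa_\G(i)}+1$ by node-level non-degeneracy (\Cref{asmp:independent-row}); combined with the inclusion above this yields $\mW_i:=\spanl{(\mM_k)_i:k\in[K]}=\spanl{\vh_j:j\in\hpa_\G(i)}$, and in particular the $\vh_j$ form a basis of $V$.

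Next I would analyze \texttt{Orthogonal-projections} on a prefix $S'=\{s_1,\dots,s_{m'}\}$ of the ordered set $S$. Here I use that $S$ is listed in a topological order (as produced by \Cref{alg:learn-graph}; cf.\ \Cref{prop:alg2}), so every prefix $S'$ is itself ancestral, whence $\bigcup_{s\in S'}\hpa_\G(s)=S'$. The key claim is that for \emph{every} environment $k$ the per-environment span $\spanl{(\mM_k)_s:s\in S'}$ equals the fixed subspace $H_{S'}:=\spanl{\vh_j:j\in S'}$, independent of $k$: expressing the vectors $(\mM_k)_{s_1},\dots,(\mM_k)_{s_{m'}}$ in the basis $\vh_{s_1},\dots,\vh_{s_{m'}}$ of $H_{S'}$, the coefficient matrix is triangular (because $\hpa_\G(s_j)\subseteq\{s_1,\dots,s_j\}$) with diagonal entries $(\mB_k)_{s_j s_j}=\omega_{k,s_j,s_j}^{-1/2}\neq0$, hence invertible, so these $m'$ vectors are independent and span $H_{S'}$. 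Consequently $\vp_k=\mathrm{proj}_{H_{S'}^\perp}\bigl((\mM_k)_i\bigr)$ with the \emph{same} subspace $H_{S'}$ for all $k$, and by linearity of the projection $r_{m'}=\dim\spanl{\vp_k:k\in[K]}=\dim\,\mathrm{proj}_{H_{S'}^\perp}(\mW_i)$.

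Then I would evaluate this dimension explicitly. Since $\ker\bigl(\mathrm{proj}_{H_{S'}^\perp}\bigr)=H_{S'}$ and the $\vh_j$ are linearly independent, $\mW_i\cap H_{S'}=\spanl{\vh_j:j\in(\pa_\G(i)\cup\{i\})\cap S'}=\spanl{\vh_j:j\in\pa_\G(i)\cap S'}$ using $i\notin S'$, so rank--nullity gives $r_{m'}=\dim\mW_i-\dim(\mW_i\cap H_{S'})=\absx{\pa_\G(i)\setminus S'}+1$. Incrementing $m'$ by one appends the single node $s_{m'}$ to $S'$, so $r_{m'}=r_{m'-1}-1$ precisely when $s_{m'}\in\pa_\G(i)$ and $r_{m'}=r_{m'-1}$ otherwise. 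Reading off the \texttt{if} test in \Cref{line:parents-rank} of \Cref{alg:identify-parents}, the node $s_{m'}$ is inserted into $P_i$ exactly when $s_{m'}\in\pa_\G(i)$; since $\pa_\G(i)\subseteq S$ by hypothesis, sweeping $m'$ from $1$ to $m$ recovers $P_i=\pa_\G(i)$, with $P_i\subseteq S$ immediate.

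The hard part will be the second step --- establishing that the environment-dependent projection subspaces all collapse to the single fixed subspace $H_{S'}$. This is what makes $r_{m'}$ well-defined as a property of $\mW_i$ alone, and it is exactly where the topological ordering of $S$ (hence ancestrality of every prefix) and the nonvanishing diagonal of $\mB_k$ enter: without the ordering a prefix need not be ancestral, $\bigcup_{s\in S'}\hpa_\G(s)$ could strictly exceed $S'$, and the spans $\spanl{(\mM_k)_s:s\in S'}$ would genuinely vary with $k$, breaking the clean per-step rank-drop characterization. Everything else is the bookkeeping of a dimension count among the linearly independent rows $\vh_j$.
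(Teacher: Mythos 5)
Your proposal is correct and follows essentially the same route as the paper: you re-derive \Cref{lemma:span-Mk-rows} (that $\spanl{(\mM_k)_i:k\in[K]}=\spanl{\vh_j:j\in\hpa_{\G}(i)}$) and \Cref{lemma:alg-same-spaces} (that the per-environment spans over an ancestral prefix all collapse to $\spanl{\vh_s:s\in S'}$), and then perform the same dimension count $r_{m'}=\absx{\hpa_{\G}(i)\setminus S'}$ yielding the unit rank drop exactly at parents. The only cosmetic differences are that you prove the collapsing lemma via an explicit triangular change of basis rather than the paper's full-row-rank dimension comparison, and you use rank--nullity directly where the paper invokes \Cref{lemma:linalg-proj-dim}.
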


\begin{algorithm}
\caption{\texttt{Identify-Parents}}\label{alg:identify-parents}
\begin{algorithmic}[1]
    \STATE {\bfseries Input:} An ordered set $S = \left\{ s_1, s_2,\cdots, s_m\right\}\subseteq[d]$, a node $i\notin S$ and matrices $\mM_k, k\in[K]$
    \STATE {\bfseries Output:} The parent set $P_i$ of node $i$
    \STATE $P_i \gets \emptyset$
    \FOR{$m' \gets 0$ {to} $m$}
    \STATE $\{\vp_k\}_{k=1}^K \gets \texttt{Orthogonal-projections}\left(\right.$\newline
    \hspace*{2em} $\left.\{s_j:j\leq m'\},i,\{\mM_k\}_{k\in [K]}\right)$
    \STATE $r_{m'} \gets \dim\mathrm{span}\left\langle \vp_k: k\in[K]\right\rangle$\label{line:parents-obtain-rank}
    \IF{$m'\geq 1$ and $r_{m'} = r_{m'-1}-1$\label{line:parents-rank}}
    \STATE $P_i \gets P_i \cup \{m'\}$
    \ENDIF
    \ENDFOR
\end{algorithmic}
\end{algorithm}

\begin{algorithm}
\caption{\texttt{Learn-Causal-Model}}\label{alg:learn-graph}
\begin{algorithmic}[1]
    \STATE {\bfseries Input:} Matrices $\mM_k, k\in[K]$
    \STATE {\bfseries Output:} The edge set $\mathcal{E}$ on the vertex set $[d]$ and the mixing matrix $\hat{\mH}$
    \STATE $S \gets \emptyset$; \COMMENT{$S$ is an ordered set}
    \STATE $\mathcal{E} \gets \emptyset$
    \WHILE{$\absx{S}<d$}
    \FOR{$i\notin S$}
    \STATE $\{\vp_k\}_{k=1}^K \gets \texttt{Orthogonal-projections}\left(\right.$\newline \hspace*{2em}$\left.S,i,\{\mM_k\}_{k\in [K]}\right)$
    \IF{$\dim\mathrm{span}\left\langle \vq_k: k\in[K]\right\rangle=1$}\label{line:if-rank1}
    \BREAK \COMMENT{we will show that such an $i$ must exist}
    \ENDIF
    \ENDFOR
    \STATE $P_i \gets \texttt{Identify-Parents}(S,i)$
    \STATE $S \gets S \cup \{i\}$
    \STATE $\mathcal{E} \gets \mathcal{E}\cup\left\{ (j,i): j\in P_i\right\}$
    \ENDWHILE
    \FOR{$i=1$ {to} $d$}
    \STATE $E_i \gets \mathrm{span}\left\langle (\mM_k)_i: k\in[K]\right\rangle$
    \ENDFOR
    \FOR{$i=1$ {to} $d$}
    \STATE $\hat{\vh}_i \gets \text{any non-zero vector in} \left(\cap_{j:(i,j)\in\mathcal{E}} E_j\right)\cap E_i$\label{line:intersect-subspaces}
    \ENDFOR
    \STATE $\hat{\mH} \gets \left[ \hat{\vh}_1^\top, \hat{\vh}_2^\top,\cdots,\hat{\vh}_d^\top\right]^\top$\;
\end{algorithmic}
\end{algorithm}

\section{Experiments}
\label{sec:experiment-results}

In this section, we present our experimental setup and results for \texttt{LiNGCReL}. Note that \texttt{LiNGCReL} as described in the previous section only works in the population regime. When the number of samples is limited, two main challenges in implementing \texttt{LiNGCReL} are to accurately compute the dimension of a subspace (line 6 of \Cref{alg:identify-parents} and line 8 of \Cref{alg:learn-graph}), and to find a vector in the intersection of multiple subspaces (line 20, \Cref{alg:learn-graph}). Due to space limit, the implementation details are described in \Cref{appsubsec:finite-sample-implementation}.

\textbf{Experimental setup.} We generate the independent noise variables from generalized Gaussian distributions $p_{\beta}(x)\propto \exp\left(-\absx{x}^{\beta}\right)$ with parameters $\beta_k = 0.2 k^2, k=1,2,\cdots,d$, multiplied by normalization constants to make their variances equal to $1$. The ground-truth causal graph is generated by first fixing a total order of the vertices, say $1,2,\cdots,d$, then add directed edges $i\to j (i<j)$ according to i.i.d. Bernoulli($p$) distributions, where $p\in(0,1)$. The non-zero entries of matrices $\mB_k$ and $\mH$ are all generated independently from Gaussian distributions. For simplicity, we focus on the case $n=d$ since recovery of the latent graphs only requires information from $d$ components of $\vx$.

\begin{figure*}[t]
    %\vspace{-0.25in}
    \centering
    \begin{subfigure}[c]{0.35\textwidth}
    \setcounter{subfigure}{0}
        \centering
        \includegraphics[width=\textwidth]{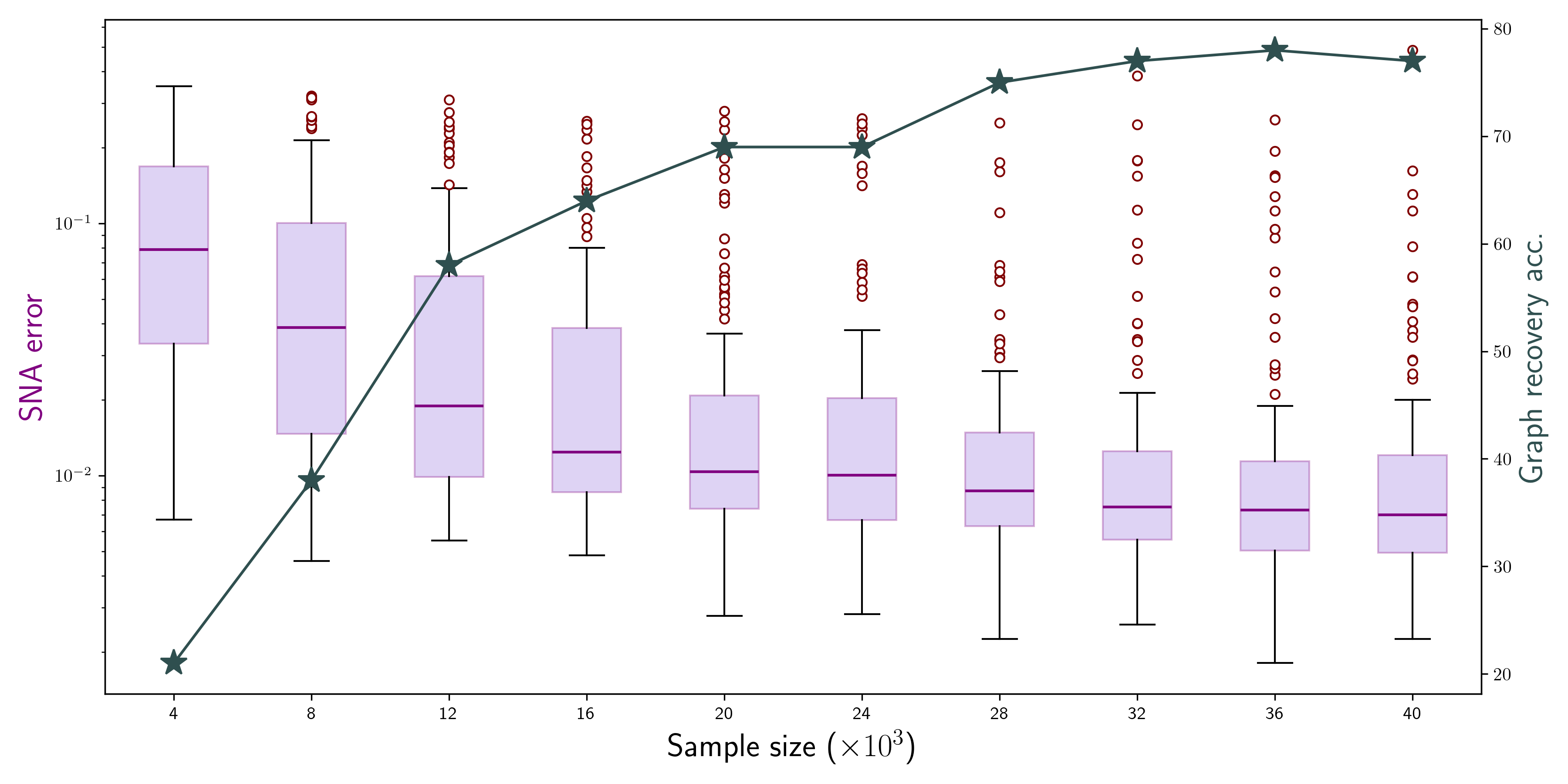}
        \caption[]%
        {{\small $d=K=5$}}    
        \label{d=K=5}
    \end{subfigure}
    \begin{subfigure}[c]{0.35\textwidth} 
    \setcounter{subfigure}{1}
        \centering 
        \includegraphics[width=\textwidth]{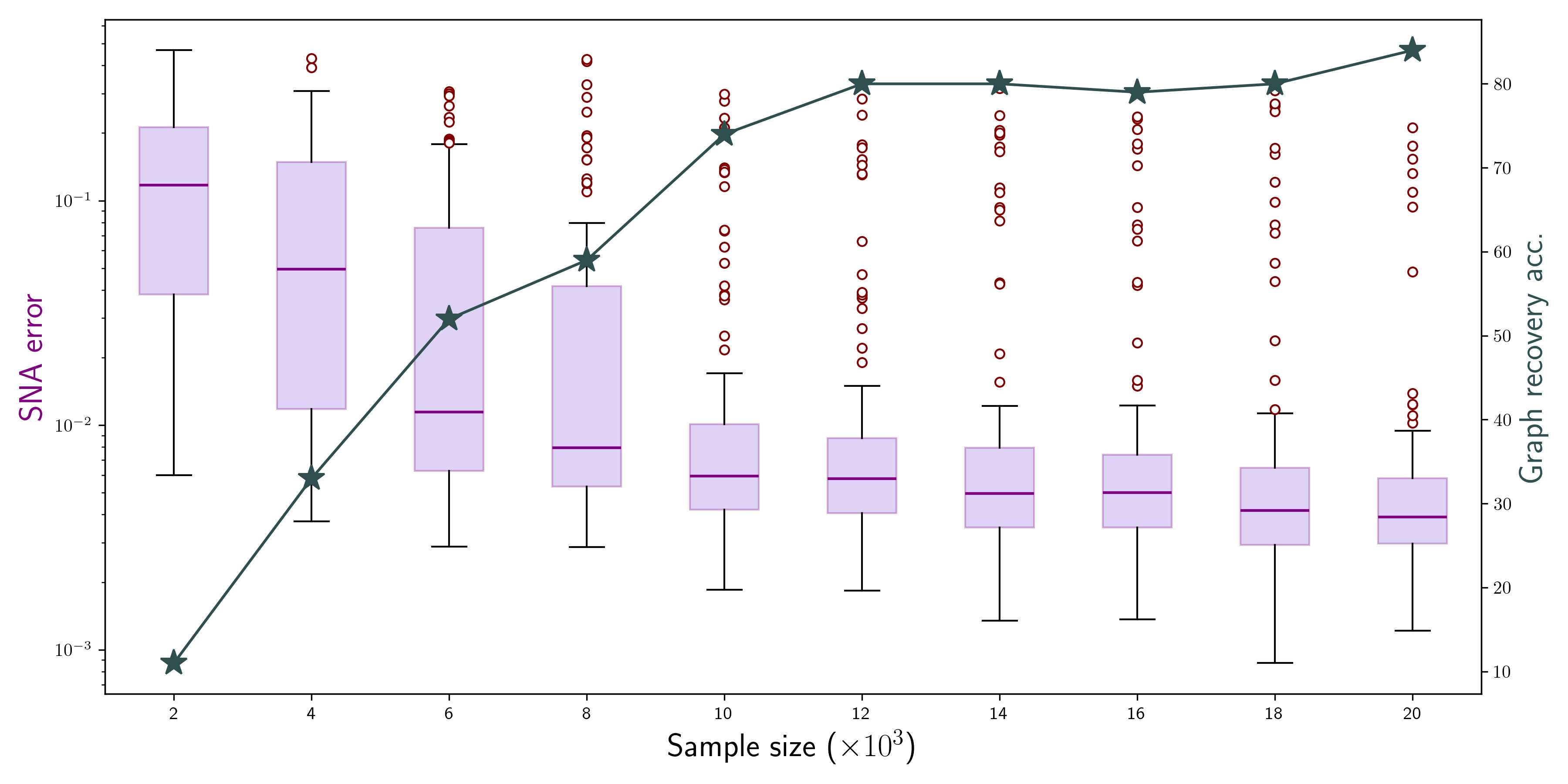}
        \caption[]%
        {{\small $d=5, K=20$}}    
        \label{d=5K=20}
    \end{subfigure}
    \qquad
    \begin{subfigure}[c]{0.2\textwidth}
    \setcounter{subfigure}{4}
        \begin{tikzpicture}[node distance=0.55cm and 0.55cm, every node/.style={draw, circle, minimum size=.15cm}, >={Latex[width=0.25mm,length=0.25mm]}]
        % Nodes
            \node (1) {$1$};
            \node (2) [below left=of 1] {$2$};
            \node (3) [below right=of 1] {$3$};
            \node (4) [below left=of 3] {$4$};
            \node (5) [below right=of 3] {$5$};
        
            % Edges
            \draw[->] (1) -- (2);
            \draw[->] (1) -- (3);
            \draw[->] (2) -- (3);
            \draw[->] (3) -- (4);
            \draw[->] (3) -- (5);
            %\node (c) [draw, dashed, ellipse, fit=(j1) (j2) (j3)] {};
            %\node [draw=none,right=of c, xshift=-3cm, yshift=1.0cm] {$\mathrm{ch}_{\G}(j)\subseteq \mathrm{ch}_{\G}(i)$};
        \end{tikzpicture}
        \caption{An example causal graph in our experiment}
        \label{fig:example-graph-in-experiment}
    \end{subfigure}
    \vskip\baselineskip\vspace{-0.15in}
    \begin{subfigure}[c]{0.35\textwidth} 
    \setcounter{subfigure}{2}
        \centering 
        \includegraphics[width=\textwidth]{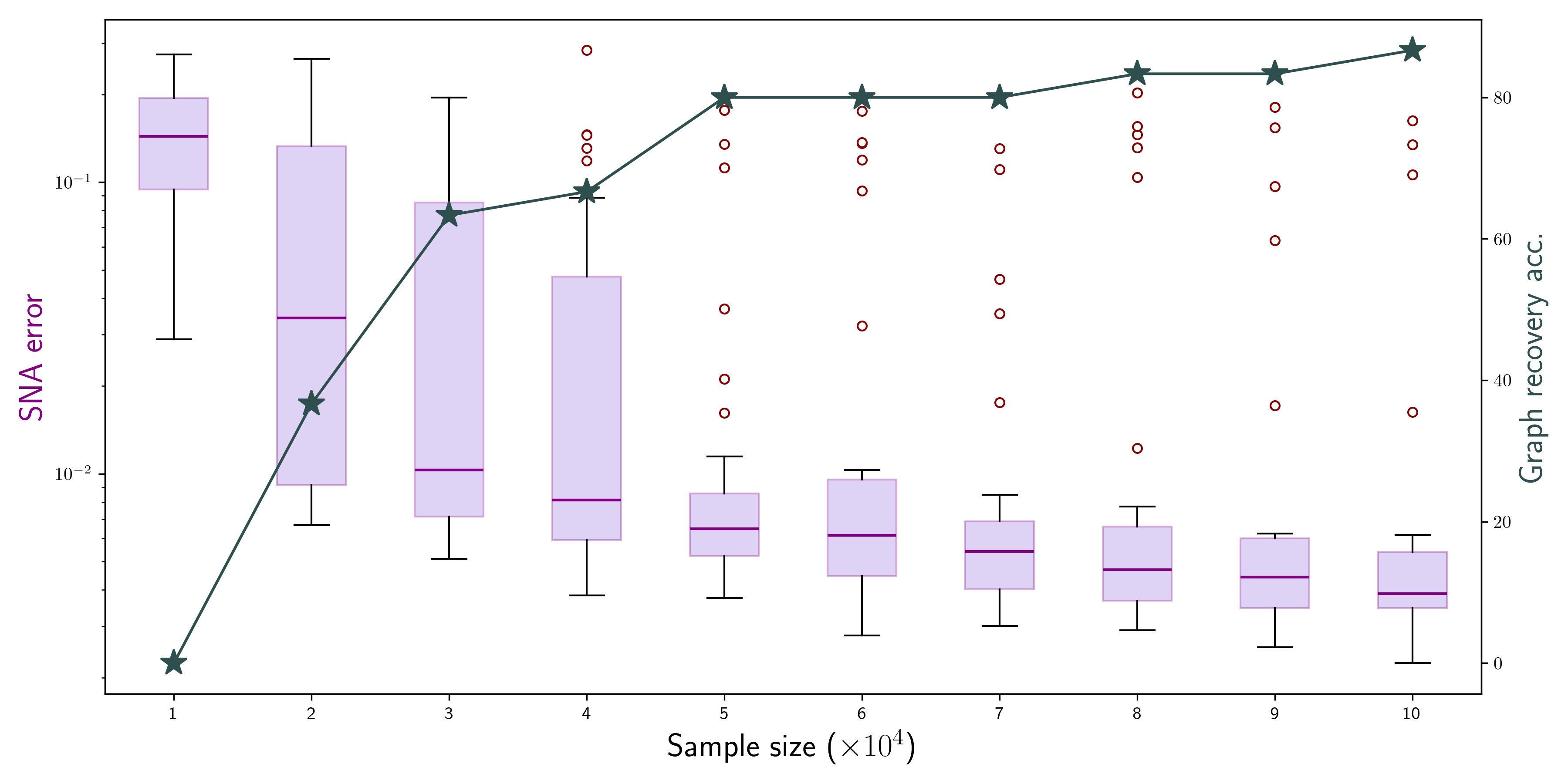}
        \caption[]%
        {{\small $d=K=8$}}    
        \label{d=K=8}
    \end{subfigure}
    \begin{subfigure}[c]{0.35\textwidth}   
    \setcounter{subfigure}{3}
        \centering 
        \includegraphics[width=\textwidth]{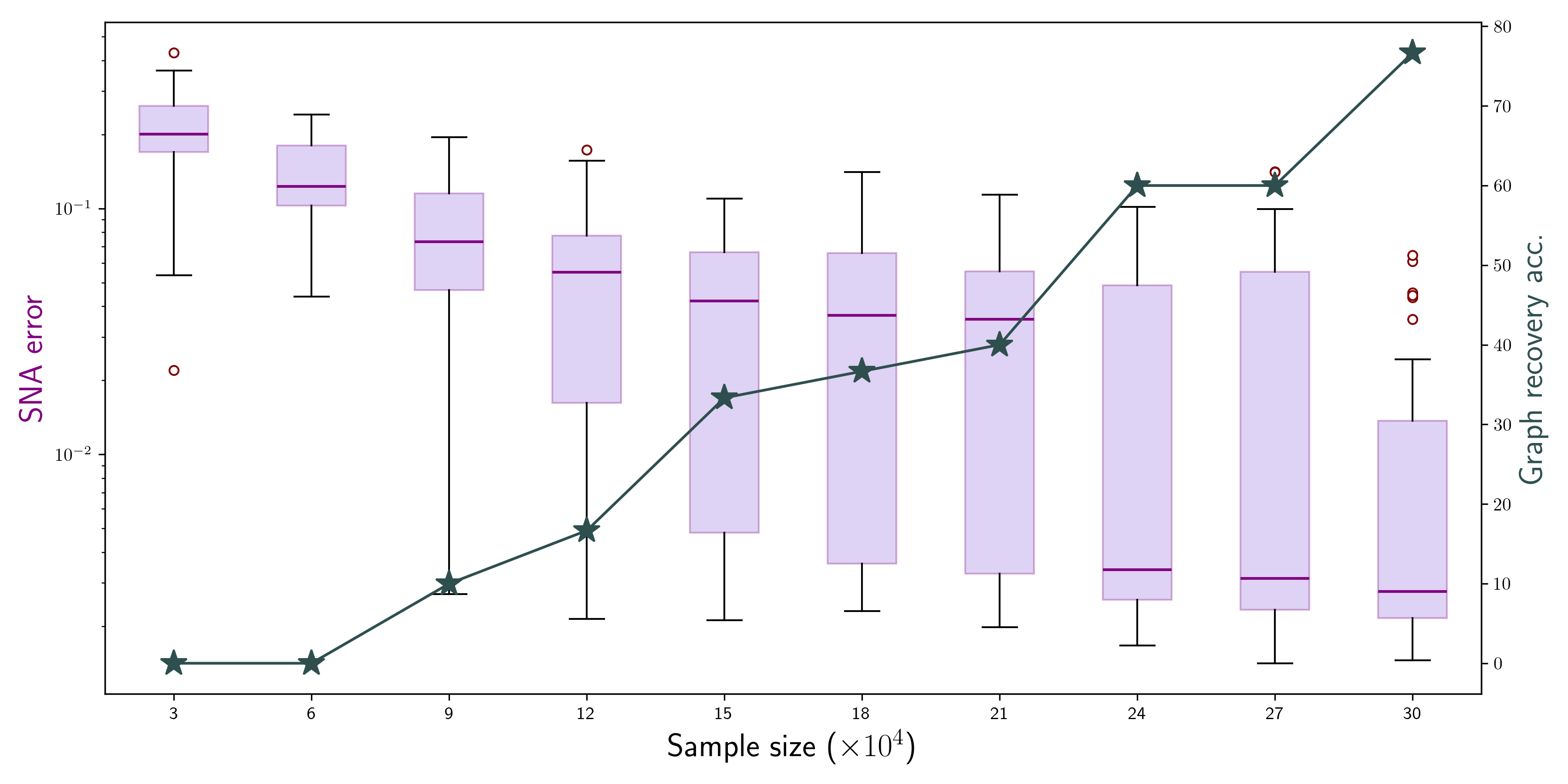}
        \caption[]%
        {{\small $d=K=10$}}    
        \label{d=K=10}
    \end{subfigure}
    \qquad
    \begin{subfigure}[c]{0.2\textwidth}
    \setcounter{subfigure}{5}
        \centering
        \scalebox{0.7}{
        \begin{tabular}{@{}cccc@{}}
        \toprule
        $i$        & $\dom_{\G}(i)$       & SNA error       & True error      \\ \midrule
        \texttt{1} & $\mathtt{\emptyset}$ & \texttt{2.8e-3} & \texttt{2.8e-3} \\
        \texttt{2} & \texttt{\{1\}}       & \texttt{4.0e-3} & \texttt{9.6e-2} \\
        \texttt{3} & $\mathtt{\emptyset}$  & \texttt{3.4e-3} & \texttt{3.4e-3} \\
        \texttt{4} & \texttt{\{3\}}       & \texttt{1.1e-2} & \texttt{0.99}   \\
        \texttt{5} & \texttt{\{3\}}       & \texttt{1.4e-2} & \texttt{0.41}   \\ \bottomrule
        \end{tabular}}
        \caption{Result for identifying \Cref{fig:example-graph-in-experiment} by running \texttt{LiNGCReL}}
        \label{tab:example-result}
    \end{subfigure}
    \caption[]%
    {Left: plots of SNA Error and graph recovery accuracy achieved by \texttt{LiNGCReL} as functions of sample size (per environment) for different choices of graph size $d$ and number of environments $K$. Right: an example of causal graph generated in our experiments, and the estimation error of \texttt{LiNGCReL} for each node.}
    \label{boxplots}
\end{figure*}

\textbf{Metrics of estimation error.} Since causal representation learning seeks to learn both the causal graphs and the latent variables, for each output of our algorithm we first check if it exactly recovers the ground-truth causal graph. Then, recall that the latent variables and the observations are related by $\vz = \mH\vx$, given any output unmixing matrix $\hat{\mH}$ from \Cref{alg:learn-graph}, we define the relative estimation error $\Delta_i$ for $\vz_i$ as the solution of the following optimization problem:
\begin{equation}
    \label{eq:def-estimation-error}
    \begin{aligned}
        & \min \normx{\bm{\Delta}}_{\infty}\quad s.t. \Delta_i=\frac{\normx{\mathrm{proj}_{\mathrm{span}\left\langle\vh_j:j\in\overline{\dom}_{\G}(i)\right\rangle}(\hat{\hat{\vh}}_i)}_2}{\normx{\hat{\hat{\vh}}_i}_2},\\
        &\quad \hat{\hat{\mH}} = \mP\hat{\mH} \text{ for some signed permutation matrix } \mP.
    \end{aligned}
\end{equation}
where signed permutation is allowed here since the noise distribution in our experiments is symmetric and the order of latent variables $\vz_i, i=1,2,\cdots,d$ does not matter. We refer to the errors $\Delta_i$ defined in \Cref{eq:def-estimation-error} as the \textit{SNA error}. The SNA error measures how much of the row $\hat{\hat{\vh}}_i$ that we learn is contained in the span of the ground-truth rows $\vh_j, j\in\overline{\dom_{\G}}(i)$. Indeed, recall that given any observation $\vx$, the ground-truth latent variable is $\vz = \mH\vx$ while our algorithm outputs $\hat{\vv}_i = \hat{\hat{\vh}}_i^\top\vx$, so the SNA error essentially captures whether the recovered latent variable is close to some linear mixture of latent variables in the effect-dominating set of $i$. When the SNA error is zero for some node $i$, we know that the recovered latent variable at node $i$ is exactly a linear mixture of the ground-truth latent variables in $\overline{\dom_{\G}}(i)$, according to \Cref{lem:mixture}.

We also define the \textit{true error} for estimating each latent variable. Formally, let $\hat{\hat{\mH}}$ be the unmixing matrix that corresponds to the solution of \Cref{eq:def-estimation-error}, then we define the true estimation error $\tilde{\Delta}_i$ of $\vz_i$ as
\begin{equation}
    \label{eq:true-error}
    \tilde{\Delta}_i = \normx{\left(\mI-\vh_i\vh_i^\top\right)\hat{\hat{\vh}}_i}_2.
\end{equation}

\textbf{Results.} We randomly sample $100$ causal models with size $d=5$, $30$ causal models with size $d=8$ ad $30$ causal models of size $d=10$. In light of \Cref{thm:linear-main-thm}, for each $d\in\{5,8,10\}$, we sample data from $K=d$ randomly chosen environments; for $d=5$ we also consider $K=20$ to study how different choices of $K$ can affect the result. We run \texttt{LiNGCReL} for each model with different sample sizes, compute the SNA error and true error of the obtained solution from \Cref{eq:def-estimation-error,eq:true-error} respectively for each latent variable, and check whether the ground-truth causal graph is exactly recovered.

\Cref{boxplots} shows how the average SNA error (over all latent variables) and the accuracy of graph recovery changes when sample size grows. We can see \texttt{LiNGCReL} successfully recovers about $80\%$ of all models within each category, and the median of the average SNA error is smaller than $1\%$. Moreover, by comparing \Cref{d=K=5} with \Cref{d=5K=20}, one can observe that if we fix the total number of samples but choose a larger $K$ (\emph{i.e.}, fewer samples per environment), \texttt{LiNGCReL} can still achieve the same level of performance compared with the choice $K=d$. Intuitively, this is because $K \gg d$ vectors sampled from an $r (r\leq d)$ dimensional subspace are unlikely to approximately lie in an ($r-1$)-dimensional subspace, so that the calculation of line 6 of \Cref{alg:identify-parents} and line 8 of \Cref{alg:learn-graph} can be more accurate. We leave a better and quantitative understanding of the trade-off between $d$ and $K$ to future work.

\textbf{SNA error v.s. true error.} To understand the implication of our theory, we dive deeper by looking into the learning outcome of \texttt{LiNGCReL} on a specific model, of which the causal graph is shown in \Cref{fig:example-graph-in-experiment}.

In \Cref{tab:example-result}, we list the surrounding set of each node and the corresponding SNA error and true error. We can see that if $\dom_{\G}(i)=\emptyset$, the two errors equal and both are small, but if $\dom_{\G}(i)\neq\emptyset$, the true error is much larger than the SNA error. This indicates that \texttt{LiNGCReL} indeed learns the ground-truth model up to $\sim_{\dom}$, as \Cref{thm:linear-main-thm} predicts.

\section{Identification limit of general causal models with soft interventions}

While \Cref{thm:linear-main-thm} guarantees identifiability with general environments, it only applies to linear causal models. In this section, we show that if we have access to single-node soft interventions, then we can identify general non-parametric causal models up to $\sim_{\dom}$. 
To obtain our identifiability result, we also require that the environments are non-degenerate in the following sense:

\begin{definition}[Non-degeneracy set of interventions]
\label{non-degeneracy-distribution}
    Let $\hat{p}_k\left(\vz_i\mid\vz_{\mathrm{pa}_{\G}(i)}\right), k\in[K_i]$ be conditional probability densities at node $i$, then $\left\{\hat{p}_k\right\}_{k=1}^{K_i}$ is said to be non-degenerate on node $i$ at point $\hat{\vz}\in\R^d$ if all these conditional densities are well-defined and positive at $\hat{\vz}$, and the matrix
    \begin{equation}
        \notag
    \left[\frac{\partial\left(\hat{p}_1/\hat{p}_k\right)}{\partial \vz_j}\right]_{2\leq k\leq K_i, j\in\bar{\mathrm{pa}}_{\G}(i)}\Biggm\vert_{\vz=\hat{\vz}} \in \R^{(K_i-1)\times \left(\absx{\pa_{\G}(i)}+1\right)}
    \end{equation}
    has full row rank. Moreover, we say that  $\left\{\hat{p}_k\right\}_{k=1}^{K_i}$ is non-degenerate in a point set $\mO$ if for all $\hat{\vz}\in\mO$, it is non-degenrate at $\hat{\vz}$. 
\end{definition}

The following lemma shows how \Cref{non-degeneracy-distribution} is related to \Cref{asmp:independent-row} in the linear setting:

\begin{lemma}
\label{lemma:assumptions-relation}
    Suppose that $\hat{p}_k(\vz)=\prod_{i=1}^d \hat{p}_k\left(\vz_i\mid\vz_{\pa_{\G}(i)}\right), k\in[K]$ be probability distributions of latent variables $\vz$ generated from the linear causal models \Cref{latent-original}, such that for $\forall i\in[d]$, $\hat{p}_k\left(\vz_i\mid\vz_{\pa_{\G}(i)}\right), k\in[K]$ are non-degenerate on node $i$ in the sense of \Cref{non-degeneracy-distribution}. Then the corresponding matrices $\mB_k, k\in[K]$ satisfy \Cref{asmp:independent-row}.
\end{lemma}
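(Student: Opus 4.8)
The plan is to make the conditional densities explicit in terms of the rows of the $\mB_k$, differentiate, and observe that the derivative matrix in \Cref{non-degeneracy-distribution} has its row space trapped inside the span of those rows; full column rank of that matrix then forces the span to be everything, which is exactly \Cref{asmp:independent-row}. So the desired implication is in fact the ``easy'' direction, and essentially all of the work is a linear-algebraic containment argument.

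First I would recall that in \Cref{latent-original} the noise $\epsilon$ is shared across environments with independent components (\Cref{asmp:noise-not-rotation-invariant}), so each $\epsilon_i$ has a fixed density, say $q_i$. Since $\mB_k=\bm{\Omega}_k^{-\frac12}(\mI-\mA_k)$ and $\epsilon=\mB_k\vz$, the $i$-th coordinate reads $\epsilon_i=(\mB_k\vz)_i$, and because $(\mA_k)_{ij}\neq 0$ only for $j\in\pa_{\G}(i)$ the row $(\mB_k)_i$ is supported on $\hpa_{\G}(i)$. Writing $u_k:=(\mB_k\vz)_i$, the structural equation \Cref{eq:structural-equatin-zi} gives the clean form
\[
\hat{p}_k\bigl(\vz_i\mid\vz_{\pa_{\G}(i)}\bigr)=q_i\bigl((\mB_k\vz)_i\bigr).
\]
Next I would differentiate the ratio. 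Setting $\ell_i:=\log q_i$ and using $\partial u_k/\partial\vz_j=(\mB_k)_{ij}$, a short computation yields, for $j\in\hpa_{\G}(i)$,
\[
\frac{\partial(\hat{p}_1/\hat{p}_k)}{\partial\vz_j}=\frac{\hat{p}_1}{\hat{p}_k}\Bigl[\ell_i'(u_1)(\mB_1)_{ij}-\ell_i'(u_k)(\mB_k)_{ij}\Bigr].
\]

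Now comes the key step. The positive scalar $\hat{p}_1/\hat{p}_k$ (positive because \Cref{non-degeneracy-distribution} requires the densities to be positive at the evaluation point) multiplies the entire $(k-1)$-th row, so it does not change the rank; hence the matrix of \Cref{non-degeneracy-distribution} has the same rank as the matrix whose $(k-1)$-th row, restricted to the coordinate block $\hpa_{\G}(i)$, equals $\ell_i'(u_1)(\mB_1)_i-\ell_i'(u_k)(\mB_k)_i$. Each such row is a linear combination of the restricted rows $\{(\mB_k)_i:k\in[K]\}$, so the row space is contained in $\spanl{(\mB_k)_i:k\in[K]}\subseteq\R^{\absx{\pa_{\G}(i)}+1}$. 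The full-column-rank hypothesis says this row space has dimension $\absx{\pa_{\G}(i)}+1$, i.e.\ it is all of $\R^{\absx{\pa_{\G}(i)}+1}$; therefore $\dim\spanl{(\mB_k)_i:k\in[K]}=\absx{\pa_{\G}(i)}+1$. Since $(\mB_k)_i$ is supported on $\hpa_{\G}(i)$, the span of the restricted vectors and the span of the full $d$-dimensional rows have the same dimension, so this is precisely \Cref{asmp:independent-row} at node $i$; repeating over all $i\in[d]$ finishes the proof.

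The main obstacle is really just this containment: recognizing that the derivative matrix, after stripping the positive density ratios and carrying the scalar log-derivative multipliers, can only produce vectors in $\spanl{(\mB_k)_i}$, so that full column rank \emph{upper-bounds} the attainable span and thereby forces the node-level non-degeneracy. I would also be careful to note that the argument uses the non-degeneracy at a single point $\hat{\vz}$ only, which suffices because $\spanl{(\mB_k)_i:k\in[K]}$ is a point-independent quantity; and that differentiability of $q_i$ at the relevant arguments, needed to write the derivative, is implicit in the statement of \Cref{non-degeneracy-distribution}.
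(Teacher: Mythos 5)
Your proposal is correct and follows essentially the same route as the paper: write $\hat{p}_k(\vz_i\mid\vz_{\pa_{\G}(i)})$ as the noise density composed with $(\mB_k\vz)_i$, differentiate the ratio to see that each row of the matrix in \Cref{non-degeneracy-distribution} is a positive multiple of $\ell_i'(u_1)(\mB_1)_i-\ell_i'(u_k)(\mB_k)_i$ and hence lies in $\spanl{(\mB_k)_i:k\in[K]}$, and conclude from full column rank that this span is all of $\R^{\absx{\pa_{\G}(i)}+1}$. The only cosmetic difference is that you drop the $\vz$-independent normalization factor $\omega_{k,i,i}^{-\frac{1}{2}}$ in the conditional density, which is immaterial to the rank argument.
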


Now we are ready to state our main result in this section:

\begin{theorem}
\label{thm:single-node-soft-nonparam}
    Suppose that we have access to observations generated from multiple environments $\{P_{\mX}^E\}_{E\in\mathfrak{E}}$. Let $\left(\hat{\vh},\hat{\G}\right)$ be any candidate solution with data generated according to \Cref{asmp:data-generating-process} with latent variables $\vv = \hat{\vh}(\vx)$ 
    and joint distribution $q_E$ with factors $q_{i}^E$. Assuming that
    \begin{enumerate}[label={(\roman*)}]
        \item the joint densities $\left\{p_E(\vz)\right\}_{E\in\mathfrak{E}}$ are continuous differentiable on $\R^d$ with common support $\mO_{\vz}$, and $\left\{q_E(\vv)\right\}_{E\in\mathfrak{E}}$ 
        are continuous differentiable on $\R^d$ with common support $\mO_{\vv}$; 
        \item we have access to multiple single-node soft interventions on each node with unknown targets: there exists a partition $\mathfrak{E} = \cup_{i=1}^d\mathfrak{E}_i$ such that $\mathcal{I}_{\vz}^{\mathfrak{E}_i} = \{\pi(i)\}, \mathcal{I}_{\vv}^{\mathfrak{E}_i} = \{\pi'(i) \},\forall i\in[d]$ for some unknown permutations $\pi$ and $\pi'$ on $[d]$;
        \item the intervention distributions on each node are non-degenerate in the sense of \Cref{non-degeneracy-distribution}: 
        there exists $\mN_{\vz}\subseteq\mO_{\vz}$ and $\mN_{\vv}\subseteq\mO_{\vv}$ satisfying $\mN_{\vz}^{\mathrm{o}} = \mN_{\vv}^{\mathrm{o}} = \emptyset$
        where $S^{\mathrm{o}}$ denotes the interior of a set $S$, such that for all $i\in[d]$, $\left\{p_{i}^E(\cdot): E\in\mathfrak{E}_{\pi^{-1}(i)}\right\}$ (resp. $\left\{q_{i}^E(\cdot): E\in\mathfrak{E}_{\pi'^{-1}(i)}\right\}$) is non-degenerate on node $i$ in $\mO_{\vz}\setminus\mN_{\vz}$ (resp. $\mO_{\vv}\setminus\mN_{\vv}$).
    \end{enumerate}
    Then we must have $(\vh,\G)\sim_{\dom}(\hat{\vh},\hat{\G})$.
\end{theorem}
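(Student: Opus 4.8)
The plan is to study the diffeomorphism $\phi := \hat{\vh}\circ\vg$ (where $\vg=\vh^{-1}$ is the true mixing function), which is well defined by \Cref{asmp:data-generating-process} and maps the true latents to the candidate latents via $\vv=\phi(\vz)$. Since the two models induce the same observation law in every environment, the change-of-variables formula gives, for each $E\in\mathfrak{E}$,
\[
\log q_E(\phi(\vz)) = \log p_E(\vz) - \log\absx{\det D\phi(\vz)}.
\]
The central idea is to difference this identity across two environments $E,E'$ lying in the same intervention group $\mathfrak{E}_i$. Because $\phi$ is shared across environments, the Jacobian term cancels; and because $p_E$ factorizes over $\G$ and $q_E$ over $\hat{\G}$ with a single-node soft intervention on $\pi(i)$ (resp. $\pi'(i)$) inside $\mathfrak{E}_i$ in the sense of \Cref{def:intervention}, all factors except the intervened one cancel as well. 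This yields an identity $g^{E,E'}\!\left(\vz_{\hpa_{\G}(\pi(i))}\right) = \hat{g}^{E,E'}\!\left(\phi(\vz)_{\hpa_{\hat{\G}}(\pi'(i))}\right)$: a function of $\vz$ restricted to $\hpa_{\G}(\pi(i))$ equals a function of $\vv$ restricted to $\hpa_{\hat{\G}}(\pi'(i))$.

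Next I would differentiate in $\vz$ and apply the chain rule: for every coordinate $j\notin\hpa_{\G}(\pi(i))$ the left side vanishes, so $\sum_{l\in\hpa_{\hat{\G}}(\pi'(i))}(D\phi)_{lj}\,\partial_{v_l}\hat{g}^{E,E'}=0$. Letting $E,E'$ range over $\mathfrak{E}_i$ and invoking the non-degeneracy of \Cref{non-degeneracy-distribution} (which ensures the gradients $\partial_v\hat{g}^{E,E'}$ restricted to $\hpa_{\hat{\G}}(\pi'(i))$ span the whole coordinate subspace), I conclude $(D\phi)_{lj}=0$ for all $l\in\hpa_{\hat{\G}}(\pi'(i))$ and $j\notin\hpa_{\G}(\pi(i))$; continuity together with the empty-interior assumption on $\mN_{\vz},\mN_{\vv}$ upgrades this from the non-degeneracy set to the whole connected support. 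Writing $\tau=\pi'\circ\pi^{-1}$, this says $\phi_{\tau(a)}$ depends only on $\vz_{\hpa_{\G}(a)}$. Running the symmetric argument with $\phi^{-1}$ gives the dual sparsity of $D\phi^{-1}$. In a topological order of $\G$ the permuted Jacobian is triangular and invertible, so the diagonal $\partial_{z_a}\phi_{\tau(a)}$ is nonzero, and comparing the two support patterns yields the graph isomorphism $a\in\pa_{\G}(b)\Leftrightarrow\tau(a)\in\pa_{\hat{\G}}(\tau(b))$.

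The remaining and hardest step is to sharpen ``depends only on $\vz_{\hpa_{\G}(a)}$'' to ``depends only on $\vz_{\overline{\dom}_{\G}(a)}$'', which is precisely the effect-domination ambiguity and the rigorous form of \Cref{example:discount-revisit}. Here I would fix a parent $b\in\pa_{\G}(a)\setminus\dom_{\G}(a)$, so that by \Cref{def:effect-domination} there is a child $c\in\ch_{\G}(a)$ with $b\notin\hpa_{\G}(c)$. The sparsity already established for node $c$ gives $\partial_{z_b}\phi_{\tau(c)}=0$, while the graph isomorphism gives $\tau(a)\in\pa_{\hat{\G}}(\tau(c))$. Differencing the change-of-variables identity over the intervention group acting on $c$ and using that the candidate factor $q_{\tau(c)}^{E}$ genuinely depends on $v_{\tau(a)}$, I would argue that this dependence cannot be reproduced through $\vz_b$ on the ground-truth side (where $z_b$ enters $v_{\tau(c)}$ nowhere), and that as the intervention on $c$ varies this forces $\partial_{z_b}\phi_{\tau(a)}=0$. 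This is the main obstacle: unlike the linear setting of \Cref{thm:linear-main-thm}, where the argument reduces to dimension counting on row spans, the nonparametric version must extract enough independent constraints from the family $\{q_{\tau(c)}^{E}\}$ via the non-degeneracy matrix and rule out functional aliasing of the effect of $z_a$ through $z_b$. Once this is in place, $\phi_{\tau(a)}$ depends only on $\vz_{\overline{\dom}_{\G}(a)}$ for every $a$; setting $\psi=\mP_{\tau}\circ\phi$ and reading off \Cref{def:dom-ambiguity} with permutation $\tau$ and diffeomorphism $\psi$ gives $(\vh,\G)\sim_{\dom}(\hat{\vh},\hat{\G})$, completing the proof.
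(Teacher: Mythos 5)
Your overall route is the same as the paper's: form $\phi=\hat{\vh}\circ\vh^{-1}$, difference the change-of-variables identity within an intervention group so that the Jacobian and all non-intervened factors cancel, differentiate in the $\vz$-coordinates outside $\hpa_{\G}(\pi(i))$, invoke the full-column-rank condition of \Cref{non-degeneracy-distribution} to kill the corresponding Jacobian entries, extend by continuity using the empty-interior assumption, and read off the graph isomorphism from the triangular structure. All of that matches the paper's argument (the paper organizes it as an induction along a topological order, and uses a submanifold-dimension argument to rule out $\pa_{\hat{\G}}(j)$ containing later nodes and to show $\partial\vv_j/\partial\vz_j\not\equiv 0$, which you gloss slightly but acceptably).

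The one real problem is your final paragraph, where you declare the passage from ``$\phi_{\tau(a)}$ depends only on $\vz_{\hpa_{\G}(a)}$'' to ``depends only on $\vz_{\overline{\dom}_{\G}(a)}$'' to be the main unresolved obstacle and sketch an extra argument about ruling out ``functional aliasing.'' No such argument is needed: the statement you already proved is strictly stronger than you are using it for. You established that for every group $\mathfrak{E}_i$, $(D\phi)_{lj}=0$ for \emph{all} $l\in\hpa_{\hat{\G}}(\pi'(i))$ and all $j\notin\hpa_{\G}(\pi(i))$ --- i.e.\ the intervention group targeting node $c=\pi(i)$ constrains every candidate latent indexed by $\hpa_{\hat{\G}}(\tau(c))$, not only $\vv_{\tau(c)}$ itself. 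Now fix $a$ and $b\in\pa_{\G}(a)\setminus\dom_{\G}(a)$. By \Cref{def:effect-domination} there is $c\in\ch_{\G}(a)$ with $c\notin\ch_{\G}(b)$, hence $b\notin\hpa_{\G}(c)$; the graph isomorphism gives $\tau(a)\in\pa_{\hat{\G}}(\tau(c))\subseteq\hpa_{\hat{\G}}(\tau(c))$; applying your matrix identity to the group targeting $c$ with $l=\tau(a)$ and $j=b$ yields $\partial\phi_{\tau(a)}/\partial\vz_b=0$ directly. This one-line deduction is exactly how the paper concludes, so your proof is complete once you notice that your own intermediate result already closes the step you flagged as open; the extra machinery you propose there should be deleted.
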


Previous works on the identifiability of non-parametric causal models typically require that all the joint distributions are supported on the whole space $\R^d$ \citep{von2023nonparametric,liang2023causal,varici2023general}. In contrast, we only assume that the densities have common and unknown support across all interventions.

\Cref{thm:single-node-soft-nonparam} can be regarded as a soft-intervention version of ~\citealp[Theorem 4.3]{von2023nonparametric}, which assumes access to hard interventions and only need two paired interventions per node. While they are able to show full identifiability, we show in the following that identifiability up to $\sim_{\dom}$ is the best we can hope for with soft interventions. %Moreover, they need to assume faithfulness of the causal model while we do not. %Recently, \citet{varici2023general} shows that full identifiability still holds even if the hard interventions are unpaired, generalizing the result of \citet{von2023nonparametric}. This leaves the  question of whether identification is possible with \textit{ungrouped and soft} interventions, which we leave for future work. 

\begin{theorem}[Counterpart to \Cref{thm:single-node-soft-nonparam}, informal version of \Cref{thm:non-param-ambiguity}]
\label{thm:informal-non-param-ambiguity}
    For any causal model $(\vh,\G)$ and any set of environments $\mathfrak{E}=\left\{E_k: k\in[K]\right\}$ such that all conditions in \Cref{thm:single-node-soft-nonparam} are satisfied, there must exists a candidate solution $(\hat{\vh},\G)$ and a hypothetical data generating process that satisfy the same set of conditions, but
    \begin{equation}
        \notag
        \frac{\partial \vv_i}{\partial \vz_j} \neq 0,\quad \forall j\in\overline{\dom}_{\G}(i).
    \end{equation}
    Finally, the ambiguity still exists if we additionally assume standard axioms such as causal minimality (\Cref{asmp:minimality}) and faithfulness (\Cref{asmp:faithfulness}) on the causal model. 
\end{theorem}
\label{sec:non-param}

%\section{Intrinsic ambiguity in causal representation learning}
%\label{sec:effect-domination-ambiguity}
%\label{sec:ambiguity}
%\input{Sections/ambiguity}

\section{Conclusions}
\label{sec:conclusion}

This paper studies the limit of learning identifiable causal representations using data from multiple environments. When hard interventions are not available, we provide theory and algorithm for identification up to SNA, and also show that SNA is an intrinsic ambiguity in our setting.

It is interesting to further investigate the setting where we do not assume that the causal model is linear. Moreover, it is important to understand the concrete form of available interventions in real-world applications. For instance, it is suggested that for single-cell genomics, the intervention is sometimes a "mixture" of hard and soft interventions, and sometimes can even reverse the direction of an edge \citep{tejada2023causal}. Modelling such more complicated interventions appears to be crucial to reveal the underlying causal mechanisms in real-world problems.

\section{Broader Impact}

This paper presents work whose goal is to advance the field of Machine Learning and in particular the sub-field Causal Representation Learning. There are many potential societal consequences of our work, especially as it pertains to building more reliable machine learning models, none which we feel must be specifically highlighted here.

\printbibliography

@inproceedings{locatello2020weakly,
  title={Weakly-supervised disentanglement without compromises},
  author={Locatello, Francesco and Poole, Ben and R{\"a}tsch, Gunnar and Sch{\"o}lkopf, Bernhard and Bachem, Olivier and Tschannen, Michael},
  booktitle={International Conference on Machine Learning},
  pages={6348--6359},
  year={2020},
  organization={PMLR}
}

@article{brehmer2022weakly,
  title={Weakly supervised causal representation learning},
  author={Brehmer, Johann and De Haan, Pim and Lippe, Phillip and Cohen, Taco S},
  journal={Advances in Neural Information Processing Systems},
  volume={35},
  pages={38319--38331},
  year={2022}
}

@inproceedings{khemakhem2020variational,
  title={Variational autoencoders and nonlinear ica: A unifying framework},
  author={Khemakhem, Ilyes and Kingma, Diederik and Monti, Ricardo and Hyvarinen, Aapo},
  booktitle={International Conference on Artificial Intelligence and Statistics},
  pages={2207--2217},
  year={2020},
  organization={PMLR}
}

@inproceedings{locatello2019challenging,
  title={Challenging common assumptions in the unsupervised learning of disentangled representations},
  author={Locatello, Francesco and Bauer, Stefan and Lucic, Mario and Raetsch, Gunnar and Gelly, Sylvain and Sch{\"o}lkopf, Bernhard and Bachem, Olivier},
  booktitle={international conference on machine learning},
  pages={4114--4124},
  year={2019},
  organization={PMLR}
}

@inproceedings{ahuja2023interventional,
  title={Interventional causal representation learning},
  author={Ahuja, Kartik and Mahajan, Divyat and Wang, Yixin and Bengio, Yoshua},
  booktitle={International Conference on Machine Learning},
  pages={372--407},
  year={2023},
  organization={PMLR}
}

@article{von2023nonparametric,
  title={Nonparametric Identifiability of Causal Representations from Unknown Interventions},
  author={von K{\"u}gelgen, Julius and Besserve, Michel and Liang, Wendong and Gresele, Luigi and Keki{\'c}, Armin and Bareinboim, Elias and Blei, David M and Sch{\"o}lkopf, Bernhard},
  journal={arXiv preprint arXiv:2306.00542},
  year={2023}
}

@inproceedings{lu2021invariant,
  title={Invariant causal representation learning for out-of-distribution generalization},
  author={Lu, Chaochao and Wu, Yuhuai and Hern{\'a}ndez-Lobato, Jos{\'e} Miguel and Sch{\"o}lkopf, Bernhard},
  booktitle={International Conference on Learning Representations},
  year={2021}
}

@article{von2021self,
  title={Self-supervised learning with data augmentations provably isolates content from style},
  author={Von K{\"u}gelgen, Julius and Sharma, Yash and Gresele, Luigi and Brendel, Wieland and Sch{\"o}lkopf, Bernhard and Besserve, Michel and Locatello, Francesco},
  journal={Advances in neural information processing systems},
  volume={34},
  pages={16451--16467},
  year={2021}
}

@article{silva2006learning,
  title={Learning the Structure of Linear Latent Variable Models.},
  author={Silva, Ricardo and Scheines, Richard and Glymour, Clark and Spirtes, Peter and Chickering, David Maxwell},
  journal={Journal of Machine Learning Research},
  volume={7},
  number={2},
  year={2006}
}

@article{seigal2022linear,
  title={Linear causal disentanglement via interventions},
  author={Seigal, Anna and Squires, Chandler and Uhler, Caroline},
  journal={arXiv preprint arXiv:2211.16467},
  year={2022}
}

@article{liang2023causal,
  title={Causal Component Analysis},
  author={Liang, Wendong and Keki{\'c}, Armin and von K{\"u}gelgen, Julius and Buchholz, Simon and Besserve, Michel and Gresele, Luigi and Sch{\"o}lkopf, Bernhard},
  journal={arXiv preprint arXiv:2305.17225},
  year={2023}
}

@book{spirtes2000causation,
  title={Causation, prediction, and search},
  author={Spirtes, Peter and Glymour, Clark N and Scheines, Richard},
  year={2000}
}

@article{zhang2023identifiability,
  title={Identifiability Guarantees for Causal Disentanglement from Soft Interventions},
  author={Zhang, Jiaqi and Squires, Chandler and Greenewald, Kristjan and Srivastava, Akash and Shanmugam, Karthikeyan and Uhler, Caroline},
  journal={arXiv preprint arXiv:2307.06250},
  year={2023}
}

@inproceedings{zhang2022towards,
  title={Towards principled disentanglement for domain generalization},
  author={Zhang, Hanlin and Zhang, Yi-Fan and Liu, Weiyang and Weller, Adrian and Sch{\"o}lkopf, Bernhard and Xing, Eric P},
  booktitle={Proceedings of the IEEE/CVF Conference on Computer Vision and Pattern Recognition},
  pages={8024--8034},
  year={2022}
}

@article{holyoak2011causal,
  title={Causal learning and inference as a rational process: The new synthesis},
  author={Holyoak, Keith J and Cheng, Patricia W},
  journal={Annual review of psychology},
  volume={62},
  pages={135--163},
  year={2011},
  publisher={Annual Reviews}
}

@incollection{dunbar2004causal,
  title={Causal thinking in science: How scientists and students interpret the unexpected},
  author={Dunbar, Kevin N and Fugelsang, Jonathan A},
  booktitle={Scientific and technological thinking},
  pages={57--79},
  year={2004},
  publisher={Psychology Press}
}

@incollection{shanks1988associative,
  title={Associative accounts of causality judgment},
  author={Shanks, David R and Dickinson, Anthony},
  booktitle={Psychology of learning and motivation},
  volume={21},
  pages={229--261},
  year={1988},
  publisher={Elsevier}
}

@article{locatello2020sober,
  title={A sober look at the unsupervised learning of disentangled representations and their evaluation},
  author={Locatello, Francesco and Bauer, Stefan and Lucic, Mario and R{\"a}tsch, Gunnar and Gelly, Sylvain and Sch{\"o}lkopf, Bernhard and Bachem, Olivier},
  journal={The Journal of Machine Learning Research},
  volume={21},
  number={1},
  pages={8629--8690},
  year={2020},
  publisher={JMLRORG}
}

@article{varici2023score,
  title={Score-based causal representation learning with interventions},
  author={Varici, Burak and Acarturk, Emre and Shanmugam, Karthikeyan and Kumar, Abhishek and Tajer, Ali},
  journal={arXiv preprint arXiv:2301.08230},
  year={2023}
}

@article{varici2023general,
  title={General Identifiability and Achievability for Causal Representation Learning},
  author={Var{\i}c{\i}, Burak and Acart{\"u}rk, Emre and Shanmugam, Karthikeyan and Tajer, Ali},
  journal={arXiv preprint arXiv:2310.15450},
  year={2023}
}

@article{tejada2023causal,
  title={Causal machine learning for single-cell genomics},
  author={Tejada-Lapuerta, Alejandro and Bertin, Paul and Bauer, Stefan and Aliee, Hananeh and Bengio, Yoshua and Theis, Fabian J},
  journal={arXiv preprint arXiv:2310.14935},
  year={2023}
}

@inproceedings{lopez2023learning,
  title={Learning causal representations of single cells via sparse mechanism shift modeling},
  author={Lopez, Romain and Tagasovska, Natasa and Ra, Stephen and Cho, Kyunghyun and Pritchard, Jonathan and Regev, Aviv},
  booktitle={Conference on Causal Learning and Reasoning},
  pages={662--691},
  year={2023},
  organization={PMLR}
}

@article{lippe2023biscuit,
  title={BISCUIT: Causal Representation Learning from Binary Interactions},
  author={Lippe, Phillip and Magliacane, Sara and L{\"o}we, Sindy and Asano, Yuki M and Cohen, Taco and Gavves, Efstratios},
  journal={arXiv preprint arXiv:2306.09643},
  year={2023}
}

@article{bengio2013representation,
  title={Representation learning: A review and new perspectives},
  author={Bengio, Yoshua and Courville, Aaron and Vincent, Pascal},
  journal={IEEE transactions on pattern analysis and machine intelligence},
  volume={35},
  number={8},
  pages={1798--1828},
  year={2013},
  publisher={IEEE}
}

@article{silver2016mastering,
  title={Mastering the game of Go with deep neural networks and tree search},
  author={Silver, David and Huang, Aja and Maddison, Chris J and Guez, Arthur and Sifre, Laurent and Van Den Driessche, George and Schrittwieser, Julian and Antonoglou, Ioannis and Panneershelvam, Veda and Lanctot, Marc and others},
  journal={nature},
  volume={529},
  number={7587},
  pages={484--489},
  year={2016},
  publisher={Nature Publishing Group}
}

@article{bubeck2023sparks,
  title={Sparks of artificial general intelligence: Early experiments with gpt-4},
  author={Bubeck, S{\'e}bastien and Chandrasekaran, Varun and Eldan, Ronen and Gehrke, Johannes and Horvitz, Eric and Kamar, Ece and Lee, Peter and Lee, Yin Tat and Li, Yuanzhi and Lundberg, Scott and others},
  journal={arXiv preprint arXiv:2303.12712},
  year={2023}
}

@article{ovadia2019can,
  title={Can you trust your model's uncertainty? evaluating predictive uncertainty under dataset shift},
  author={Ovadia, Yaniv and Fertig, Emily and Ren, Jie and Nado, Zachary and Sculley, David and Nowozin, Sebastian and Dillon, Joshua and Lakshminarayanan, Balaji and Snoek, Jasper},
  journal={Advances in neural information processing systems},
  volume={32},
  year={2019}
}

@article{akhtar2018threat,
  title={Threat of adversarial attacks on deep learning in computer vision: A survey},
  author={Akhtar, Naveed and Mian, Ajmal},
  journal={Ieee Access},
  volume={6},
  pages={14410--14430},
  year={2018},
  publisher={IEEE}
}

@article{wang2023adversarial,
  title={Adversarial Policies Beat Superhuman Go AIs},
  author={Wang, Tony Tong and Gleave, Adam and Tseng, Tom and Pelrine, Kellin and Belrose, Nora and Miller, Joseph and Dennis, Michael D and Duan, Yawen and Pogrebniak, Viktor and Levine, Sergey and others},
  year={2023}
}

@inproceedings{koh2021wilds,
  title={Wilds: A benchmark of in-the-wild distribution shifts},
  author={Koh, Pang Wei and Sagawa, Shiori and Marklund, Henrik and Xie, Sang Michael and Zhang, Marvin and Balsubramani, Akshay and Hu, Weihua and Yasunaga, Michihiro and Phillips, Richard Lanas and Gao, Irena and others},
  booktitle={International Conference on Machine Learning},
  pages={5637--5664},
  year={2021},
  organization={PMLR}
}

@article{scholkopf2021toward,
  title={Toward causal representation learning},
  author={Sch{\"o}lkopf, Bernhard and Locatello, Francesco and Bauer, Stefan and Ke, Nan Rosemary and Kalchbrenner, Nal and Goyal, Anirudh and Bengio, Yoshua},
  journal={Proceedings of the IEEE},
  volume={109},
  number={5},
  pages={612--634},
  year={2021},
  publisher={IEEE}
}

@article{geirhos2020shortcut,
  title={Shortcut learning in deep neural networks},
  author={Geirhos, Robert and Jacobsen, J{\"o}rn-Henrik and Michaelis, Claudio and Zemel, Richard and Brendel, Wieland and Bethge, Matthias and Wichmann, Felix A},
  journal={Nature Machine Intelligence},
  volume={2},
  number={11},
  pages={665--673},
  year={2020},
  publisher={Nature Publishing Group UK London}
}

@article{comon1994independent,
  title={Independent component analysis, a new concept?},
  author={Comon, Pierre},
  journal={Signal processing},
  volume={36},
  number={3},
  pages={287--314},
  year={1994},
  publisher={Elsevier}
}

@article{shimizu2006linear,
  title={A linear non-Gaussian acyclic model for causal discovery.},
  author={Shimizu, Shohei and Hoyer, Patrik O and Hyv{\"a}rinen, Aapo and Kerminen, Antti and Jordan, Michael},
  journal={Journal of Machine Learning Research},
  volume={7},
  number={10},
  year={2006}
}

@article{rudelson2009smallest,
  title={Smallest singular value of a random rectangular matrix},
  author={Rudelson, Mark and Vershynin, Roman},
  journal={Communications on Pure and Applied Mathematics: A Journal Issued by the Courant Institute of Mathematical Sciences},
  volume={62},
  number={12},
  pages={1707--1739},
  year={2009},
  publisher={Wiley Online Library}
}

@article{schwartz1954formula,
  title={The formula for change in variables in a multiple integral},
  author={Schwartz, J},
  journal={The American Mathematical Monthly},
  volume={61},
  number={2},
  pages={81--85},
  year={1954},
  publisher={Taylor \& Francis}
}

@article{eronen2020causal,
  title={Causal discovery and the problem of psychological interventions},
  author={Eronen, Markus I},
  journal={New Ideas in Psychology},
  volume={59},
  pages={100785},
  year={2020},
  publisher={Elsevier}
}

@article{eberhardt2014direct,
  title={Direct causes and the trouble with soft interventions},
  author={Eberhardt, Frederick},
  journal={Erkenntnis},
  volume={79},
  pages={755--777},
  year={2014},
  publisher={Springer}
}

@article{campbell2007interventionist,
  title={An interventionist approach to causation in psychology},
  author={Campbell, John},
  journal={Causal learning: Psychology, philosophy, and computation},
  pages={58--66},
  year={2007},
  publisher={Oxford University Press Oxford}
}

@article{buchholz2023learning,
  title={Learning Linear Causal Representations from Interventions under General Nonlinear Mixing},
  author={Buchholz, Simon and Rajendran, Goutham and Rosenfeld, Elan and Aragam, Bryon and Sch{\"o}lkopf, Bernhard and Ravikumar, Pradeep},
  journal={arXiv preprint arXiv:2306.02235},
  year={2023}
}

@article{tillman2014learning,
  title={Learning causal structure from multiple datasets with similar variable sets},
  author={Tillman, Robert E and Eberhardt, Frederick},
  journal={Behaviormetrika},
  volume={41},
  number={1},
  pages={41--64},
  year={2014},
  publisher={The Behaviormetric Society}
}

@article{hyttinen2013experiment,
  title={Experiment selection for causal discovery},
  author={Hyttinen, Antti and Eberhardt, Frederick and Hoyer, Patrik O},
  journal={Journal of Machine Learning Research},
  volume={14},
  pages={3041--3071},
  year={2013},
  publisher={Microtome Publishing}
}

@book{woodward2005making,
  title={Making things happen: A theory of causal explanation},
  author={Woodward, James},
  year={2005},
  publisher={Oxford university press}
}

@article{fisher1960design,
  title={The design of experiments.},
  author={Fisher, Ronald Aylmer and others},
  journal={The design of experiments.},
  number={7th Ed},
  year={1960},
  publisher={Oliver and Boyd. London and Edinburgh}
}

@misc{strevens2007review,
  title={Review of Woodward," Making Things Happen"},
  author={Strevens, Michael},
  year={2007},
  publisher={JSTOR}
}

@article{ahuja2023multi,
  title={Multi-Domain Causal Representation Learning via Weak Distributional Invariances},
  author={Ahuja, Kartik and Mansouri, Amin and Wang, Yixin},
  journal={arXiv preprint arXiv:2310.02854},
  year={2023}
}

@inproceedings{roeder2021linear,
  title={On linear identifiability of learned representations},
  author={Roeder, Geoffrey and Metz, Luke and Kingma, Durk},
  booktitle={International Conference on Machine Learning},
  pages={9030--9039},
  year={2021},
  organization={PMLR}
}

@article{prystawski2023think,
  title={Why think step-by-step? Reasoning emerges from the locality of experience},
  author={Prystawski, Ben and Goodman, Noah D},
  journal={arXiv preprint arXiv:2304.03843},
  year={2023}
}

@article{kiciman2023causal,
  title={Causal reasoning and large language models: Opening a new frontier for causality},
  author={K{\i}c{\i}man, Emre and Ness, Robert and Sharma, Amit and Tan, Chenhao},
  journal={arXiv preprint arXiv:2305.00050},
  year={2023}
}

@book{pearl2009causality,
  title={Causality},
  author={Pearl, Judea},
  year={2009},
  publisher={Cambridge university press}
}

@inproceedings{cooper1999causal,
  title={Causal discovery from a mixture of experimental and observational data},
  author={Cooper, Gregory F and Yoo, Changwon},
  booktitle={Proceedings of the Fifteenth conference on Uncertainty in artificial intelligence},
  pages={116--125},
  year={1999}
}

@inproceedings{tong2001active,
  title={Active learning for structure in Bayesian networks},
  author={Tong, Simon and Koller, Daphne},
  booktitle={International joint conference on artificial intelligence},
  volume={17},
  number={1},
  pages={863--869},
  year={2001},
  organization={Citeseer}
}

@article{hauser2014two,
  title={Two optimal strategies for active learning of causal models from interventional data},
  author={Hauser, Alain and B{\"u}hlmann, Peter},
  journal={International Journal of Approximate Reasoning},
  volume={55},
  number={4},
  pages={926--939},
  year={2014},
  publisher={Elsevier}
}

@inproceedings{eberhardt2008almost,
  title={Almost optimal intervention sets for causal discovery},
  author={Eberhardt, Frederick},
  booktitle={Proceedings of the Twenty-Fourth Conference on Uncertainty in Artificial Intelligence},
  pages={161--168},
  year={2008}
}

\begin{appendices}

\startcontents[sections]
\printcontents[sections]{l}{1}{\setcounter{tocdepth}{2}}

\section{Related works}
\label{appsec:related-work}
\textbf{The interventionist approach to causation} For the problem of causal graph discovery, it is well-known that the underlying causal structure is non-identifiable given only \enquote{passively observed} (equivalently, \emph{i.i.d.}) data alone. As a result, randomized controlled experiments \citep{fisher1960design} is often used to infer causality. These experiments typically take the form of interventions \citep{spirtes2000causation,pearl2009causality}, \emph{i.e.}, manipulations on the \enquote{natural state} of the system of interest. Early works \citep{woodward2005making,strevens2007review} define the \enquote{hard}  (also called \enquote{surgical} or \enquote{arrow-breaking}) interventions in which the value of the intervened variable is entirely determined by the experimenter, thereby removing the dependence of this variable on its direct causes. This type of intervention is arguably the most natural one to consider, and following this definition, a line of works explore sufficient conditions for designing experiments that guarantee identifiability of the causal model in various settings \citep{cooper1999causal,tong2001active,eberhardt2008almost,hyttinen2013experiment,hauser2014two}.

\textbf{Intervention \emph{v.s.} passive observation} While extensive works demonstrate the success of the interventionist approach, it faces several key challenges that significantly limit its applicability. First, \citet{eberhardt2014direct} finds that in the presence of unobserved variables, certain causal structures are indistinguishable if we only perform hard interventions. This issue can be resolved by performing soft interventions \emph{i.e.}, interventions that do not remove the dependency on direct causes but only changes the conditional distribution. Second, as pointed out in \citep{tillman2014learning}, interventions --- whether hard or soft --- are often expensive or even infeasible to perform in practice. For example, a psychological intervention is likely to affect multiple psychological variables simultaneously \citet{eronen2020causal}. As a result, \citep{tillman2014learning} returns to the \enquote{passive observation} setting but with multiple datasets with overlapping latent variables.

\textbf{Interventional causal representation learning} Motivated by the interventionist literature in causal graph discovery, a recent line of works \citep{ahuja2023interventional,seigal2022linear,varici2023score,von2023nonparametric,buchholz2023learning,zhang2023identifiability,varici2023general} consider performing interventions to resolve the non-identifiability issue in causal representation learning \citep{locatello2019challenging}. Roughly speaking, these result indicate that identification (possibly with some ambiguities) is possible if one can perform intervention on every latent variable. However, it is unclear how to perform such interventions in practice, given that the underlying latent variables are unknown. \citet{khemakhem2020variational,lu2021invariant,roeder2021linear} do not require single-node interventions to achieve identifiability, but assumes that the joint distribution of latent variables in each environment lie in a certain exponential family. This assumption can be understood as a prior on the latent variables, but it is unclear when or why it is reasonable to make in reality. Recently, \citet{ahuja2023multi} considers learning causal representations from multiple domains that relate to each other via an invariance constraint on the subset $\gS$ of \emph{stable} latent variables, and they prove identification up to affine mixtures within $\gS$.

\textbf{Causal reasoning capacity of LLMs} In view of the tremendous success of large language models (LLMs), several works aim to understand the causal reasoning ability of LLMs. \citet{kiciman2023causal} conducts an an extensive experimental study and finds that LLMs outperforms all existing causal discovery methods on multiple datasets, but also have simple and mysterious failure modes. \citet{prystawski2023think} provide theoretical evidence that the chain-of-thought (CoT) prompt allows LLMs to reduce their uncertainty when answering questions related to causal variables that are far apart.

\section{Experiment details for \Cref{sec:experiment-results}}
\label{appsec:experiment-details}

\subsection{Details for step 1 in \Cref{sec:algorithm}}
\label{appsubsec:align-matrices_Mk}

Since $\epsilon = \mB_k\vz = \mB_k\mH\vx$ in the $k$-th environment, so we can use any identification algorithm for linear ICA to recover the matrix $\mM_k$. Note that while standard linear ICA algorithms only apply to the case where $n=d$, for $n>d$ we can arbitrarily choose $d$ principal components of $\vx$ to reduce it to the $n=d$ case. This is without loss of generality, since when $n>d$ there is redundant information in $\vx$.

After recovering $\mM_k$ for each $k$ by running linear ICA, we still do not know whether each $\mM_k\vx$ corresponds to the same permutation of the ground-truth noise variables $\epsilon$. To resolve this issue, we choose test function $\Psi$ mapping any distribution on $\R$ to a deterministic real value, which we expect to take different values for different $\epsilon_i$'s. We choose $\Psi(\mathbb{P})=\mathbb{P}\left[\absx{X}\leq 1\right]$ in our experiments. For all $k\geq 2$, we calculate the $\Psi$ value of each component of the $d$-dimensional empirical distribution $\hat{\mathbb{P}}_k=\frac{1}{N}\sum_{i=1}^N\mathbbm{1}_{\mM_k\vx_{i}^{(k)}}$, and choose a permutation $\pi_k$ to rearrange them in increasing order. Then, we rearrange the columns of $\mM_k$ using the same permutation $\pi_k$. This procedure would asymptotically produce correct alignments as long as $\Psi(\epsilon_i),i\in[d]$ are different, and we find that it empirically works well. 

Alternatively, this alignment step can be done as follows: for each pair of environments $(E_1, E_t)$, and for each pair of nodes $(i, j)$, we calculate the distribution distance between $\epsilon_i$ in environment $E_1$ and $\epsilon_j$ in environment $E_t$, based on some notion of distribution distance (\emph{e.g.} kernel maximum mean discrepancy). Then we find the min-cost perfect matching, where the cost of an edge is the distribution distance.

\subsection{Details for the implementation of \texttt{LiNGCReL} in the finite-sample regime}
\label{appsubsec:finite-sample-implementation}

Although \texttt{LiNGCReL} provably works in the population regime, it faces several challenges when there is only a finite number of samples:
\begin{itemize}
    \item First, since rank is not a continuous function, it is sensitive to finite-sample estimation errors. In our implementation of \Cref{alg:learn-graph}, in each iteration we instead choose $i\notin S$ that has the largest ratio between the first and second singular values of $\left[\vq_1,\vq_2,\cdots,\vq_K\right]$. And in line 6 of \Cref{alg:identify-parents}, we introduce a hyper-parameter $\mathtt{tl}$ such that the matrix $\left[\vq_1,\vq_2,\cdots,\vq_K\right]$ is considered to have rank $r_{m'-1}$ if its $r_{m'}$-th singular value is smaller than $\mathtt{tl}$. Since the smallest singular value of a random matrix $\mA\in\R^{K\times m} (K\geq m)$ is at the order of $\sqrt{K}-\sqrt{m-1}$ with high probability \citep{rudelson2009smallest}, when $K=d$ one shall choose $\mathrm{tl}\sim \sqrt{d}-\sqrt{d-1} = \O\left(\frac{1}{\sqrt{d}}\right)$. On the other hand, for larger $K$ we can correspondingly choose a larger \texttt{tl}. Note that a small \texttt{tl} potentially has the risk of being dominated the noise in the estimation, which means that we need more samples per environment to reduce the noise. In contrast, for larger \texttt{tl} the estimation is more robust to noise and we can use fewer samples.
    \item Second, finite-sample estimation errors of $\mM_k$ make it harder to obtain $\vh_i$ in \Cref{line:intersect-subspaces} of \Cref{alg:learn-graph}. We implement this step in the following way: first let $\mQ_j$ be the orthogonal projection matrix onto $E_j^{\perp}$ \emph{i.e.}, $\mQ_j^\top\vx=\mathrm{proj}_{E_j^\perp}(\vx)$, then choose $\vh_i$ to be the singular vector of $\sum_{j:(j,i)\in\mathcal{E} \text{ or } j=i}\mQ_j^\top\mQ_j$ that corresponds to the smallest singular value (including zero). Indeed, in the noiseless case we would have $\left(\sum_{j:(j,i)\in\mathcal{E} \text{ or } j=i}\mQ_j^\top\mQ_j\right) \vh_i = 0$ if and only if $\vh_i\in\left(\cap_{j:(i,j)\in\mathcal{E}} E_j\right)\cap E_i$. 
\end{itemize}

\section{Further experiment results}
\label{appsec:experiment-results}
\textbf{SNA error v.s. true error} We plot the SNA error v.s. true error achieved by \texttt{LiNGCReL} in \Cref{fig:SNA_vs_true_error}. We observe that
\begin{itemize}
    \item For most nodes, SNA error is exactly equal to the true error and both errors are small, indicating that the corresponding latent variables have been successfully learned by \texttt{LiNGCReL}.
    \item The remaining nodes typically have true error much larger than SNA error. This indicates that there exists some ambiguities at these nodes in the sense that $\dom_{\G}(i)\neq\emptyset$. Note that the true error for many nodes are close to $1$; one possible reason is that one selects the wrong singular vector in the second part of \Cref{appsubsec:finite-sample-implementation}, so that it is orthogonal to the ground-truth vector.
\end{itemize}

\begin{figure}
    \centering
    \includegraphics[width=0.6\textwidth]{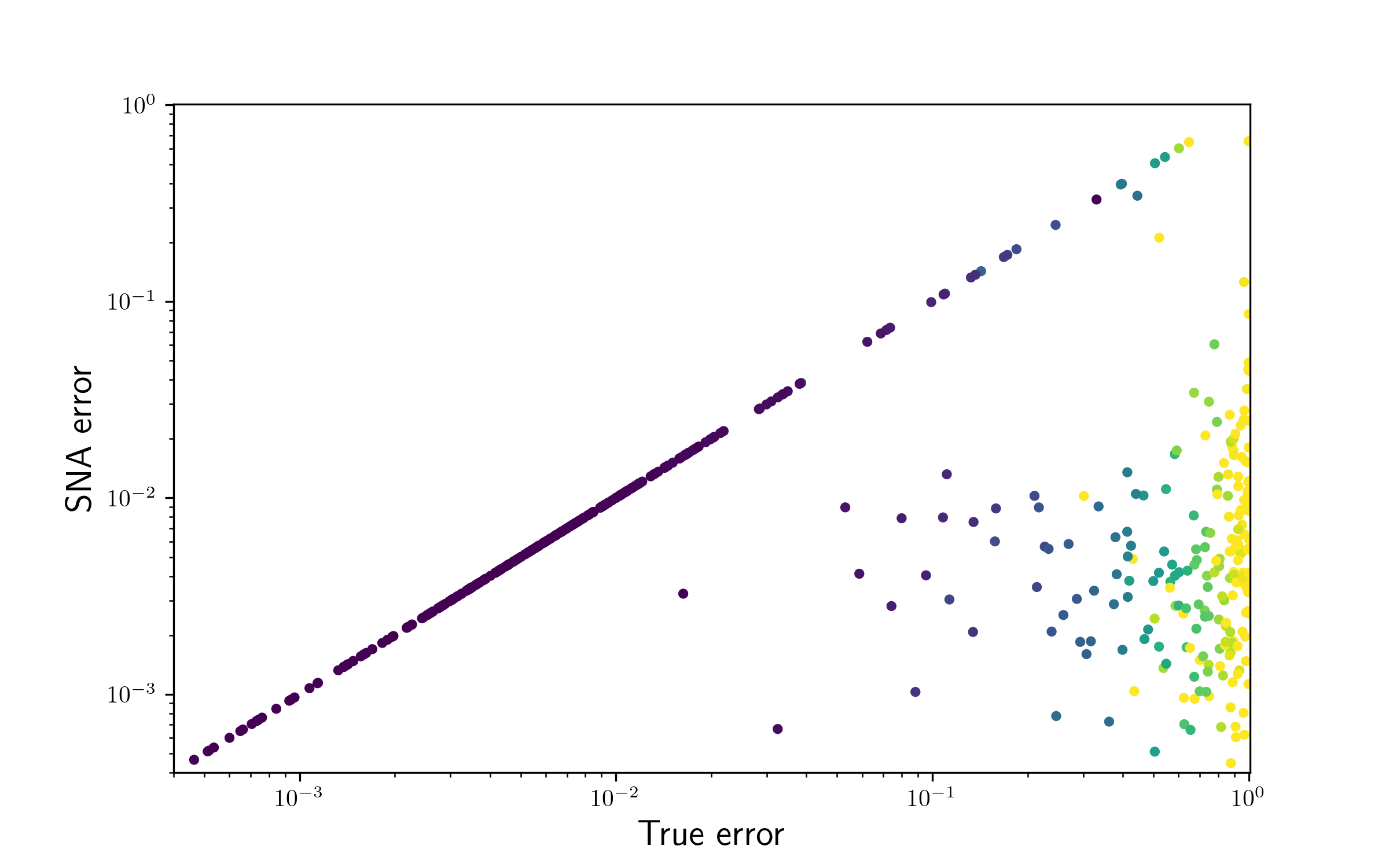}
    \caption{Comparing SNA error with true error for the $500$ latent variables in the $100$ graphs of size $d=5$ that we sample in \Cref{sec:experiment-results}.}
    \label{fig:SNA_vs_true_error}
\end{figure}

\textbf{Sensitivity of \texttt{LiNGCReL} to the hyperparameter \texttt{tl}} We examine how different choices of \texttt{tl} would affect the performance of \texttt{LiNGCReL}. Specifically, we run \texttt{LiNGCReL} on the $100$ models with size $d=5$ and number of environments $K=5$ sampled in \Cref{sec:experiment-results} with $\texttt{tl}\in\{0.1,0.15,0.2,0.25,0.3\}$ and the results are reported in \Cref{fig:diff-tl}. We can see that the permance is actually quite sensitive to $\texttt{tl}$.

\begin{figure}
    \centering
    \includegraphics[width=0.6\textwidth]{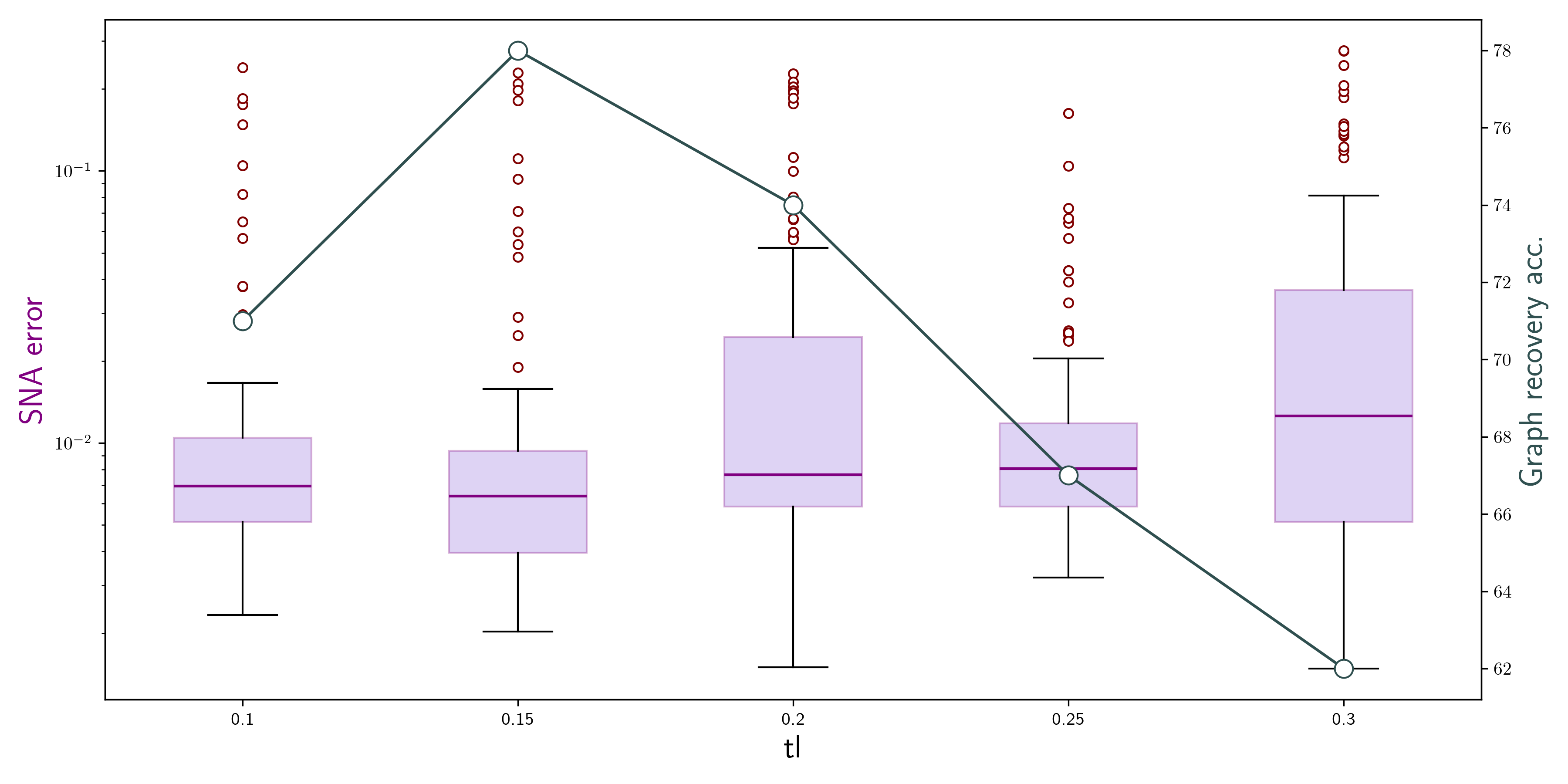}
    \caption{Performance of \texttt{LiNGCReL} as a function of \texttt{tl}. \texttt{tl}$=0.15$ achieves the best performance in terms of both SNA error and graph recovery accuracy.}
    \label{fig:diff-tl}
\end{figure}

\FloatBarrier

\section{Background on causal representation learning}
\label{appsec:background}

It is common to assume some axioms on what kind of (conditional) dependency information is encoded in a causal graph (see~\citealp[Section 3.4]{spirtes2000causation} for a detailed discussion). The most natural one is the Causal Markov Condition introduced in \Cref{asmp:causal-markov-condition} that gives sufficient conditions for conditional independence via $d$-separation. We introduce the formal definition of $d$-separation below:

\begin{definition}[paths and colliders]
    Let $i,j$ be two nodes of a DAG $\G$, a path is a sequence of nodes $i_0=i,i_1,\cdots,i_k=j$ such that there is an edge (in either direction) between $i_j$ and $i_{j+1}, j=0,1,\cdots,k-1$. A node $i_j$ is called a collider on this path if $i_j\in\ch_{\G}(i_{j-1})\cap\ch_{\G}(i_{j+1})$.
\end{definition}

\begin{definition}[blocked path]
    A path in a DAG $\G$ between node $i$ and node $j$ is said to be blocked by a node set $S$ if either of the following holds:
    \begin{itemize}
        \item there exists a node $v$ on the path that is in $S$ but not a collider, or
        \item there exists a node $v$ on the path that is a collider, but none of its descendants (including itself) are in $S$.
    \end{itemize}
\end{definition}

\begin{definition}[d-separation]
    \label{def:d-separation}
    For a DAG $\G$ with node set $[d]$, any two nodes $i\neq j$ are said to be $d$-separated by a set $S\subset [d]\setminus\{i,j\}$ if all paths from $i$ to $j$ are blocked by $S$.
\end{definition}

The \textit{minimality} condition states that there is no redundant edges in the causal graph, and is a natural consequence of the Occam's Razor Principle.

\begin{assumption}[Causal minimality, {~\citealp[Section 3.4.2]{spirtes2000causation}}]
    \label{asmp:minimality} For latent variables $\vz$, removing any edge from $\G$ would render violation of the causal Markov condition \Cref{asmp:causal-markov-condition}. In other words, let $\G_1$ be the graph obtained by removing any single edge from $\G$, then there must exist $i\in[d]$ such that $\vz_i \not\independent \vz_{\nd_{\G_1}(i)}\mid \vz_{\pa_{\G_1}(i)}$.
\end{assumption}

The \textit{faithfulness} condition states that the Causal Markov Condition actually entails all (conditional) independence in the latent variables.

\begin{assumption}[Faithfulness, {~\citealp[Section 3.4.3]{spirtes2000causation}}]
\label{asmp:faithfulness}
    Every (conditional) independence in the latent variables $\vz$ is entailed by the Causal Markov Condition applied to $\G$. In other words, $\vz_i\independent\vz_j\mid \vz_S \Leftrightarrow i,j$ are $d$-separated by $S$.
\end{assumption}

Existing works have explored different notions of identifiability. For observational data, it is well known that Markov equivalence of graphs is an intrinsic ambiguity that one cannot resolve:

\begin{definition}[Markov equivalence/Faithful Indistinguishability, {\citealp[Section 4.2]{spirtes2000causation}}]
    If two DAGs encodes the same set of dependency relations, we say that they are \textit{Markov equivalent}.
\end{definition}

Any DAG $\mathcal{G}$ induces a partial order on its nodes which we denote by $\prec_{\mathcal{G}}$. In the special case when for all $i\neq j$, either $i\prec_{\G}j$ or $j\prec_{\G}i$ holds, we say that $\prec_{\G}$ is a total order. This partial order is equivalent to the transitional closure of the graph, as defined below:

\begin{definition}[Transitional closure]
    Given any DAG $\G$, its transitional closure $\bar{\G}$ is defined to be the graph obtained by connecting all edges $i\to j$ where $i$ is an ancestor of $j$ in $\G$.
\end{definition}

When $\prec_{\G}$ is a total order, each pair of nodes are connected by a directed edge in its transitive closure $\bar{\G}$. Such $\bar{\G}$ is often called a \textit{tournament} in graph theory.

In the following, we list different forms of identifiability that appear in the literature:

\begin{definition}[different notions of identifiability]
\label{def:crl-identify}
    Let $\H:\R^n \supseteq \mathcal{X}\mapsto\R^d$ be the space of diffeomorphic mappings from observation to latent, and $\mathfrak{G}$ be the space of all DAGs with $d$ nodes, then for $h,\hat{h}\in\H$ and $\G,\hat{\G}\in\mathfrak{G}$, we write
    \begin{enumerate}[label={(\roman*)}]
        \item \citep{seigal2022linear,liang2023causal} $(h,\G)\overset{T}{\sim}_{G} (\hat{h},\hat{\G})$ if there exists a permutation $\pi$ on $[d]$ such that $\pi(\G)$ and $\hat{\G}$ have the same transitional closure;
        \item \citep{von2023nonparametric,varici2023general} $(h,\G)\CRLsim (\hat{h},\hat{\G})$ if we actually have $\G=\hat{\G}$ for the $\phi$ defined above.
    \end{enumerate}
\end{definition}

Given an equivalence relation $\sim$ on $\H\times\mathfrak{G}$, we say that a causal model $(\vh,\G)$ is identifiable under $\sim$ if any  candidate solution $(\hat{\vh},\hat{\G})$ satisfies $(\hat{\vh},\hat{\G})\sim(\vh,\G)$. The notion of identification up to $\overset{T}{\sim}_{G}$, as shown in \citet{seigal2022linear} with single-node soft interventions on linear causal models, is highly related to this paper. Compared with their result, our $\sim_{\dom}$ guarantee is must stronger, since not only the causal graph can be fully recovered, but the latent variables can be identified up to mixtures of the effect-dominating sets as well.

\section{Illustrating examples for our theory and algorithm}
\label{appsec:eda-example}

\subsection{An example for understanding the SNA ambiguity}
We provide a simple example below to illustrate the SNA ambiguity discussed in \Cref{sec:eda}.

\begin{example}
    \label{example:discount-revisit}
    Let $G$ be a causal graph with $d=3$ nodes and edges $1\to 2$ and $2\to 3$. We have access to observations from a set of environments $\mathfrak{E}$. It turns out that there is no way to distinguish between the following two structural equation models:
    \begin{align*}
        \label{eq:example-scm}
        \vz_1 &= \epsilon_1^E &\qquad \vv_1 &= \epsilon_1^E \\
        \vz_2 &= f_2^E(\vz_1, \epsilon_2^E) &\qquad \vv_2 &= f_2^E(\vv_1, \epsilon_2^E)\\
        \vz_3 &= f_3^E(\vz_2, \epsilon_3^E) &\qquad \vv_3 &= \vv_2 + f_3^E(\vv_2, \epsilon_3^E)\\
        \vx &= \vz = (\vz_1,\vz_2,\vz_3)^\top &\qquad \vx &= (\vv_1,\vv_2,\vv_3-\vv_2)^\top
    \end{align*}
    where $\epsilon_i^E, i=1,2,3$ are independent noise variables, if we do not change the causal graph $\G$, no matter what environment $E$ that we have. 
    
    This issue does not exist when we assume access to hard interventions on node $3$, which effectively removes the edge $2\to 3$. Specifically, with hard intervention on $\vz_3$, the variables $\vz_2$ and $\vz_3$ become independent. But by definition, $\vv_2 = \vz_2$ and $\vv_3 = \vz_2+\vz_3$ must be dependent, so this intervention cannot be realized by any hard intervention on $\vv_3$, thereby providing a way to distinguish between the above models.

    Without node $3$, the same ambiguity would arise on node $2$. However, node $3$ can help us to overcome this ambiguity, thanks to the fact that node $2$ is the only causal parent of node $3$. Suppose for example that $\vv_2 = m(\vz_1,\vz_2)$ is some mixture of $\vz_1$ and $\vz_2$, then $\vv_3 = \hat{f}_3^E\left(\vv_2,\epsilon_3^E\right) = \hat{f}_3^E\left(m(\vz_1,\vz_2),\epsilon_3^E\right)$. Since all environments share the same mixing function, $\vv_3$ must be some deterministic function $\psi_3(\vz)$ of $\vz$, where $\psi_3$ is the same across all environment $E$. Hence, we have
    \begin{equation}
        \label{eq:mixture-compare}
        \hat{f}_3^E\left(m(\vz_1,\vz_2),\epsilon_3^E\right) = \psi_3\left(\vz_1,\vz_2,f_3^E(\vz_2,\epsilon_3^E)\right)
    \end{equation}
    Now we note that the dependencies of LHS on $\vz_1$ and $\vz_2$ are through a single scalar-valued function $m$, but since we would have different $f_3^E$'s in different environments, this in general does not hold for the RHS. Therefore, any causal model with latent variable $\vv_2$ as a mixture of $\vz_1$ and $\vz_2$ cannot be equivalent to the ground-truth model.
\end{example}

According to \Cref{def:effect-domination}, in \Cref{example:discount-revisit} we have $\dom_{\G}(1)=\dom_{\G}(2)=\emptyset$ but $\dom_{\G}(3)=\{2\}$.

\subsection{An example for the main idea behind LiNGCReL}
\label{appsec:lingcrel-example}

To illustrate our main algorithm on how we can recover the graph $\G$ and the matrix $\mH$, we first provide some intuition using a simple three-node example:

\begin{example}
    Let $\G$ be the graph with $d=3$ nodes and edges $1\to 2, 1\to 3$ and $2\to 3$, so that each $\mB_k$ is of form
    \begin{equation}
        \label{eq:Bk-form}
        \mB_k = \left(
        \begin{aligned}
            \times &\quad 0 &\quad 0 \\
            \times &\quad \times &\quad 0 \\
            \times &\quad \times &\quad \times
        \end{aligned}
        \right) 
        \begin{aligned}
            &\rightsquigarrow\textcolor{purple}{\vb_{k1}} \\
            &\rightsquigarrow\textcolor{purple}{\vb_{k2}} \\
            &\rightsquigarrow\textcolor{purple}{\vb_{k3}}
        \end{aligned}
    \end{equation}
    We can identify the graph as follows: first, for $i\in\{1,2,3\}$, look at the space $\mW_i$ spanned by the rows $(\mM_k)_i, k\in[K]$. If $\dim\mW_i = 1$, we know that $i$ is a source node (\emph{i.e.}, $\pa_{\G}(i)=\emptyset$) in $\G$. Otherwise it is not, due to \Cref{asmp:independent-row}. Hence we can know that node $1$ is a source node.

    In our example, there is no other node that satisfies this requirement. We then proceed to search for some $i\neq 1$ such that the projection of $\mW_i$ onto $\mW_1^\perp$ has dimension $1$. If this holds, then one can show that $\pa_{\G}(i)=\{1\}$. Otherwise, $i$ must have parents other than $1$.

    It turns this requirement is satisfied for node $2$ since $\dim\left(\mathrm{proj}_{\vh_1}\spanl{\vh_1,\vh_2}\right) = 1$, but is not satisfied for node $3$ since $\dim\left(\mathrm{proj}_{\vh_1}\spanl{\vh_1,\vh_2, \vh_3}\right) \geq 2$ (by \Cref{lemma:linalg-proj-dim}). Hence we know that $\pa_{\G}(2)=\{1\}$.

    Finally, it remains to determine $\pa_{\G}(3)$. To do this, we first note that $\dim\mW_3=3$. Then we project $\mW_3$ onto $\mW_1^\perp$ and $\mW_2^\perp$ respectively, and the resulting dimensions are $2$ and $1$. As we rigorously show in \Cref{prop:alg1}, a decrease of the dimension exactly indicates finding a new parent. Thus we have $\pa_{\G}(3)=\{1,2\}$, completing the recovery of the graph.

    Finally, we recover the unmixing matrix $\mH$ (and thus the latent variables) by noticing that $\vh_1\in\mW_1$, $\vh_2\in\mW_2\cap\mW_3$ and $\vh_3\in\mW_3$. Ambiguities would arise at nodes $2$ and $3$, which are exactly the nodes that have non-empty effect-dominating sets.
\end{example}

\section{Auxiliary lemmas}
\label{appsec:auxiliary-lemma}

\begin{lemma}
    \label{lemma:linalg-invertible-transformation}
    For any family of $m$-dimensional vectors $\{\vv_k\}_{k=1}^K$ and $\{\vz_k\}_{k=1}^K$ if $\vv_k = \vz_k \mT$ and $\mT\in\R^{m\times m}$ is invertible, then 
    \begin{equation}
        \notag
        \dim\spanl{\vv_k: k\in[K]} = \dim\spanl{\vz_k: k\in[K]}
    \end{equation}
\end{lemma}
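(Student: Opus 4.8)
The plan is to recognize this as the elementary fact that an invertible linear transformation preserves the dimension of a span; the only care needed is that the vectors are rows and $\mT$ acts on the right. First I would introduce the linear map $\Phi:\R^m\to\R^m$ given by right multiplication, $\Phi(\vx)=\vx\mT$. Since $\mT$ is invertible, $\Phi$ is a bijection with inverse $\vx\mapsto\vx\mT^{-1}$, and hence a vector-space isomorphism of $\R^m$ onto itself.

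Next I would use that a linear map carries the span of a set to the span of its images. Concretely, because $\vv_k=\vz_k\mT=\Phi(\vz_k)$ for every $k\in[K]$, we get $\Phi\!\left(\spanl{\vz_k:k\in[K]}\right)=\spanl{\vv_k:k\in[K]}$. An isomorphism maps any subspace bijectively onto its image, so the image has the same dimension as the domain subspace; this yields $\dim\spanl{\vv_k:k\in[K]}=\dim\spanl{\vz_k:k\in[K]}$.

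An equivalent and perhaps more streamlined write-up would phrase everything in terms of rank. Stacking the vectors as the rows of matrices $\mZ,\mV\in\R^{K\times m}$, the hypothesis reads $\mV=\mZ\mT$. The dimension of the span of the rows is exactly the row rank, and right multiplication by an invertible matrix preserves rank, so $\rank{\mV}=\rank{\mZ\mT}=\rank{\mZ}$, which is the claim. There is no genuine obstacle here: the statement is a standard linear-algebra fact, and the only thing to get right is the bookkeeping of row vectors versus columns (so that the invertible factor $\mT$ sits on the correct side) and invoking either the isomorphism-preserves-dimension principle or the rank-invariance-under-invertible-multiplication principle cleanly.
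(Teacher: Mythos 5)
Your proof is correct. The paper states this lemma without proof, treating it as a standard linear-algebra fact, and your argument (right multiplication by an invertible $\mT$ is a vector-space isomorphism of $\R^m$, hence preserves the dimension of the span; equivalently, $\rank{\mZ\mT}=\rank{\mZ}$ for invertible $\mT$) is exactly the standard justification one would supply, with the row-vector bookkeeping handled correctly.
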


\begin{theorem}[Darmois-Skitovic Theorem]
    Let $\epsilon_i,i\in[d]$ be independent random variables and $X=\sum_{i=1}^d \alpha_i\epsilon_i, Y=\sum_{i=1}^d \beta_i\epsilon_i$. If $X \independent Y$, then for $\forall i\in[d]$, $\alpha_i\beta_i\neq 0 \Rightarrow \epsilon_i$ is Gaussian distributed.
\end{theorem}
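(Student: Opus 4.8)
The plan is to prove this classical fact by passing to characteristic functions and showing that any $\epsilon_j$ entering both $X$ and $Y$ must have a log-characteristic function that is a quadratic polynomial. Write $\phi_i(t)=\E[\exp(it\epsilon_i)]$ for the characteristic function of $\epsilon_i$. Since the $\epsilon_i$ are independent, the joint characteristic function of $(X,Y)$ factors as $\E[\exp(i(sX+tY))]=\prod_{i=1}^d \phi_i(\alpha_i s+\beta_i t)$, while the independence $X\independent Y$ gives $\E[\exp(i(sX+tY))]=\prod_{i=1}^d\phi_i(\alpha_i s)\cdot\prod_{i=1}^d\phi_i(\beta_i t)$. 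Each $\phi_i$ is continuous with $\phi_i(0)=1$, hence nonvanishing on a neighborhood $U$ of the origin, where a continuous branch $\psi_i=\log\phi_i$ with $\psi_i(0)=0$ exists. Equating the two expressions and taking logarithms yields the additive functional equation
\[
  \sum_{i=1}^d \psi_i(\alpha_i s+\beta_i t)=\sum_{i=1}^d \psi_i(\alpha_i s)+\sum_{i=1}^d \psi_i(\beta_i t)
\]
valid for all $(s,t)$ near the origin.

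The core of the argument is to extract, for each index $j$ with $\alpha_j\beta_j\neq 0$, that $\psi_j$ is a polynomial. I would apply the mixed second-order finite-difference operator $\Delta^{(s)}_{h}\Delta^{(t)}_{k}$, where $(\Delta^{(s)}_h g)(s,t)=g(s+h,t)-g(s,t)$ and similarly in $t$, to both sides. On the right-hand side every summand depends on only one of $s,t$, so the mixed difference annihilates it; on the left, $\Delta^{(s)}_h\Delta^{(t)}_k$ kills exactly the terms with $\alpha_i=0$ or $\beta_i=0$, leaving $\sum_{i:\alpha_i\beta_i\neq 0}\Delta^{(s)}_h\Delta^{(t)}_k\,\psi_i(\alpha_i s+\beta_i t)=0$. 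Now each surviving term has a genuinely two-dimensional argument, and iterating finite differences along directions chosen to freeze one linear form $\alpha_i s+\beta_i t$ while moving the others isolates a single index. The only bookkeeping subtlety is when several pairs $(\alpha_i,\beta_i)$ are proportional; these are grouped along a common direction and handled by the standard lemma that a relation $\sum_r G_r(\ell_r(s,t))=0$ with pairwise non-proportional linear forms $\ell_r$ forces each $G_r$ to be a polynomial. The conclusion of this step is that $\psi_j$ has vanishing finite differences of sufficiently high order, so by Fréchet's theorem $\psi_j$ coincides with a polynomial on $U$.

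Finally, I would upgrade \emph{polynomial log-characteristic function} to \emph{Gaussian}. Once $\psi_j$ is a polynomial on a neighborhood, $\phi_j=\exp(\psi_j)$ extends to an entire characteristic function, and Marcinkiewicz's theorem forbids the exponential of a polynomial of degree exceeding $2$ from being a characteristic function; hence $\deg\psi_j\leq 2$. Writing $\psi_j(u)=i\mu_j u-\tfrac{\sigma_j^2}{2}u^2$ with $\sigma_j^2\geq 0$ (the sign constraint and the absence of a constant term follow from $\psi_j(0)=0$ and $|\phi_j|\leq 1$), we conclude that $\epsilon_j$ is Gaussian, possibly degenerate. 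I expect the main difficulties to be two technical ones rather than the overall strategy: first, carrying out the finite-difference isolation cleanly when the direction vectors $(\alpha_i,\beta_i)$ may repeat, and second, the globalization step—justifying that a log-characteristic function which is polynomial only on a neighborhood is in fact globally quadratic—which is precisely where Marcinkiewicz's theorem, together with the analyticity of $\phi_j$ near $0$, does the real work.
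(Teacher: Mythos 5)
The paper states the Darmois--Skitovich theorem as a named classical result and gives no proof of it (it is used as a black box in the proof of \Cref{lemma:asmp-no-gaussian}), so there is no in-paper argument to compare against; your sketch follows the standard textbook route via characteristic functions, the additive functional equation, finite differences, and Marcinkiewicz's theorem, which is the right strategy.

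There is, however, one concrete gap in the step where you handle proportional coefficient pairs. Suppose $(\alpha_i,\beta_i)$ and $(\alpha_{i'},\beta_{i'})$ are proportional, so that the two terms share a common linear form $\ell(s,t)$. The lemma you invoke (a relation $\sum_r G_r(\ell_r(s,t))=0$ with pairwise non-proportional forms forces each $G_r$ to be a polynomial) only tells you that the \emph{grouped} function $G(u)=\psi_i(c_i u)+\psi_{i'}(c_{i'}u)$ is a polynomial. Exponentiating, this says that the characteristic function of the independent sum $c_i\epsilon_i+c_{i'}\epsilon_{i'}$ is $e^{\mathrm{poly}}$, hence (after the Marcinkiewicz step) that this \emph{sum} is Gaussian. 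It does not follow from Fr\'echet's theorem alone that each individual $\psi_i$ in the group is a polynomial --- for arbitrary functions, $\psi_i(c_iu)+\psi_{i'}(c_{i'}u)$ being a polynomial says nothing about the summands. To split the group you need Cram\'er's decomposition theorem (a sum of independent random variables is Gaussian only if each summand is Gaussian), which is a nontrivial additional ingredient your sketch never invokes. Equivalently, one first replaces each proportional group by the single variable $\eta=\sum_{i\in\mathrm{group}}c_i\epsilon_i$, proves $\eta$ is Gaussian via your argument for pairwise non-proportional forms, and then applies Cram\'er--L\'evy to recover the individual $\epsilon_i$. The other difficulty you flag --- upgrading ``$\psi_j$ polynomial on a neighborhood of $0$'' to an entire characteristic function so that Marcinkiewicz applies --- is real but is correctly identified and handled by the standard machinery (all cumulants beyond a fixed order vanish, the moments grow slowly enough that $\phi_j$ is analytic in a strip, and analytic continuation gives $\phi_j=e^{P}$ globally); with the Cram\'er step added, the proof is complete.
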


\begin{lemma}
\label{lemma:asmp-no-gaussian}
    Suppose that $\epsilon=(\epsilon_1,\cdots,\epsilon_d)$ is a $d$-dimensional random vector with independent components such that $\var(\epsilon_i)=1, \forall i\in[d]$, and there exists an invertible and non-diagonal matrix $\mM$ such that $\mM\epsilon\overset{d}{=}\epsilon$, then at least one of the following statements must hold:
    \begin{enumerate}[label={(\arabic*)}]
        \item there exists at least two Gaussian variables in $\epsilon_1,\cdots,\epsilon_d$;
        \item $\mM$ is a permutation matrix and there exists $1\leq i<j\leq d$ such that $\epsilon_i\overset{d}{=}\epsilon_j$.
    \end{enumerate}
\end{lemma}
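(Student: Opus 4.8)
The plan is to extract two complementary pieces of information from the hypothesis $\mM\epsilon\overset{d}{=}\epsilon$: a second-moment identity that pins down the geometry of $\mM$, and an independence constraint that, through the Darmois--Skitovic theorem, controls its sparsity pattern. First I would match covariance matrices. Since the components of $\epsilon$ are independent with unit variance, $\Cov(\epsilon)=\mI$, and therefore
\[
  \mM\mM^\top = \mM\,\Cov(\epsilon)\,\mM^\top = \Cov(\mM\epsilon) = \Cov(\epsilon) = \mI,
\]
where the middle equality uses $\mM\epsilon\overset{d}{=}\epsilon$ and all second moments exist because $\var(\epsilon_i)=1$. Hence $\mM$ is orthogonal, so its columns form an orthonormal family; in particular every column is nonzero.

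Next I would exploit independence. Because $\mM\epsilon\overset{d}{=}\epsilon$ has independent coordinates, for any two rows $i\neq j$ the linear forms $(\mM\epsilon)_i=\sum_k\mM_{ik}\epsilon_k$ and $(\mM\epsilon)_j=\sum_k\mM_{jk}\epsilon_k$ are independent, so Darmois--Skitovic gives $\mM_{ik}\mM_{jk}\neq 0\Rightarrow\epsilon_k$ Gaussian. Now suppose conclusion (1) fails, i.e.\ at most one $\epsilon_k$ is Gaussian. Then for every \emph{non-Gaussian} index $k$ and all $i\neq j$ we have $\mM_{ik}\mM_{jk}=0$, so column $k$ has at most one nonzero entry; being a unit vector, it equals $\pm\ve_{r(k)}$ for a single row $r(k)$, and orthogonality of distinct columns forces the $r(k)$ to be pairwise distinct. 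If no component is Gaussian this already makes all $d$ columns signed standard basis vectors. If exactly one index $k_0$ is Gaussian, the $d-1$ columns $k\neq k_0$ occupy $d-1$ distinct rows, and the remaining unit column $k_0$, being orthogonal to all of them, must lie in the line spanned by the one leftover basis vector and hence equal $\pm\ve_{r_0}$. In either case $\mM$ is a signed permutation matrix, so $(\mM\epsilon)_i=s_i\,\epsilon_{\tau(i)}$ for a permutation $\tau$ and signs $s_i\in\{\pm1\}$.

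Finally I would read off conclusion (2). Since $\mM$ is non-diagonal, $\tau$ is not the identity, so some $i$ has $\tau(i)\neq i$; comparing $i$-th coordinates in $\mM\epsilon\overset{d}{=}\epsilon$ yields $\epsilon_i\overset{d}{=}s_i\,\epsilon_{\tau(i)}$ with $i\neq\tau(i)$, which is exactly conclusion (2) when $s_i=1$, and reduces to it after a short argument along the $\tau$-cycle when $s_i=-1$ (tracking the product of signs around the cycle, which forces the relevant components to be symmetric and thus absorbs the sign; this is also automatic under the symmetric-noise convention used in the experiments). The step I expect to be the main obstacle is the single-Gaussian case: Darmois--Skitovic says nothing about the Gaussian column, so it is the orthogonality identity from the first step, rather than independence alone, that forces that column to be a basis vector; the sign bookkeeping in the last step is the only other point requiring care.
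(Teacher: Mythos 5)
Your proof is correct and follows essentially the same route as the paper's: Darmois--Skitovic kills all but one entry in each non-Gaussian column, and second-moment information upgrades $\mM$ to a signed permutation; the only cosmetic difference is that you extract $\mM\mM^\top=\mI$ from covariance matching at the outset (so the Gaussian column is handled by orthogonality), whereas the paper handles that column at the end via zero covariance between the independent components of $\mM\epsilon$. One caveat on your final sign bookkeeping: when the product of signs around a $\tau$-cycle is $+1$ but some individual signs are $-1$, you only obtain $\epsilon_i\overset{d}{=}-\epsilon_j$, which does \emph{not} force symmetry as you claim --- but the paper's own proof glosses over exactly the same point, establishing only that $\mM$ is a \emph{signed} permutation, which is also all that is actually used downstream in \Cref{thm:linear-main-thm}.
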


\begin{proof}
    Suppose that (1) does not hold, then there is at most one Gaussian variable in $\epsilon_1,\cdots,\epsilon_d$. We assume WLOG that $\epsilon_1,\cdots,\epsilon_{d-1}$ are all non-Gaussian. Then by the Darmois-Skitovic Theorem, we know that for $\forall 1\leq j<k\leq[d]$ and $i\in[d-1]$, $\mM_{ji}\cdot\mM_{ki}= 0$ $\Rightarrow$ there is at most one non-zero entry in each of the first $d-1$ columns of $\mM$. 

    Assume that $\mM_{k_i,i}\neq 0$, $i\in[d-1]$. Since $\mM$ is invertible, we know that $k_i, i\in[d-1]$ must be different. Let $k_d$ be the remaining element in $[d]$ that does not appear in $k_i,i<d$, then $(\mM\epsilon)_{k_d} = \mM_{k_d,d}\epsilon_d$, while $(\mM\epsilon)_{k_i}=\mM_{k_i,i}\epsilon_i+\mM_{k_i,d}\epsilon_d$. Since the components of $\mM\epsilon$ are independent, it is easy to see that $\mM_{id}\neq 0, \forall i\neq k_d$. In other words, $\mM$ only has non-zero entries at $(k_i,i),i\in[d]$.
    
    Since $\var(\epsilon_i)=1$, we know that $\mM$ must be a signed permutation matrix. Finally, let $\pi$ be the permutation on $[d]$ such that $\mM_{i,\pi(i)}\neq 0$. Since $\mM$ is not diagonal, $\pi$ must have a cycle $(i_1,i_2,\cdots,i_k)$ with length $k\geq 2$, so that $\epsilon_{i_1},\cdots,\epsilon_{i_k}$ all have the same distribution, which implies that (2) holds, as desired.
\end{proof}

\begin{lemma}
\label{lemma:linalg-proj-dim}
    Let $\mV_1,\mV_2$ be two subspaces of $\R^d$ such that $\mV_1\cap\mV_2=\{\bm{0}\}$, and $\mP_{\mV_1^{\perp}}$ be the orthogonal projection onto $\mV_1^{\perp}$, then we have that $\dim(\mV_2)=\dim\left(\mP_{\mV_1^{\perp}}\mV_2\right)$.
\end{lemma}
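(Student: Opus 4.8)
The plan is to exhibit the restricted map $\mP_{\mV_1^{\perp}}\big|_{\mV_2}\colon \mV_2\to \mP_{\mV_1^{\perp}}\mV_2$ as a linear isomorphism, so that the two spaces automatically share the same dimension. Surjectivity onto $\mP_{\mV_1^{\perp}}\mV_2$ is immediate, since that space is by definition the image of $\mV_2$ under the projection. The entire content therefore lies in establishing injectivity of this restriction, which is where the hypothesis $\mV_1\cap\mV_2=\{\bm{0}\}$ enters.

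First I would recall the basic identification of the kernel: the orthogonal projection onto $\mV_1^{\perp}$ annihilates exactly the vectors with no $\mV_1^{\perp}$-component, i.e. $\ker\mP_{\mV_1^{\perp}}=\mV_1$ (every $\vv\in\R^d$ decomposes uniquely as $\vv=\vv_1+\vv_1'$ with $\vv_1\in\mV_1$ and $\vv_1'\in\mV_1^{\perp}$, and $\mP_{\mV_1^{\perp}}\vv=\vv_1'$). Then, to prove injectivity of the restriction, I would take any $\vv\in\mV_2$ with $\mP_{\mV_1^{\perp}}\vv=\bm{0}$; the kernel identification forces $\vv\in\mV_1$, while $\vv\in\mV_2$ was assumed, so $\vv\in\mV_1\cap\mV_2=\{\bm{0}\}$ and hence $\vv=\bm{0}$. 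This shows the restricted map has trivial kernel.

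Having both injectivity and surjectivity, the restriction is a bijective linear map from $\mV_2$ onto $\mP_{\mV_1^{\perp}}\mV_2$, so $\dim(\mV_2)=\dim\left(\mP_{\mV_1^{\perp}}\mV_2\right)$ as claimed; equivalently one may phrase the final step through the rank--nullity theorem applied to $\mP_{\mV_1^{\perp}}\big|_{\mV_2}$, whose nullity is $0$.

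I do not expect any genuine obstacle here, as the statement is elementary linear algebra. The only point that deserves care is the clean identification $\ker\mP_{\mV_1^{\perp}}=\mV_1$, since it is precisely the interplay between this kernel and the transversality hypothesis $\mV_1\cap\mV_2=\{\bm{0}\}$ that drives the argument; everything else is routine.
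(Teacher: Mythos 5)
Your proof is correct and follows essentially the same route as the paper: both arguments reduce to the observation that $\ker\mP_{\mV_1^{\perp}}=\mV_1$ together with $\mV_1\cap\mV_2=\{\bm{0}\}$ forces the projection restricted to $\mV_2$ to have trivial kernel (the paper phrases this as linear independence of the images of a basis of $\mV_2$, which is the same injectivity argument).
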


\begin{proof}
    Obviously we have $\dim(\mV_2)\geq\dim\left(\mP_{\mV_1^{\perp}}\mV_2\right)$. On the other hand, let $\vu_1,\vu_2,\cdots,\vu_m$ be a basis of $\mV_2$, then $\vw_i=\mP_{\mV_1^{\perp}}\vu_i, i=1,2,\cdots,m$ are also independent. Indeed, suppose that $\lambda_i,i=1,2,\cdots,m$ satisfy $\sum_{i=1}^m\lambda_i\vw_i=0$, then $\mP_{\mV_1^{\perp}}\left(\sum_{i=1}^m\lambda_i\vu_i\right)=0$, implying that $\sum_{i=1}^m\lambda_i\vu_i \in\mV_1$. However, we know that $\mV_1\cap\mV_2=\{\bm{0}\}$, so $\lambda_1=\cdots=\lambda_m=0$. This concludes the proof.
\end{proof}

\begin{lemma}
    \label{lemma:assumptions-equiv}
    \Cref{asmp:independent-row-orig} is equivalent to \Cref{asmp:independent-row}.
\end{lemma}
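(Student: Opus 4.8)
The plan is to prove the equivalence node by node, since both assumptions are stated as conditions that are required to hold for every node $i\in[d]$. Fix a node $i$ and write $p=\absx{\pa_{\G}(i)}$. The key computation is to make the $i$-th row of $\mB_k=\bm{\Omega}_k^{-1/2}(\mI-\mA_k)$ explicit. Since $\bm{\Omega}_k$ is diagonal with positive entries and $(\mA_k)_{ij}\neq 0$ exactly for $j\in\pa_{\G}(i)$ (with no self-loop, so $(\mA_k)_{ii}=0$), I obtain
$$(\mB_k)_i=\omega_{k,i,i}^{-1/2}\Bigl(\ve_i-\sum_{j\in\pa_{\G}(i)}(\vw_k(i))_j\,\ve_j\Bigr),$$
a vector supported on $\hpa_{\G}(i)=\pa_{\G}(i)\cup\{i\}$. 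In particular every $(\mB_k)_i$ lies in the $(p+1)$-dimensional coordinate subspace indexed by $\hpa_{\G}(i)$, so $\dim\spanl{(\mB_k)_i:k\in[K]}\leq p+1$, and \Cref{asmp:independent-row} asserts precisely that this span is the whole coordinate subspace.

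Next I identify $(\mB_k)_i$ with its restriction to the coordinates $\hpa_{\G}(i)$, which, ordering the $i$-th coordinate first, is the vector $\omega_{k,i,i}^{-1/2}\,(1,-\vw_k(i)^\top)\in\R^{p+1}$. Scaling each vector individually by the positive scalar $\omega_{k,i,i}^{-1/2}$ does not change the span, and applying the common invertible diagonal matrix $\diag{1,-1,\dots,-1}$ flips the sign of the last $p$ coordinates; by \Cref{lemma:linalg-invertible-transformation} this leaves the dimension of the span unchanged. Hence
$$\dim\spanl{(\mB_k)_i:k\in[K]}=\dim\spanl{(1,\vw_k(i)^\top):k\in[K]}.$$

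It remains to connect the homogenized span on the right to the affine hull of $\{\vw_k(i)\}$ appearing in \Cref{asmp:independent-row-orig}. Here I would invoke the standard fact that $\dim\mathrm{aff}(\vw_k(i):k\in[K])=\dim\spanl{(1,\vw_k(i)^\top):k\in[K]}-1$, which follows by writing the latter span as that of $(1,\vw_1(i)^\top)$ together with the differences $(0,(\vw_k(i)-\vw_1(i))^\top)$, and noting that the first vector is linearly independent of the rest because it alone has a nonzero leading coordinate. Consequently the affine hull equals $\R^p$ iff the homogenized vectors span $\R^{p+1}$, iff $\dim\spanl{(\mB_k)_i:k\in[K]}=p+1$. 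This is exactly the equivalence of the two non-degeneracy conditions at node $i$; since $i$ was arbitrary, \Cref{asmp:independent-row-orig,asmp:independent-row} are equivalent. No step presents a genuine obstacle — the content is the explicit row computation together with two dimension-preserving manipulations — so the only care needed is bookkeeping: tracking the per-vector positive rescaling separately from the common sign flip, and recalling the affine-hull/homogenization identity.
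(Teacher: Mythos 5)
Your proof is correct and follows essentially the same route as the paper's: both reduce the $i$-th row of $\mB_k$ to the homogenized weight vector $(\pm 1,\vw_k(i)^\top)$ supported on $\hpa_{\G}(i)$ (the positive scaling by $\omega_{k,i,i}^{-1/2}$ being irrelevant to the span), and then identify the condition $\dim\spanl{(1,\vw_k(i)^\top):k\in[K]}=\absx{\pa_{\G}(i)}+1$ with the affine hull of the $\vw_k(i)$ being all of $\R^{\absx{\pa_{\G}(i)}}$. The paper verifies this last correspondence by exhibiting affine combinations directly in both directions, whereas you invoke the standard homogenization identity $\dim\mathrm{aff}=\dim\mathrm{span}(\text{homogenized})-1$, but these are the same argument in different packaging.
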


\begin{proof}
    The main observation is that for each $k\in[K]$, $(\mB_k)_i$ only has non-zero entries at the $j$-th coordinate where $j\in\hpa_{\G}(i)$. Moreover, let $\tilde{\vw}_k(i)$ be the vector consisting of these entries, then $\tilde{\vw}_k(i) = (\bm{\Omega}_k)_{ii}^{-\frac{1}{2}}\left(-\vw_k(i),1\right)$. Hence,
    \begin{equation}
        \notag
        \dim\spanl{(\mB_k)_i: k\in[K]} = \dim\spanl{\left(-\vw_k(i),1\right): k\in[K]}.
    \end{equation}
    Suppose that \Cref{asmp:independent-row-orig} holds, then for $\forall \vx \in \R^{\absx{\pa_{\G}(i)}}$, there exists $\lambda_k \in \R, 1\leq k\leq \absx{\pa_{\G}(i)}$ such that $\sum_k\lambda_k=1$ and $\sum_k\lambda_k\vw_k(i)= \vx$. Hence,
    \begin{equation}
        \notag
        \left(\vx, 1\right) = \sum_{k} \lambda_k \tilde{\vw}_k(i) \in \spanl{(\mB_k)_i: k\in[K]}.
    \end{equation}
    This immediately implies that $\spanl{(\mB_k)_i: k\in[K]} = \R^{\absx{\pa_{\G}(i)}+1}$, so that \Cref{asmp:independent-row} holds.

    Conversely, suppose that \Cref{asmp:independent-row} holds, then for $\forall \vx \in \R^{\absx{\pa_{\G}(i)}}$, there exists $\lambda_k \in \R, 1\leq k\leq \absx{\pa_{\G}(i)}$ such that $\sum_{k} \lambda_k \tilde{\vw}_k(i) = \left(\vx, 1\right)$. Hence we have $\sum_{k} \lambda_k \vw_k(i) = \vx$ and $\sum_k\lambda_k=1$, implying \Cref{asmp:independent-row-orig}.
\end{proof}

\section{Properties of effect-domination sets}
\label{appsec:dom-property}

\begin{lemma}
    \label{lemma:dom-equivalent}
    \begin{itemize}
        \item $j\in\dom_{\G}(i)$ if and only if $\hch_{\G}(i) \subseteq \ch_{\G}(j)$;
        \item when $i\neq j$, $j\in\dom_{\G}(i)$ if and only if $\hch_{\G}(i) \subseteq \hch_{\G}(j)$.
    \end{itemize}
\end{lemma}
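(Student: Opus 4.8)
The plan is to unfold both equivalences into plain set containments and verify them by direct inclusion-chasing, relying only on \Cref{def:effect-domination} together with the acyclicity of $\G$. First I would record what the hypothesis $j\in\dom_{\G}(i)$ means: reading \Cref{def:effect-domination} with the two nodes in the appropriate roles, it holds precisely when $j\in\pa_{\G}(i)$ and $\ch_{\G}(i)\subseteq\ch_{\G}(j)$. I would also keep at hand the elementary parent--child duality $j\in\pa_{\G}(i)\Leftrightarrow i\in\ch_{\G}(j)$ and, since $\hch_{\G}(i)=\ch_{\G}(i)\cup\{i\}$, the fact that a DAG has no self-loops, so $i\notin\ch_{\G}(i)$.

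For the first bullet, the forward direction is immediate: if $j\in\dom_{\G}(i)$ then $i\in\ch_{\G}(j)$ (from $j\in\pa_{\G}(i)$) and $\ch_{\G}(i)\subseteq\ch_{\G}(j)$, whence $\hch_{\G}(i)=\ch_{\G}(i)\cup\{i\}\subseteq\ch_{\G}(j)$. Conversely, from $\hch_{\G}(i)\subseteq\ch_{\G}(j)$ I would read off $i\in\ch_{\G}(j)$, i.e. $j\in\pa_{\G}(i)$, as well as $\ch_{\G}(i)\subseteq\ch_{\G}(j)$; these are exactly the two clauses of \Cref{def:effect-domination}. (No case distinction on whether $i=j$ is needed, since $i=j$ forces the impossible $i\in\ch_{\G}(i)$ on both sides, so the equivalence holds vacuously there.)

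For the second bullet, the forward direction follows from the first together with $\ch_{\G}(j)\subseteq\hch_{\G}(j)$, giving $\hch_{\G}(i)\subseteq\ch_{\G}(j)\subseteq\hch_{\G}(j)$. The real content is the converse, where I only know $\hch_{\G}(i)\subseteq\hch_{\G}(j)=\ch_{\G}(j)\cup\{j\}$ and must sharpen this to $\hch_{\G}(i)\subseteq\ch_{\G}(j)$ in order to apply the first bullet. Since $i\neq j$, this sharpening is equivalent to showing $j\notin\ch_{\G}(i)$. I expect this to be the one genuinely non-trivial step, and it is precisely where acyclicity enters: if instead $j\in\ch_{\G}(i)$, i.e. $i\to j$, then $i\in\hch_{\G}(i)\subseteq\hch_{\G}(j)$ combined with $i\neq j$ forces $i\in\ch_{\G}(j)$, i.e. $j\to i$, producing a $2$-cycle and contradicting that $\G$ is a DAG. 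Hence $j\notin\hch_{\G}(i)$, the containment improves to $\hch_{\G}(i)\subseteq\ch_{\G}(j)$, and the first bullet closes the argument.
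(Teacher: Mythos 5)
Your proof is correct and follows essentially the same route as the paper's: both bullets are handled by unfolding \Cref{def:effect-domination} into set containments, and the only substantive step in the second bullet — upgrading $\hch_{\G}(i)\subseteq\hch_{\G}(j)$ to $\hch_{\G}(i)\subseteq\ch_{\G}(j)$ by ruling out $j\in\hch_{\G}(i)$ via the $2$-cycle $i\to j\to i$ — is exactly the paper's argument. You merely make the acyclicity appeal a bit more explicit than the paper does; no further comment needed.
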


\begin{proof}
    If $j\in\dom_{\G}(i)$, by definition $i\in\ch_{\G}(j)$ and $\ch_{\G}(i) \subseteq \ch_{\G}(j)$, so that $\hch_{\G}(i) \subseteq \ch_{\G}(j)$. Conversely, $\hch_{\G}(i) \subseteq \ch_{\G}(j)$ implies that $i\in\ch_{\G}(j)$ and $\ch_{\G}(i) \subseteq \ch_{\G}(j)$, so $j\in\dom_{\G}(i)$. This proves the first claim.

    To prove the second claim, assume that $\hch_{\G}(i) \subseteq \hch_{\G}(j)$ holds but $\hch_{\G}(i) \subseteq \ch_{\G}(j)$ does not hold, then we must have $j\in\hch_{\G}(i)$. since $j\neq i$, we have $j\in\ch_{\G}(i)$, but then $i\notin\hch_{\G}(j)$, which is a contradiction. Hence $\hch_{\G}(i) \subseteq \ch_{\G}(j)$ and the conclusion follows from the first claim.
\end{proof}

\begin{lemma}
\label{prop:sp-chain}
    Let $\G$ be a DAG and $i$ be its node, then for $\forall j\in\pa_{\G}(i)$, we have $\dom_{\G}(j) \subseteq \pa_{\G}(i)$.
\end{lemma}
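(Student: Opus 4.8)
The plan is to prove this by directly unfolding the definition of the effect-domination set and chasing the resulting set inclusions; no clever construction is needed. Fix a node $i$ and a parent $j\in\pa_{\G}(i)$, and let $\ell\in\dom_{\G}(j)$ be arbitrary. The goal is to show $\ell\in\pa_{\G}(i)$, since $\ell$ ranges over all of $\dom_{\G}(j)$.

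First I would invoke \Cref{def:effect-domination}: by definition, $\ell\in\dom_{\G}(j)$ means precisely that $\ell\in\pa_{\G}(j)$ \emph{and} $\ch_{\G}(j)\subseteq\ch_{\G}(\ell)$. The key observation is that the hypothesis $j\in\pa_{\G}(i)$ is equivalent, in a DAG, to $i\in\ch_{\G}(j)$. Feeding this single child $i$ of $j$ through the inclusion $\ch_{\G}(j)\subseteq\ch_{\G}(\ell)$ gives $i\in\ch_{\G}(\ell)$, which is exactly the statement $\ell\in\pa_{\G}(i)$. Since $\ell$ was an arbitrary element of $\dom_{\G}(j)$, this establishes $\dom_{\G}(j)\subseteq\pa_{\G}(i)$.

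There is no substantive obstacle here: the entire content is the child-set inclusion built into the definition of domination, applied to the one child $i$ that the hypothesis hands us. The only points requiring a little care are bookkeeping ones: keeping the two domination roles straight (so that it is $\ch_{\G}(j)\subseteq\ch_{\G}(\ell)$ rather than the reverse), using that the parent and child relations are genuine converses in $\G$, and noting that the claim concerns $\dom_{\G}$ rather than $\overline{\dom}_{\G}$ (the case $\ell=j$ does not arise, since $\ell\in\pa_{\G}(j)$ forces $\ell\neq j$ in an acyclic graph).
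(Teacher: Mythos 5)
Your argument is correct and is exactly the paper's proof: take $\ell\in\dom_{\G}(j)$, use the defining inclusion $\ch_{\G}(j)\subseteq\ch_{\G}(\ell)$, and feed in the single child $i\in\ch_{\G}(j)$ supplied by the hypothesis $j\in\pa_{\G}(i)$ to conclude $\ell\in\pa_{\G}(i)$. No differences worth noting.
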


\begin{proof}
    Let $k \in \dom_{\G}(j)$, then by definition we have $\ch_{\G}(j)\subseteq\ch_{\G}(k)$. In particular, we have $i\in\ch_{\G}(k) \Rightarrow k\in\pa_{\G}(i)$.
\end{proof}

\begin{lemma}
    \label{lemma:sp-sp-chain}
    Let $\G$ be a DAG and $i$ be its node, then for $\forall j\in\dom_{\G}(i)$, we have $\dom_{\G}(j) \subseteq \dom_{\G}(i)$.
\end{lemma}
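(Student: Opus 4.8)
The plan is to unwind \Cref{def:effect-domination} for the two hypotheses $j\in\dom_\G(i)$ and $k\in\dom_\G(j)$, where $k\in\dom_\G(j)$ is an arbitrary element, and then to verify directly the two defining conditions for $k\in\dom_\G(i)$, namely $k\in\pa_\G(i)$ and $\ch_\G(i)\subseteq\ch_\G(k)$. The entire argument is a set-theoretic manipulation of children sets, so I do not expect a genuine obstacle; the only care needed is to keep track of which inclusion comes from which hypothesis and to apply each hypothesis in the correct direction.

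First I would record the two hypotheses explicitly. By \Cref{def:effect-domination}, $j\in\dom_\G(i)$ gives $j\in\pa_\G(i)$ together with $\ch_\G(i)\subseteq\ch_\G(j)$, while $k\in\dom_\G(j)$ gives $k\in\pa_\G(j)$ together with $\ch_\G(j)\subseteq\ch_\G(k)$. Chaining the two children-set inclusions immediately yields $\ch_\G(i)\subseteq\ch_\G(j)\subseteq\ch_\G(k)$, so $\ch_\G(i)\subseteq\ch_\G(k)$, which is the second condition required for $k\in\dom_\G(i)$.

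For the remaining condition $k\in\pa_\G(i)$, I would use $j\in\pa_\G(i)$, i.e. $i\in\ch_\G(j)$, together with the inclusion $\ch_\G(j)\subseteq\ch_\G(k)$ supplied by $k\in\dom_\G(j)$; this gives $i\in\ch_\G(k)$, equivalently $k\in\pa_\G(i)$. Having verified both defining conditions, \Cref{def:effect-domination} yields $k\in\dom_\G(i)$, and since $k\in\dom_\G(j)$ was arbitrary we conclude $\dom_\G(j)\subseteq\dom_\G(i)$.

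Alternatively, I could route the whole computation through \Cref{lemma:dom-equivalent}: its first claim restates $j\in\dom_\G(i)$ as $\hch_\G(i)\subseteq\ch_\G(j)$ and $k\in\dom_\G(j)$ as $\hch_\G(j)\subseteq\ch_\G(k)$, and then the chain $\hch_\G(i)\subseteq\ch_\G(j)\subseteq\hch_\G(j)\subseteq\ch_\G(k)$ (the middle inclusion being trivial) reads off $\hch_\G(i)\subseteq\ch_\G(k)$, which is exactly $k\in\dom_\G(i)$. Either route is elementary; the single point to watch is simply that the hypotheses are applied with the correct roles of dominator and dominated node.
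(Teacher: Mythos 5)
Your proof is correct, and your main argument (verifying $k\in\pa_\G(i)$ and $\ch_\G(i)\subseteq\ch_\G(k)$ by chaining the two children-set inclusions) is essentially the same transitivity argument the paper gives, which it phrases via the closed-children-set reformulation $\hch_\G(i)\subseteq\hch_\G(j)\subseteq\hch_\G(k)$ from \Cref{lemma:dom-equivalent} — exactly your alternative route. Nothing is missing.
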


\begin{proof}
    Let $k\in\dom_{\G}(j)$, then by definition we have $\hch_{\G}(j)\subset\hch_{\G}(k)$. We also know that $\hch_{\G}(i)\subset\hch_{\G}(j)$, so $\hch_{\G}(i)\subset\hch_{\G}(k)$, implying that $k\in\dom_{\G}(i)$. 
\end{proof}

\begin{lemma}
\label{prop:sp-respecting-inverse}
    If $\mM\in\mathcal{M}_{\dom}^0(\G)$, then $\mM^{-1}\in\mathcal{M}_{\dom}^0(\G)$. 
\end{lemma}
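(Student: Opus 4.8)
The plan is to recognise that the support constraint defining $\mathcal{M}_{\dom}^0(\G)$ makes $\mM$ ``lower triangular'' with respect to a suitable partial order, and then to exploit that the matrices obeying such a support constraint form an algebra closed under inversion. Concretely, define a relation $\preceq$ on $[d]$ by $j\preceq i$ iff $j\in\overline{\dom}_{\G}(i)$, so that $\overline{\mathcal{M}}_{\dom}(\G)$ of \Cref{def:effect-respecting-matrix} is precisely $\{\mM:\mM_{ij}\neq 0\Rightarrow j\preceq i\}$ and $\mathcal{M}_{\dom}^0(\G)$ is its invertible members. First I would verify that $\preceq$ is a partial order. Reflexivity is immediate from $i\in\overline{\dom}_{\G}(i)$. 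Antisymmetry follows from acyclicity of $\G$: if $i\neq j$ with $j\preceq i$ and $i\preceq j$, then $j\in\pa_{\G}(i)$ and $i\in\pa_{\G}(j)$, a $2$-cycle. For transitivity, suppose $k\preceq j\preceq i$ with all three distinct; then $k\in\dom_{\G}(j)$ and $j\in\dom_{\G}(i)$, and \Cref{lemma:sp-sp-chain} gives $\dom_{\G}(j)\subseteq\dom_{\G}(i)$, whence $k\in\dom_{\G}(i)$, i.e. $k\preceq i$. The degenerate cases $k=j$ and $j=i$ are trivial, and $k=i$ is excluded by acyclicity exactly as in the antisymmetry argument.

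Next I would show that $\overline{\mathcal{M}}_{\dom}(\G)$ is a unital subalgebra of $\R^{d\times d}$. It plainly contains $\mI$ and is closed under linear combinations; for closure under products, if $(\mM\mN)_{ij}\neq 0$ then some index $k$ has $\mM_{ik}\neq 0$ and $\mN_{kj}\neq 0$, so $k\preceq i$ and $j\preceq k$, and transitivity of $\preceq$ yields $j\preceq i$. Equivalently, fixing any linear extension of $\preceq$ and reordering the coordinates accordingly turns every member of $\overline{\mathcal{M}}_{\dom}(\G)$ into a genuinely lower-triangular matrix.

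It then remains to invert $\mM$ inside this algebra. In the chosen linear extension $\mM$ is lower triangular and invertible, so all its diagonal entries $\mM_{ii}$ are nonzero, and the diagonal part $\mD=\diag{\mM_{11},\dots,\mM_{dd}}$ is invertible and lies in $\overline{\mathcal{M}}_{\dom}(\G)$. Writing $\mM=\mD(\mI+\mN)$ with $\mN=\mD^{-1}(\mM-\mD)$, the matrix $\mN$ is supported strictly below the diagonal in the $\preceq$-order, hence strictly lower triangular in the linear extension and therefore nilpotent with $\mN^{d}=\bm{0}$. Consequently $(\mI+\mN)^{-1}=\sum_{t=0}^{d-1}(-\mN)^{t}$ is a finite sum of products of elements of $\overline{\mathcal{M}}_{\dom}(\G)$, so it lies in $\overline{\mathcal{M}}_{\dom}(\G)$, and thus $\mM^{-1}=(\mI+\mN)^{-1}\mD^{-1}\in\overline{\mathcal{M}}_{\dom}(\G)$. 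Since $\mM^{-1}$ is invertible, this is exactly the assertion $\mM^{-1}\in\mathcal{M}_{\dom}^0(\G)$.

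I expect the main obstacle to be the verification that $\preceq$ is genuinely transitive, and hence that $\overline{\mathcal{M}}_{\dom}(\G)$ is closed under multiplication: this is the one place where the combinatorial structure of effect domination really enters, and it is precisely what \Cref{lemma:sp-sp-chain} is designed to supply. Once the poset structure is in hand, the inversion argument is the standard finite Neumann series for triangular matrices and uses no further properties of $\G$.
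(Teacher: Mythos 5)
Your proof is correct, and it reaches the conclusion by a genuinely different route than the paper. The paper fixes a topological order so that $\mM$ is lower triangular, writes out the defining equations $\sum_j \mN_{ij}\mM_{j\ell}=0$ for $\mN=\mM^{-1}$ and $\ell\notin\overline{\dom}_{\G}(i)$, uses \Cref{lemma:sp-sp-chain} to discard the terms indexed by $j\in\overline{\dom}_{\G}(i)$, and then observes that the remaining coefficient block is an invertible principal submatrix of a triangular matrix, forcing $\mN_{ij}=0$ for $j\notin\overline{\dom}_{\G}(i)$. You instead package the same combinatorial input (\Cref{lemma:sp-sp-chain}, which gives transitivity of your relation $\preceq$, together with acyclicity for antisymmetry) into the statement that $\overline{\mathcal{M}}_{\dom}(\G)$ is the incidence algebra of a poset --- a unital subalgebra closed under products --- and then invert inside that algebra via the factorization $\mM=\mD(\mI+\mN)$ with $\mN$ nilpotent and the finite Neumann series. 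Both arguments rest on exactly the same two pillars (effect domination is ``transitive'' and the support pattern is triangularizable), so neither is deeper than the other; what your version buys is a reusable structural fact (closure of $\overline{\mathcal{M}}_{\dom}(\G)$ under multiplication, which the paper implicitly re-derives elsewhere, e.g.\ when composing effect-respecting maps in \Cref{lemma:dom-equivalence-relation}), at the cost of a slightly longer setup; the paper's version is more direct and self-contained for this single lemma. One small remark: in your transitivity check the case $k=i$ needs no exclusion, since $i\preceq i$ holds by reflexivity regardless, though your observation that this configuration cannot arise in a DAG is also fine.
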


\begin{proof}
    Assume WLOG that the nodes of $\G$ satisfy $i\in\pa_{\G}(j)\Rightarrow i<j$ (otherwise we can choose a different index of the nodes and correspondingly swap some rows and columns of $\mM$). Since $i\in\dom_{\G}(j)\Rightarrow i\in\pa_{\G}(j)$, it follows that $\mM$ must be lower triangular and the diagonal entries are nonzero.
    
    Let $\mN=\mM^{-1}$, then for $\forall i\in[d]$, we have
    \begin{equation}
        \label{eq:N-M-inverse}
        \sum_{j=1}^d \mN_{ij}\mM_{j\ell} = 0, \quad \forall\ell\notin\overline{\dom}_{\G}(i).
    \end{equation}
    Since $\mM\in\mathcal{M}_{\dom}^0(\G)$, we have $\mM_{j\ell}=0$ for $\forall j$ such that $\ell\notin\overline{\dom}_{\G}(j)$. By \Cref{lemma:sp-sp-chain}, if $j\in\overline{\dom}_{\G}(i)$, then $\ell\notin\overline{\dom}_{\G}(i)$ necessarily implies that $\ell\notin\overline{\dom}_{\G}(j)$. Hence the left hand side of \Cref{eq:N-M-inverse} is essentially a sum over $j\notin\overline{\dom}_{\G}(i)$, \emph{i.e.},
    \begin{equation}
        \notag
        \sum_{j\notin\overline{\dom}_{\G}(i)} \mN_{ij}\mM_{j\ell} = 0, \quad \forall\ell\notin\overline{\dom}_{\G}(i).
    \end{equation}
    Viewing the above as a system of linear equations in $\mN_{ij}, j\notin\overline{\dom}_{\G}(i)$, the coefficient matrix $\left(\mM_{j\ell}\right)_{j,\ell\in \notin\overline{\dom}_{\G}(i)}$ must be invertible since it is a sub-matrix of the invertible lower-triangular matrix $\mM$. As a result, we necessary have $\mN_{ij}=0, \forall j\notin\overline{\dom}_{\G}(i)$. Finally, $\mN=\mM^{-1}$ must be invertible, so $\mN\in\mathcal{M}_{\dom}^0(\G)$ as desired.
\end{proof}

\begin{lemma}
\label{lemma:dom-preserving-map-inverse}
    Suppose that $\psi:\R^d\mapsto\R^d$ is a diffeomorphism and $\G$ be a DAG, such that for $\forall i\in[d]$, $\psi_i(\vz)$ is a function of $\vz_{\overline{\dom}_{\G}(i)}$. Then for $\forall j\in[d]$, $(\psi^{-1})_j(\vv)$ is a function of $\vv_{\overline{\dom}_{\G}(j)}$. 
\end{lemma}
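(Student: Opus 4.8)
The plan is to reduce this nonlinear statement to its linear counterpart \Cref{prop:sp-respecting-inverse} by passing to Jacobians pointwise, and then to upgrade the resulting pointwise information on partial derivatives into a genuine statement about functional dependence. Throughout I assume, without loss of generality, a topological ordering of the nodes so that $i\in\pa_\G(j)\Rightarrow i<j$; since $\dom_\G(j)\subseteq\pa_\G(j)$ by definition, this makes every map in sight ``lower triangular'' with respect to the domination pattern, because $\psi_i$ then depends only on coordinates with index at most $i$.

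First I would reinterpret the hypothesis as a pattern condition on the Jacobian. Because $\psi_i(\vz)$ is a function of $\vz_{\overline{\dom}_\G(i)}$ only, the Jacobian matrix $D\psi(\vz)$ satisfies $\left(D\psi(\vz)\right)_{i\ell}=\partial\psi_i/\partial\vz_\ell=0$ whenever $\ell\notin\overline{\dom}_\G(i)$; that is, $D\psi(\vz)\in\overline{\mathcal M}_{\dom}(\G)$ at every point $\vz$ (cf.\ \Cref{def:effect-respecting-matrix}). Since $\psi$ is a diffeomorphism, $D\psi(\vz)$ is invertible for every $\vz$, so in fact $D\psi(\vz)\in\mathcal M_{\dom}^0(\G)$.

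Next I would invoke the inverse function theorem together with the linear lemma. Writing $\phi=\psi^{-1}$, which is defined on all of $\R^d$, the chain rule gives $D\phi(\vv)=\left[D\psi(\phi(\vv))\right]^{-1}$ for every $\vv$. Applying \Cref{prop:sp-respecting-inverse} to the invertible matrix $D\psi(\phi(\vv))\in\mathcal M_{\dom}^0(\G)$ shows that its inverse again lies in $\mathcal M_{\dom}^0(\G)$, hence $D\phi(\vv)\in\mathcal M_{\dom}^0(\G)$ for every $\vv$. In coordinates this reads
\begin{equation}
    \notag
    \frac{\partial\phi_j}{\partial\vv_\ell}(\vv)=0\quad\text{for all }\ell\notin\overline{\dom}_\G(j)\text{ and all }\vv\in\R^d.
\end{equation}

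Finally I would integrate this to conclude. Fixing $j$ and using that the domain $\R^d$ is convex (in particular path-connected along coordinate directions), the vanishing of $\partial\phi_j/\partial\vv_\ell$ for all $\ell\notin\overline{\dom}_\G(j)$ forces $\phi_j$ to be constant along every such direction, so $\phi_j(\vv)$ depends only on $\vv_{\overline{\dom}_\G(j)}$, which is the claim. The main obstacle is precisely this last passage from pointwise derivative information to functional independence: it is essential that the zero-pattern of the inverse holds \emph{uniformly} at every $\vv$ --- guaranteed because \Cref{prop:sp-respecting-inverse} controls the sparsity of the inverse of each individual matrix $D\psi(\vz)$ irrespective of $\vz$ --- and that the domain is connected, so that no disconnected-domain pathology can reintroduce a spurious dependence. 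The combinatorial heart of the argument, namely that the domination pattern is closed under matrix inversion, is exactly what \Cref{prop:sp-respecting-inverse} (built on the transitivity recorded in \Cref{lemma:sp-sp-chain}) supplies, so that the nonlinear case adds only the analytic step of differentiating and re-integrating.
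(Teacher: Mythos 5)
Your proposal is correct and follows essentially the same route as the paper: pass to the Jacobian, note that the hypothesis forces $\mJ_{\psi}(\vz)\in\mathcal{M}_{\dom}^0(\G)$ pointwise, apply \Cref{prop:sp-respecting-inverse} to the inverse Jacobian, and read off the sparsity pattern of $D(\psi^{-1})$. The only difference is that you spell out the final integration step (using connectedness of $\R^d$ to pass from vanishing partials to functional independence), which the paper leaves implicit.
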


\begin{proof}
    Let $\mJ_{\vz} = \mJ_{\psi}(\vz)$ be the Jacobian matrix of $\psi$. Since $\psi$ is a diffeomorphism, $\mJ_{\vz}$ is invertible for any $\vz\in\R^d$. Moreover, our assumption implies that $(\mJ_{\vz})_{ij}=0, \forall j\notin\overline{\dom}_{\G}(i)$, so $\mJ_{\vz}\in\mathcal{M}_{\dom}^0(\G)$. By \Cref{prop:sp-respecting-inverse}, $\mJ_{\vz}^{-1}\in\mathcal{M}_{\dom}^0(\G)$. But $\mJ_{\vz}^{-1}$ is exactly the Jacobian matrix of $\psi^{-1}$ at $\vv=\psi(\vz)$, hence it follows that $(\psi^{-1})_j(\vv)$ is only a function of $\vv_{\overline{\dom}_{\G}(j)}$, as desired.
\end{proof}

\begin{lemma}
    \label{lemma:dom-equivalence-relation}
    The binary relation $\sim_{\dom}$ defined in \Cref{def:dom-ambiguity} is an equivalence relation.
\end{lemma}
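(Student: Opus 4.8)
The plan is to verify directly the three defining properties of an equivalence relation for $\sim_{\dom}$ as given in \Cref{def:dom-ambiguity}: reflexivity, symmetry, and transitivity. Before doing so I would record one elementary but crucial fact: whenever $\pi$ is a graph isomorphism in the sense of \Cref{def:dom-ambiguity} (that is, $i\in\pa_{\G}(j)\Leftrightarrow\pi(i)\in\pa_{\hat{\G}}(\pi(j))$), it automatically preserves the effect-domination structure, i.e. $\pi\bigl(\overline{\dom}_{\G}(j)\bigr)=\overline{\dom}_{\hat{\G}}(\pi(j))$ for every $j$. This holds because $\pi$ carries $\pa_{\G}$ onto $\pa_{\hat{\G}}$ and hence $\ch_{\G}$ onto $\ch_{\hat{\G}}$, so the two defining conditions of $\dom_{\G}$ in \Cref{def:effect-domination}, namely $i\in\pa_{\G}(j)$ and $\ch_{\G}(j)\subseteq\ch_{\G}(i)$, are transported verbatim to $\hat{\G}$. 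Reflexivity is then immediate: take $\pi=\mathrm{id}$ and $\psi=\mathrm{id}$, whose $j$-th component $\vz_j$ depends only on $\vz_{\overline{\dom}_{\G}(j)}$ since $j\in\overline{\dom}_{\G}(j)$.

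For symmetry, suppose $(\vh,\G)\sim_{\dom}(\hat{\vh},\hat{\G})$ via $(\pi,\psi)$, so that $\mP_\pi\circ\hat{\vh}=\psi\circ\vh$. I would propose the witness $\pi^{-1}$ together with $\psi'=\mP_{\pi}^{-1}\circ\psi^{-1}\circ\mP_{\pi}$; inverting the original relation gives $\mP_{\pi^{-1}}\circ\vh=\psi'\circ\hat{\vh}$, while the graph condition for $\pi^{-1}$ is just the original one read backwards. The substantive point is the domain restriction on $\psi'$: using $(\mP_\pi\vw)_i=\vw_{\pi(i)}$ one computes $\psi'_j(\vw)=(\psi^{-1})_{\pi^{-1}(j)}(\mP_\pi\vw)$, and \Cref{lemma:dom-preserving-map-inverse} tells us that $(\psi^{-1})_{\pi^{-1}(j)}$ depends only on the coordinates indexed by $\overline{\dom}_{\G}(\pi^{-1}(j))$. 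Pulling these indices back through $\mP_\pi$ and invoking the preservation fact $\pi\bigl(\overline{\dom}_{\G}(\pi^{-1}(j))\bigr)=\overline{\dom}_{\hat{\G}}(j)$ shows that $\psi'_j(\vw)$ is a function of $\vw_{\overline{\dom}_{\hat{\G}}(j)}$, as required, and $\psi'$ is a diffeomorphism as a composition of diffeomorphisms.

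For transitivity, suppose $(\vh,\G)\sim_{\dom}(\hat{\vh},\hat{\G})$ via $(\pi_1,\psi_1)$ and $(\hat{\vh},\hat{\G})\sim_{\dom}(\tilde{\vh},\tilde{\G})$ via $(\pi_2,\psi_2)$. Composing the two graph conditions yields the witness permutation $\pi_2\circ\pi_1$, and eliminating $\hat{\vh}$ between $\mP_{\pi_1}\circ\hat{\vh}=\psi_1\circ\vh$ and $\mP_{\pi_2}\circ\tilde{\vh}=\psi_2\circ\hat{\vh}$ gives $\mP_{\pi_2\circ\pi_1}\circ\tilde{\vh}=\psi_3\circ\vh$ with $\psi_3=\bigl(\mP_{\pi_1}\circ\psi_2\circ\mP_{\pi_1}^{-1}\bigr)\circ\psi_1$, using the composition rule $\mP_{\pi_1}\mP_{\pi_2}=\mP_{\pi_2\circ\pi_1}$. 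By the same pull-back argument as in the symmetry step, the conjugate $\Phi=\mP_{\pi_1}\circ\psi_2\circ\mP_{\pi_1}^{-1}$ is dom-respecting for $\G$, i.e. $\Phi_j$ depends only on coordinates in $\overline{\dom}_{\G}(j)$. It then remains to see that a composite of two dom-respecting maps stays dom-respecting: $(\psi_3)_j=\Phi_j\circ\psi_1$ depends on $\vz$ only through $(\psi_1)_k(\vz)$ for $k\in\overline{\dom}_{\G}(j)$, each of which depends on $\vz_{\overline{\dom}_{\G}(k)}$, so $(\psi_3)_j$ depends on $\vz_{\bigcup_{k\in\overline{\dom}_{\G}(j)}\overline{\dom}_{\G}(k)}$. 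Here \Cref{lemma:sp-sp-chain} closes the argument, since it guarantees $\overline{\dom}_{\G}(k)\subseteq\overline{\dom}_{\G}(j)$ for every $k\in\overline{\dom}_{\G}(j)$, so the union collapses back to $\overline{\dom}_{\G}(j)$.

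I expect the main obstacle to be organizational rather than conceptual: keeping the permutation-composition conventions consistent (the identity $\mP_\sigma\mP_\pi=\mP_{\pi\circ\sigma}$ under the convention $(\mP_\pi\vw)_i=\vw_{\pi(i)}$) while simultaneously tracking how the index sets $\overline{\dom}$ transform under $\pi$ and $\pi^{-1}$. The two genuinely non-trivial inputs, namely that inverses of dom-respecting diffeomorphisms remain dom-respecting (\Cref{lemma:dom-preserving-map-inverse}) and that $\dom$ is closed ``upward'' along itself (\Cref{lemma:sp-sp-chain}), are precisely what power the symmetry and transitivity steps; once the preservation of $\overline{\dom}$ under graph isomorphisms is established, the remaining verifications are routine bookkeeping.
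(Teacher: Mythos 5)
Your proof is correct and follows essentially the same route as the paper's: identity witnesses for reflexivity, the conjugated inverse $\mP_{\pi}^{-1}\circ\psi^{-1}\circ\mP_{\pi}$ powered by \Cref{lemma:dom-preserving-map-inverse} for symmetry, and the composed conjugate powered by \Cref{lemma:sp-sp-chain} for transitivity. Your explicit statement that a graph isomorphism carries $\overline{\dom}_{\G}$ onto $\overline{\dom}_{\hat{\G}}$ is a point the paper uses only implicitly, and your index bookkeeping there is in fact slightly more careful than the paper's.
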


\begin{proof}
    It is obvious that $(\vh,\G) \sim_{\dom} (\vh,\G)$ holds for any model $(\vh,\G)$.\\

    Suppose that $(\vh_1,\G_1) \sim_{\dom} (\vh_2,\G_2)$, then there exists a permutation $\pi$ on $[d]$ and a diffeomorphism $\psi:\R^d\mapsto\R^d$ where $\psi_i(\vz)$ is a function of $\vz_{\overline{\dom}_{\G_1}(i)}$, such that $i\in\pa_{\G_1}(j)\Leftrightarrow \pi(i)\in\pa_{\G_2}(\pi(j))$ and $\mP_{\pi}\circ\vh_2=\psi\circ\vh_1$. Then we can write $\mP_{\pi}^{-1}\circ\vh_1 = \hat{\psi}\circ\vh_2$ 
    where $\hat{\psi}=\mP_{\pi}^{-1}\circ\psi^{-1}\circ\mP_{\pi}$. By \Cref{lemma:dom-preserving-map-inverse}, we know that $\left(\psi^{-1}\right)_j(\vv)$ is a function of $\vv_{\overline{\dom}_{\G_1}(j)}$, so $(\hat{\psi})_j$ is a function of $\vv_{\pi(\overline{\dom}_{\G_1}(j))}=\vv_{\overline{\dom}_{\G_2}(j)}$, implying that $(\vh_2,\G_2) \sim_{\dom} (\vh_1,\G_1)$.\\

    Finally, let $(\vh_1,\G_1) \sim_{\dom} (\vh_2,\G_2)$ and $(\vh_2,\G_2) \sim_{\dom} (\vh_3,\G_3)$, then we can write
    \begin{equation}
        \notag
        \mP_{\pi}\circ\vh_2 = \psi\circ\vh_1 \quad \text{and}\quad \mP_{\hat{\pi}}\circ\vh_3 = \hat{\psi}\circ\vh_2
    \end{equation}
    where: for $\forall i\in[d]$, $\psi_i(\vz)$ is a function of $\vz_{\overline{\dom}_{\G_1}(i)}$, $\hat{\psi}_i(\vz)$ is a function of $\vz_{\overline{\dom}_{\G_2}(i)}$, $i\in\pa_{\G_1}(j)\Leftrightarrow \pi(i)\in\pa_{\G_2}(\pi(j))$ and $i\in\pa_{\G_2}(j)\Leftrightarrow \hat{\pi}(i)\in\pa_{\G_2}(\hat{\pi}(j))$. Then, we can write
    \begin{equation}
        \notag
        \mP_{\pi}\circ\mP_{\hat{\pi}}\circ\vh_3 = \mP_{\pi}\circ\hat{\psi}\circ\mP_{\pi}^{-1}\circ\psi\circ\vh_1.
    \end{equation}
    Since $\hat{\psi}_i(\vz)$ is a function of $\vz_{\overline{\dom}_{\G_2}(i)}$, we deduce that $\left(\mP_{\pi}\circ\hat{\psi}\circ\mP_{\pi}^{-1}\right)_i(\vz)$ is a function of $\vz_{\overline{\dom}_{\G_1}(i)}$. Hence, $\left(\mP_{\pi}\circ\hat{\psi}\circ\mP_{\pi}^{-1}\circ\psi\right)_i(\vz) = \left(\mP_{\pi}\circ\hat{\psi}\circ\mP_{\pi}^{-1}\right)_i\left(\psi(\vz)\right)$ is a function of $\psi_{\overline{\dom}_{\G_1}(i)}(\vz)$. The definition of $\psi$ implies that for each $j\in\overline{\dom}_{\G_1}(i)$, $\psi_j(\vz)$ is a function of $\vz_{\overline{\dom}_{\G_1}(j)}$. By \Cref{lemma:sp-sp-chain}, we have $\cup_{j\in\overline{\dom}_{\G_1}(i)}\overline{\dom}_{\G_1}(j) \subseteq\overline{\dom}_{\G_1}(i)$. Hence $\left(\mP_{\pi}\circ\hat{\psi}\circ\mP_{\pi}^{-1}\circ\psi\right)_i(\vz)$ is still a function of $\vz_{\overline{\dom}_{\G_1}(i)}$.
    Moreover, we also have $i\in\pa_{\G_1}(j)\Leftrightarrow \pi(i)\in\pa_{\G_2}(\pi(j))\Leftrightarrow \hat{\pi}\circ\pi(i)\in\pa_{\G_2}(\hat{\pi}\circ\pi(j))$, so by definition, $(\vh_1,\G_1) \sim_{\dom} (\vh_3,\G_3)$, as desired.
\end{proof}

\section{Omitted proofs from \Cref{sec:linear} and \Cref{sec:algorithm}}
\label{appsec:omitted-proof}

\subsection{Proof of \Cref{thm:linear-main-thm}}
\label{proof:linear-main-thm}

According to the assumption, we have that $\epsilon = \mB_k\mH\vx$ and $\hat{\epsilon}=\hat{\mB}_k\hat{\mH}\vx$, so that $\epsilon = \mB_k\mH(\hat{\mB}_k\hat{\mH})^{\dagger}\hat{\epsilon}, \forall k\in[K]$. By \Cref{lemma:asmp-no-gaussian}, we know that for each $k$, $\mP_k := \mB_k\mH(\hat{\mB}_k\hat{\mH})^{\dagger}$ is a signed permutation matrix, so that $\epsilon=\mP_k\hat{\epsilon}$. Since for any $i\neq j$, $\hat{\epsilon}_i \overset{d}{\neq} \hat{\epsilon}_j$, we must have $\absx{\mP}_1=\absx{\mP}_2=\cdots=\absx{\mP}_K=:\mP$ and $\epsilon = \mP\hat{\epsilon}$, where $\absx{M}$ denotes the resulting matrix by taking the absolute value of all entries in $\mM$. Thus, we can WLOG assume that $\epsilon=\hat{\epsilon}$, since otherwise we can permute the noise variables $\hat{\epsilon}$, and also permute the rows of $\mB_k$ correspondingly. In other words, suppose that the permutation matrix $\absx{\mP}$ has $\absx{\mP}_{k_i,i}=1, i\in[d]$, then we can assign to each node $i$ in $\hat{\G}$ a new index $k_i$ and work with the new indices.

In this case, by \Cref{lemma:asmp-no-gaussian} we have $\mB_k\mH=\bm{\Sigma}_k\hat{\mB}_k\hat{\mH},\forall k\in[K]$ or equivalently $\bm{\Sigma}_k\hat{\mB}_k=\mB_k\mT$, where $\mT = \mH\hat{\mH}^{\dagger} \in\R^{d\times d}$, and $\bm{\Sigma}_k$ is a diagonal matrix with diagonal entries in $\{+1,-1\}$. Let $\hat{\hat{\mB}}_k = \bm{\Sigma}_k\hat{\mB}_k$, then the rows of $\hat{\hat{\mB}}_k$ equals (up to sign) to the rows of $\hat{\mB}_k$. 

To summarize, we now know that i) $\hat{\hat{\mB}}_k = \mB_k\mT, k\in[K]$, ii) $(\mB_k)_{ij} \neq 0 \Leftrightarrow j\in\hpa_{\G}(i)$, and similarly, $(\hat{\hat{\mB}}_k)_{ij} \neq 0 \Leftrightarrow j\in\hpa_{\hat{\G}}(i)$, and iii) Both $\left\{\mB_k\right\}$ and $\left\{\hat{\hat{\mB}}_k\right\}$ satisfy the node-level non-degeneracy assumption \Cref{asmp:independent-row}. For any two such matrices that satisfy such a set of conditions, it must necessarily be true that $\G=\hat{\G}$.
\begin{lemma}[Graph Identifiability]
Consider any two sets matrices $\{\hat{\hat{\mB}}_k\}_{k\in [K]}$ and $\{\mB_k\}_{k\in [K]}$ and associated graphs $\G, \hat{\G}$. If these sets and graphs satisfy that:
    \begin{enumerate}
    \item \label{cond:first} $\hat{\hat{\mB}}_k = \mB_k\mT, k\in[K]$;
    \item \label{cond:second} $(\mB_k)_{ij} \neq 0 \Leftrightarrow j\in\hpa_{\G}(i)$, and similarly, $(\hat{\hat{\mB}}_k)_{ij} \neq 0 \Leftrightarrow j\in\hpa_{\hat{\G}}(i)$.
    \item \label{cond:third} Both $\left\{\mB_k\right\}$ and $\left\{\hat{\hat{\mB}}_k\right\}$ satisfy the node-level non-degeneracy assumption \Cref{asmp:independent-row}.
    \end{enumerate}
then it must hold that $\G=\hat{\G}$.
\end{lemma}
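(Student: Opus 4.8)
The plan is to reduce the entire statement to a purely linear-algebraic fact about the fixed matrix $\mT$ and then read off the graph by induction along a topological order. First I would record the consequence of conditions \ref{cond:second} and \ref{cond:third}: for each node $i$ the row span $\mathcal{W}_i := \spanl{(\mB_k)_i : k\in[K]}$ is contained in the coordinate subspace with support $\hpa_{\G}(i)$ (by the support pattern) and has dimension $\absx{\pa_{\G}(i)}+1 = \absx{\hpa_{\G}(i)}$ (by node-level non-degeneracy); hence $\mathcal{W}_i$ \emph{equals} that coordinate subspace, and likewise $\hat{\mathcal{W}}_i := \spanl{(\hat{\hat{\mB}}_k)_i : k\in[K]}$ equals the coordinate subspace supported on $\hpa_{\hat{\G}}(i)$. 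Fixing any topological order of $\G$ makes every $\mB_k$ lower triangular with nonzero diagonal (since $i\in\hpa_{\G}(i)$), so each $\mB_k$ is invertible; the same holds for each $\hat{\hat{\mB}}_k$, and therefore the matrix $\mT$ of condition \ref{cond:first}, which equals $\mB_k^{-1}\hat{\hat{\mB}}_k$ for every $k$, is invertible. Row-wise, condition \ref{cond:first} reads $(\hat{\hat{\mB}}_k)_i = (\mB_k)_i\mT$, so that right-multiplication by $\mT$ carries $\mathcal{W}_i$ onto $\hat{\mathcal{W}}_i$; equivalently, the rows $\{\mT_{j,:} : j\in\hpa_{\G}(i)\}$ form a basis of the coordinate subspace supported on $\hpa_{\hat{\G}}(i)$. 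In particular $\absx{\hpa_{\G}(i)} = \absx{\hpa_{\hat{\G}}(i)}$ for every $i$.

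Next I would relabel the nodes so that $j\in\pa_{\G}(i)\Rightarrow j<i$ and prove, by strong induction on $i$, the conjunction (a) $\hpa_{\hat{\G}}(i)=\hpa_{\G}(i)$ and (b) $\mT_{i,:}$ is supported on $\{1,\dots,i\}$. For the step, the parents $j\in\pa_{\G}(i)$ all satisfy $j<i$, so by (b) their rows $\mT_{j,:}$ lie in the coordinate subspace on $\{1,\dots,i-1\}$; since these $\absx{\pa_{\G}(i)}$ rows together with $\mT_{i,:}$ span the $\absx{\hpa_{\hat{\G}}(i)}$-dimensional coordinate subspace on $\hpa_{\hat{\G}}(i)$, a dimension count on its intersection with the subspace on $\{1,\dots,i-1\}$ forces $\hpa_{\hat{\G}}(i)$ to contain at most one index outside $\{1,\dots,i-1\}$, which must be $i$ itself. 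This yields $\hpa_{\hat{\G}}(i)\subseteq\{1,\dots,i\}$ and hence (b). With (b) in hand for all indices up to $i$, the top-left block of $\mT$ is lower triangular with nonzero diagonal, so each $\mT_{j,:}$ with $j\in\hpa_{\G}(i)$ has its largest nonzero entry exactly at position $j$; the support of the span of these rows is the union of their supports, which contains $\hpa_{\G}(i)$. Thus $\hpa_{\G}(i)\subseteq\hpa_{\hat{\G}}(i)$, and equality of cardinalities gives (a). Completing the induction shows $\pa_{\hat{\G}}(i)=\pa_{\G}(i)$ for all $i$, which is exactly $\G=\hat{\G}$.

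The main obstacle I anticipate is that the topological orders of $\G$ and $\hat{\G}$ are a priori unrelated, so that intersecting the coordinate subspaces $\hat{\mathcal{W}}_i$ naively entangles information from both graphs. The device that overcomes this is to bootstrap triangularity of $\mT$ and the set equality \emph{together} in a single induction: part (b) establishes, on the fly, that the chosen order of $\G$ is also topological for $\hat{\G}$, which is precisely what licenses the leading-entry (echelon) argument used to prove part (a). Two routine points to verify along the way are that the support of a subspace equals the union of the supports of any spanning set (used in the echelon step) and that the diagonal of $\mT$ is nonzero (immediate from invertibility and lower triangularity).
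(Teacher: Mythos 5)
Your proof is correct, and it takes a genuinely different route from the paper's. The paper argues by induction on the number of nodes $d$: it peels off a sink node of $\G$, verifies that the principal $(d-1)\times(d-1)$ submatrices $\mB_k^-$, $\hat{\hat{\mB}}_k^-$ inherit all three hypotheses, invokes the induction hypothesis to conclude $\G^-=\hat{\G}^-$, and then treats the last node by two separate dimension counts (one showing $\ch_{\hat{\G}}(d)=\emptyset$, one showing $\pa_{\G}(d)=\pa_{\hat{\G}}(d)$ by constructing coefficients $\lambda_{kj}$ that annihilate the first $\ell$ coordinates, where $\ell$ is the smallest index in the symmetric difference of the parent sets). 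You instead extract, once and for all, the cleaner reformulation that $\left\{\mT_{j,:}:j\in\hpa_{\G}(i)\right\}$ is a basis of the coordinate subspace supported on $\hpa_{\hat{\G}}(i)$ --- this is precisely where conditions (2) and (3) enter, by forcing each row span $\spanl{(\mB_k)_i:k\in[K]}$ to fill out its coordinate subspace rather than merely sit inside it --- and then run a single induction over the nodes in a topological order of $\G$, establishing lower-triangularity of $\mT$ and $\hpa_{\G}(i)=\hpa_{\hat{\G}}(i)$ simultaneously. What your approach buys is transparency and economy: all the combinatorics is concentrated in the one matrix $\mT$, the final step is an elementary pivot argument, and you obtain the triangularity of $\mT$ (which the paper only derives after the lemma, when it deduces the effect-domination structure) as a free by-product. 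What the paper's approach buys is that the intermediate objects remain causal models of the same type, so each induction step reuses the hypotheses verbatim. The two routine facts you flag --- that the pivots $\mT_{jj}$ are nonzero (which at stage $i$ of your induction follows from linear independence of the first $i$ rows together with their nested supports, not yet from global triangularity, so state it that way) and that the support of a span is the union of the supports of a spanning set --- both check out, and I see no gap.
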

\begin{proof}
We prove this via induction on the size of the graph $d$. Note that here $\G=\hat{\G}$ is not up to permutation and our statement is equivalent to $\pa_{\G}(i)=\pa_{\hat{\G}}(i), \forall i\in[d]$.

If $d=1$, \emph{i.e.}, $\G=\hat{\G}$ obviously holds since both are graphs with only $1$ node.

Suppose that for all graphs $\G$ of size $d-1$, the graph $\hat{\G}$ satisfying all given assumptions must necessarily be equal to $\G$. Now, we consider the case that $\G$ has $d$ nodes. WLOG we can assume that the nodes of $\G$ are properly indexed such that $i\in\pa_{\G}(j)\Rightarrow i<j$, so $\mB_k, k\in[K]$ are lower-triangular matrices. (However, it is currently unknown whether $\hat{\hat{\mB}}_k$ are also lower-triangular.)

By our assumption that $i\in\pa_{\G}(j)\Rightarrow i<j$, the node $d$ in $\G$ has no child. Thus we can write
\begin{equation}
    \notag
    \mB_k = \left(
    \begin{array}{cc}
        \mB_k^- & \bm{0} \\
        \vb_k & c_k
    \end{array}
    \right),
    \mT = \left(
    \begin{array}{cc}
        \mT^- & \times \\
        \times & \times
    \end{array}
    \right)
    \text{ and }
    \hat{\hat{\mB}}_k = \mB_k\mT = \left(
    \begin{array}{cc}
        \hat{\hat{\mB}}_k^- & \times \\
        \times & \times
    \end{array}
    \right)
\end{equation}
where $\mB_k^-,\mT,\hat{\hat{\mB}}_k^- = \mB_k^-\mT^-\in\R^{(d-1)\times(d-1)}, \vb_k\in\R^{d-1},c_k\in\R$ and $\times$ denotes irrelevant entries. 

Let $\mA_k^-,\hat{\mA}_k^-,\bm{\Omega}_k^-$ and $\hat{\bm{\Omega}}_k^-$ be the top-left $(d-1)\times(d-1)$ sub-matrices of $\mA_k,\hat{\mA},\bm{\Omega}_k$ and $\hat{\bm{\Omega}}_k$ respectively, and $\G^-$ and $\hat{\G}^-$ are graphs obtained by deleting node $d$ and all related edges from $\G$ and $\hat{\G}$. Then it is easy to see that 
\begin{equation}
    \label{eq:A-reduced-nonzeros}
    \left(\mA_k^-\right)_{ij}\neq 0 \Leftrightarrow j\in\pa_{\G^-}(i)\quad\text{and}\quad\left(\hat{\mA}_k^-\right)_{ij}\neq 0 \Leftrightarrow j\in\pa_{\hat{\G}^-}(i).
\end{equation}
Moreover,
\begin{equation}
    \notag
    \left(
    \begin{array}{cc}
        \mB_k^- & \bm{0} \\
        \vb_k & c_k
    \end{array}
    \right) = \mB_k = \bm{\Omega}_k^{-\frac{1}{2}}\left(\mI-\mA_k\right) = \left(\begin{array}{cc}
        \left(\bm{\Omega}_k^-\right)^{-\frac{1}{2}} & \bm{0} \\
        \bm{0} & \times
    \end{array}\right)
    \left(\begin{array}{cc}
        \mI-\mA_k^- & \times \\
        \times & \times
    \end{array}\right) =
    \left(\begin{array}{cc}
        \left(\bm{\Omega}_k^-\right)^{-\frac{1}{2}}(\mI-\mA_k^-) & \times \\
        \times & \times
    \end{array}\right)
\end{equation}
so that $\mB_k^-=\left(\bm{\Omega}_k^-\right)^{-\frac{1}{2}}(\mI-\mA_k^-)$. Similarly, we have $\hat{\hat{\mB}}_k^-=\left(\hat{\bm{\Omega}}_k^-\right)^{-\frac{1}{2}}(\mI-\hat{\mA}_k^-)$.

We can also verify that $\left\{\mB_k^-\right\}_{k=1}^K$ and $\left\{\hat{\hat{\mB}}_k^-\right\}_{k=1}^K$ are node-level independent in the sense of \Cref{asmp:independent-row}. 
We only prove this for $\left\{\hat{\hat{\mB}}_k\right\}_{k=1}^K$; the arguments used for $\left\{\mB_k\right\}_{k=1}^K$ are exactly the same as the first case considered below. Now for each $i\in[d-1]$, let $\mR_i\in\R^{K\times d}$ be the matrix whose $k$-th row is the $i$-th row of $\hat{\hat{\mB}}_k$, and $\mR_i^-\in\R^{K\times(d-1)}$ be the matrix whose $k$-th row is the $i$-th row of $\hat{\hat{\mB}}_k^-$, then obviously $\mR_i$ is of form $\left[\mR_i^-,\vr_i\right]$. We consider two cases:
\begin{itemize}
    \item \textbf{Case 1.$d\notin\pa_{\hat{\G}}(i)$} This means that the last entry of the $i$-th row of $\hat{\hat{\mB}}_k$ is zero. Thus $\vr_i=\bm{0}$, and $\rank{\mR_i^-}=\rank{\mR_i}=\absx{\hpa_{\hat{\G}}(i)}=\absx{\hpa_{\hat{\G}^-}(i)}$, where the second equality follows from \Cref{asmp:independent-row}. 
    \item \textbf{Case 2.$d\in\pa_{\hat{\G}}(i)$} In this case we have $\rank{\mR_i^-}\geq\rank{\mR_i} -1=\absx{\hpa_{\hat{\G}}(i)}-1=\absx{\hpa_{\hat{\G}^-}(i)}$. Due to our assumption on $\hat{\mA}_k$ and the relationship $\hat{\hat{\mB}}_k^-=\left(\hat{\bm{\Omega}}_k^-\right)^{-\frac{1}{2}}(\mI-\hat{\mA}_k^-)$, we know that each row of $\mR_i^-$, namely the $i$-th row of some $\hat{\hat{\mB}}_k$ only has $\absx{\hpa_{\hat{\G}}(i)}-1=\absx{\hpa_{\hat{\G}^-}(i)}$ non-zero entries, so that $\rank{\mR_i^-}=\absx{\hpa_{\hat{\G}^-}(i)}$ holds.
\end{itemize}
Since we have shown that the matrices $\mB_k^-$ and $\hat{\hat{\mB}}_k^-$ satisfy the three properties that we assume for induction with $\mT$ replaced by $\mT^-$ and $\G,\hat{\G}$ replaced by $\G^-,\hat{\G}^-$ respectively, 
by induction hypothesis, we can thus deduce that $\G^-=\hat{\G}^-$. To prove $\G=\hat{\G}$ it remains to show that the dependency of node $d$ on the remaining nodes are the same in $\G$ and $\hat{\G}$.

\textbf{\textit{First}}, we show that $\ch_{\hat{\G}}(d)=\emptyset$. Suppose in contrary that there is some $i\in\ch_{\hat{\G}}(d)$, then $\absx{\mathrm{pa}_{\G}(i)} = \absx{\mathrm{pa}_{\G^-}(i)} = \absx{\mathrm{pa}_{\hat{\G}^-}(i)} = \absx{\mathrm{pa}_{\hat{\G}}(i)}-1$. Recalling that $\left(\mB\right)_i$ denotes the $i$-th row of matrix $\mB$, we have
\begin{equation}
    \label{eq:linear-proof-wrong-dim}
    \begin{aligned}
        \mathrm{dim}\left(\mathrm{span}\left\langle \left(\hat{\hat{\mB}}_{k}\right)_i: 1\leq k\leq K\right\rangle\right) &= \mathrm{dim}\left(\mathrm{span}\left\langle \left(\mB_{k}\right)_i: 1\leq k\leq K\right\rangle\right)\\
        &\leq \absx{\mathrm{pa}_{\G}(i)} + 1 < \absx{\mathrm{pa}_{\hat{\G}}(i)} +1,
    \end{aligned}
\end{equation}
where the first inequality follows from $\left(\hat{\hat{\mB}}_{k}\right)_i=\left(\mB_{k}\right)_i\mT$ and \Cref{lemma:linalg-invertible-transformation}, the second holds since each $\left(\mB_{k}\right)_i$ has nonzero elements only at coordinates in $j\in\hpa_{\G}(i)$, and the last one holds since $\absx{\mathrm{pa}_{\G}(i)} = \absx{\mathrm{pa}_{\hat{\G}}(i)}-1$. 
However, \Cref{eq:linear-proof-wrong-dim} contradicts the non-degeneracy condition \Cref{asmp:independent-row} that we assume for matrices $\hat{\mB}_k, k\in[K]$ in the statement of the theorem. Therefore we have $\mathrm{ch}_{\hat{\G}}(d)=\emptyset = \mathrm{ch}_{\G}(d)$. 

\textbf{\textit{Second}}, by a similar argument comparing the number of nonzero elements in the last row of $\mB_k$ and $\hat{\hat{\mB}}_k$, we can also deduce that
\begin{equation}
    \notag
    \label{eq:last-row-same-nonzero}
    \absx{\pa_{\G}(d)} = \absx{\pa_{\hat{\G}}(d)}.
\end{equation} 
Indeed, since $\left(\hat{\hat{\mB}}_{k}\right)_d=\left(\mB_{k}\right)_d\mT$, by \Cref{lemma:linalg-invertible-transformation} we have 
\begin{equation}
    \notag
    \mathrm{dim}\left(\mathrm{span}\left\langle \left(\hat{\hat{\mB}}_{k}\right)_d: 1\leq k\leq K\right\rangle\right) = \mathrm{dim}\left(\mathrm{span}\left\langle \left(\mB_{k}\right)_d: 1\leq k\leq K\right\rangle\right)
\end{equation}
However, since we assume that \Cref{asmp:independent-row} is satisfied for $\left\{\mB_k\right\}_{k=1}^K$ and $\left\{\hat{\mB}_k\right\}_{k=1}^K$, we know that the LHS and RHS of the above equation are equal to $\absx{\pa_{\G}(d)}+1$ and $\absx{\pa_{\hat{\G}}(d)}+1$ respectively, implying \Cref{eq:last-row-same-nonzero}.

\textbf{\textit{Third}}, we show that $\mathrm{pa}_{\G}(d) = \mathrm{pa}_{\hat{\G}}(d)$. Suppose the contrary, let $\ell$ be the smallest element in $\mathrm{pa}_{\G}(d)\Delta\mathrm{pa}_{\hat{\G}}(d)$, where $A\Delta B := (A\setminus B)\cup(B\setminus A)$. Recall that while $\G$ and $\hat{\G}$ are originally not symmetric as nodes are topologically sorted according to $\G$, now we have shown that $\G^-\equiv\hat{\G}^-$ and that $\mathrm{ch}_{\G}(d)=\mathrm{ch}_{\hat{\G}}(d)=\emptyset$, so we can assume WLOG 
that $\ell\in\mathrm{pa}_{\G}(d)$ and $\ell\notin\mathrm{pa}_{\hat{\G}}(d)$, and the other case can be handled symmetrically. Since $\mB_k$ is lower triangular and $\left(\mB_k\right)_{jj} = \left(\Omega_k\right)_{jj}^{-\frac{1}{2}}\neq 0, \forall j\in[d]$, the top-left $\ell\times\ell$ sub-matrix of $\mB_k$, which we denote by $\left[\mB_k\right]_{\ell,\ell}$, must be invertible. This implies that $\left\{\left[\mB_k\right]_{\ell,\ell}^\top\bm{\lambda}:\bm{\lambda}\in\R^{\ell}\right\} = \R^\ell$, so we can always find coefficients $\lambda_{kj}, j\in[\ell]$ such that the first $\ell$ entries of the vector $(\mB_k)_d-\sum_{i=1}^{\ell}\lambda_{ki}(\mB_k)_i\in\R^d$ are all zero. Since $\hat{\hat{\mB}}_k=\mB_k\mT$ and $\mT$ is invertible, we have $\left(\hat{\hat{\mB}}_k\right)_d-\sum_{j=1}^{\ell}\lambda_{kj}\left(\hat{\hat{\mB}}_k\right)_j = \left(\left(\mB_k\right)_d-\sum_{j=1}^{\ell}\lambda_{kj}\left(\mB_k\right)_j\right)\mT, \forall k\in[K]$ and
\begin{equation}
    \notag
    \begin{aligned}
        \mathrm{dim}\left(\mathrm{span}\left\langle \left(\hat{\hat{\mB}}_k\right)_d-\sum_{j=1}^{\ell}\lambda_{kj}\left(\hat{\hat{\mB}}_k\right)_j: k\in[K] \right\rangle\right) &= \mathrm{dim}\left(\mathrm{span}\left\langle \left(\mB_k\right)_d-\sum_{j=1}^{\ell}\lambda_{kj}\left(\mB_k\right)_j: k\in[K] \right\rangle\right) \\
        &\leq \absx{\mathrm{pa}_{\G}(d)\setminus[\ell]}+1.
    \end{aligned}
\end{equation}
Here, the inequality holds because for any coordinate $t\in[d]$,
\begin{equation}
    \label{eq:substract-linear-combination}
    \left(\left(\mB_k\right)_d-\sum_{j=1}^{\ell}\lambda_{kj}\left(\mB_k\right)_j\right)_t = \left\{
    \begin{aligned}
        0 &\quad \text{if } t\leq\ell \\
        \left(\mB_k\right)_{d,t} &\quad \text{otherwise}
    \end{aligned}
    \right.
\end{equation}
where we note that $\mB_k$ is lower-triangular and thus $\left(\mB_k\right)_{j,t}=0, \forall j\leq\ell, t>\ell$. This implies that $\left(\left(\mB_k\right)_d-\sum_{j=1}^{\ell}\lambda_{kj}\left(\mB_k\right)_j\right)_t$ is nonzero only if $t>\ell$ and $t\in\pa_{\G}(d)$.

On the other hand, let $S=\left(\mathrm{pa}_{\hat{\G}}(d)\cap[\ell]^c\right)\cup\{d\}$, then  
\begin{equation}\notag
    \begin{aligned}
        \mathrm{dim}\left(\mathrm{span}\left\langle \left(\hat{\hat{\mB}}_k\right)_d-\sum_{j=1}^{\ell}\lambda_{kj} \left(\hat{\hat{\mB}}_k\right)_{j}: k\in[K] \right\rangle\right) 
        &\geq \mathrm{dim}\left(\mathrm{span}\left\langle \left(\left(\hat{\hat{\mB}}_k\right)_d-\sum_{j=1}^{\ell}\lambda_{kj}\left(\hat{\hat{\mB}}_k\right)_j\right)_S: k\in[K] \right\rangle\right) \\
        &= \mathrm{dim}\left(\mathrm{span}\left\langle \left(\left(\hat{\hat{\mB}}_k\right)_d\right)_S: k\in[K] \right\rangle\right) = |S|.
    \end{aligned}
\end{equation}
where we recall that $\vu_S$ denotes the vector $\left(u_i:i\in S\right)\in\R^{\absx{S}}$. Here the first equality holds due to the same reason as \Cref{eq:substract-linear-combination}, and the second follows from \Cref{asmp:independent-row}. To see why this is the case, note that \Cref{asmp:independent-row} implies that the $K\times\left(\absx{\pa_{\G}(d)}+1\right)$ having $\left((\mB_k)_d\right)_{\hpa_{\G}(d)}$ as the $k$-th row has full column rank, so that the sub-matrix obtained by extracting columns corresponding to the node set $S$ also has full column rank.

We have shown that $\absx{\hpa_{\hat{\G}}(d)\cap[\ell]^c} = \absx{S} \leq \absx{\pa_{\G}(d)\cap[\ell]^c}+1 = \absx{\hpa_{\G}(d)\cap[\ell]^c}$. On the other hand, recall that by our choice of $\ell$, we have $\absx{\hpa_{\G}(d)\cap[\ell-1]} = \absx{\hpa_{\hat{\G}}(d)\cap[\ell-1]}$ and $\ell\in\hpa_{\G}(d)\setminus\hpa_{\hat{\G}}(d)$. Putting these together, we have $\absx{\hpa_{\G}(d)}>\absx{\hpa_{\hat{\G}}(d)}$. However, we know from \Cref{eq:last-row-same-nonzero} that $\absx{\mathrm{pa}_{\G}(d)} = \absx{\mathrm{pa}_{\hat{\G}}(d)}$, leading to a contradiction. Hence, such $\ell$ shouldn't exist and we must have $\mathrm{pa}_{\G}(d) = \mathrm{pa}_{\hat{\G}}(d)$, completing the induction step for graphs of size $d$.

By the principle of induction, we have shown that $\G=\hat{\G}$ holds for any graphs under given assumptions.
\end{proof}

Now that we have established that $\G=\hat{\G}$, we prove the remaining part of the theorem. Note that for any $i,j\in [d]$ such that $i\notin\hpa_{\G}(j)$, we have $(\mB_k)_{ji}=(\hat{\hat{\mB}}_k)_{ji}=0, \forall k\in[K]$. Since $\hat{\hat{\mB}}_k=\mB_k\mT$, we have
\begin{equation}
    \notag
    \sum_{\ell\in\hpa_{\G}(j)} (\mB_k)_{j\ell}\mT_{\ell i} = 0.
\end{equation}
By \Cref{asmp:independent-row}, the above implies that $\mT_{\ell i} = 0$ for $\forall \ell\in\hpa_{\G}(j)$. In short, we have argued that if there exists $j$ such that $i\notin\hpa_{\G}(j)$ and $\ell\in\hpa_{\G}(j)$, then $\mT_{\ell i}=0$. 

This implies that $T_{\ell i}$ is non-zero only if $\bar{\mathrm{ch}}_{\G}(\ell)\subseteq\bar{\mathrm{ch}}_{\G}(i)$. Since $\vv=\mT\vz$, we have $\vv_{\ell}=\sum_{i=1}^d \mT_{\ell i}\vz_i = \sum_{i\in [d]: \bar{\mathrm{ch}}_{\G}(\ell)\subseteq\bar{\mathrm{ch}}_{\G}(i)} \mT_{\ell i}\vz_i$. Note that when $i\neq\ell$, $\bar{\mathrm{ch}}_{\G}(\ell)\subseteq\bar{\mathrm{ch}}_{\G}(i)$ is equivalent to $i\in\dom_{\G}(\ell)$, so $\vv_{\ell}$ only depends on $\vz_{\overline{\dom}_{\G}(\ell)}$ by \Cref{lemma:dom-equivalent}, as desired.

\subsection{Formal version and proof of \Cref{thm:single-node-lower-bound-informal}}
\label{proof:single-node-lower-bound}

In previous works \citep{seigal2022linear,zhang2023identifiability}, it is common to consider single-node soft interventions in the following sense:

\begin{assumption}
    \label{asmp:soft-interventions}
    For $\forall 2\leq k\leq K$, there exists $i_k\in[d]$, such that the structural equation in environment $k$ satisfies \Cref{eq:structural-equatin-zi} satisfies $\vw_k(i) = \vw_1(i)$ and $\omega_{k,i,i}=\omega_{1,i,i}$ for $\forall i\neq i_k$.
\end{assumption}

Let $S_i = \left\{k: 2\leq k\leq K, i_k=i\right\}, i\in[d]$ and $s_i = \absx{S_i}$. Suppose that $\G$ has $e = \sum_{i=1}^d \absx{\pa_{\G}(i)}$ edges, then we can view the weight vectors $\left\{(\vw_k(i), \omega_{k,i,i}): k=1 \text{ or } i=i_k\right\}$ as elements of the Euclidean space $\R^{e+\sum_{k=2}^K \absx{\pa_{\G}(i_k)}}\times \R_{+}^{d+K-1}$. Under \Cref{asmp:soft-interventions}, the models can be fully determined by these weight vectors. The following result states that if we restrict ourselves to single-node interventions, then in the worst case, $\Theta(d^2)$ interventions are required.

\begin{theorem}
\label{thm:single-node-lower-bound}
    There exists a causal graph $\G$ with $\Theta(d^2)$ edges, such that for any unmixing matrix $\mH\in\R^{d\times n}$ with full row rank, any independent noise variables $\epsilon$, and any $s_i >0, i\in[d]$ such that $s_i \leq \absx{\pa_{\G}(i)}$ for some $i$, the following holds: except from a null set of the weight space $\R^{e+\sum_{k=2}^K \absx{\pa_{\G}(i_k)}}\times \R_{+}^{d+K-1}$ (w.r.t the Lebesgue measure), there must exist a candidate solution $(\hat{\mH},\hat{\G})$ and a hypothetical data generating process
    \begin{equation}
        \notag
        \forall k\in[K],\quad \vv = \hat{\mA}_k \vv + \hat{\bm{\Omega}}_k^{\frac{1}{2}}\epsilon,\quad \vx = \hat{\mH}^{\dagger}\vv
    \end{equation}
    such that
    \begin{enumerate}[label={(\roman*$'$)}]
        \item the unmixing matrix $\hat{\mH}\in\R^{d\times n}$ has full row rank;
        \item $\forall k\in[K]$ and $i,j\in[d]$, $(\hat{\mA}_k)_{ij}\neq 0 \Leftrightarrow j\in\pa_{\hat{\G}}(i)$ and $\hat{\bm{\Omega}}_k$ is a diagonal matrix with positive entries;
        \item for $\forall 2\leq k\leq K$, the weight matrices $\hat{\mA}_k, \hat{\bm{\Omega}}_k$ of environment $E_k$ are from a single-node soft intervention on $E_1$ on node $i_k$, in the sense of \Cref{asmp:soft-interventions},
    \end{enumerate}
    but $\G$ is non-isomorphic to $\hat{\G}$.
\end{theorem}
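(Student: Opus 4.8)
The plan is to fix one worst-case dense graph and then, for \emph{any} admissible intervention pattern that under-intervenes some node, exhibit an explicit competitor obtained from the ground truth by a single change of latent basis. For an invertible $\mT\in\R^{d\times d}$ I set $\hat{\mH}=\mT\mH$ and $\hat{\mB}_k=\mB_k\mT^{-1}$, so that $\hat{\mB}_k\hat{\mH}=\mB_k\mT^{-1}\mT\mH=\mB_k\mH$ for every $k$. Thus the noise recovered by the hypothetical model, $\hat{\mB}_k\hat{\mH}\vx$, coincides with the true $\epsilon$, so the hypothetical process driven by the same $\epsilon$ reproduces the same $\vx$ and hence the same $\mathbb{P}_{\vx}^{E_k}$; invertibility of $\mT$ gives full row rank of $\hat{\mH}$ (condition (i$'$)). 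The crucial bookkeeping fact is that right multiplication preserves which rows of a matrix coincide: since the true soft interventions force $\mB_k$ and $\mB_1$ to agree outside row $i_k$, so do $\hat{\mB}_k$ and $\hat{\mB}_1$, whence the competitor is automatically a single-node soft intervention on $E_1$ at the same node $i_k$ (condition (iii$'$), i.e.\ \Cref{asmp:soft-interventions}). The problem therefore collapses to choosing $\mT$ so that each $\hat{\mB}_k$ factors as $\hat{\bm{\Omega}}_k^{-1/2}(\mI-\hat{\mA}_k)$ with positive diagonal and a single common support defining a DAG $\hat{\G}$ not isomorphic to $\G$ (condition (ii$'$) and the conclusion).

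To build $\mT$ I would take $\G$ to be a dense DAG with $\Theta(d^2)$ edges, engineered so that it possesses an edge that can be excised surgically, and let $i^*$ be the node with $s_{i^*}\le\absx{\pa_{\G}(i^*)}$ supplied by hypothesis. Under this inequality there are at most $\absx{\pa_{\G}(i^*)}+1$ distinct structural equations at $i^*$ across all environments; in the strict range $s_{i^*}<\absx{\pa_{\G}(i^*)}$ this violates \Cref{asmp:independent-row}, and the rows $\{(\mB_k)_{i^*}\}_k$, restricted to the coordinates $\hpa_{\G}(i^*)$, leave a nonzero common null vector $\vu$, $(\mB_k)_{i^*}\vu=0$ for all $k$, which the data cannot see. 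Picking a parent $p\in\pa_{\G}(i^*)$ that is a source of $\G$ (which the construction of $\G$ provides) with $u_p\neq0$ and normalizing $u_p=1$, I take the single-column shear $\mT^{-1}=\mI+(\vu-\ve_p)\ve_p^{\top}$. This alters only column $p$ of every $\mB_k$, and by construction $(\hat{\mB}_k)_{i^*p}=(\mB_k)_{i^*}\vu=0$, so the edge $p\to i^*$ disappears in all environments simultaneously; choosing $p$ to be a source keeps $(\hat{\mB}_k)_{pp}$ equal to $(\mB_k)_{pp}>0$, so diagonals stay positive, and deleting an edge cannot break acyclicity. The net effect is $\hat{\G}=\G\setminus\{p\to i^*\}$, which has one fewer edge and hence is non-isomorphic to $\G$.

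It then remains to check condition (ii$'$), namely that every other entry touched by the shear preserves rather than alters the edge set: for $i\neq i^*$ one has $(\hat{\mB}_k)_{ip}=(\mB_k)_{ip}+\sum_{l\in\hpa_{\G}(i^*)\setminus\{p\}}u_l(\mB_k)_{il}$, so the only dangers are that a modified coefficient vanishes (erasing an existing edge) or that a previously zero coefficient becomes nonzero (creating a spurious edge). The first is excluded off a measure-zero set, and the second is excluded by the design of $\G$, which I would choose so that every node having a parent in $\hpa_{\G}(i^*)\setminus\{p\}$ also has $p$ as a parent (a $\ch_{\G}$/$\pa_{\G}$ incidence condition in the spirit of \Cref{def:effect-domination}). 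All remaining requirements reduce to finitely many polynomial non-vanishing conditions in the weight entries (existence of $\vu$ with $u_p\neq0$, retained edges staying nonzero, invertibility and positivity), so their union is Lebesgue-null in the weight space $\R^{e+\sum_{k}\absx{\pa_{\G}(i_k)}}\times\R_{+}^{d+K-1}$, and off this null set the construction yields a valid competitor with $\hat{\G}\not\cong\G$.

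The main obstacle is exactly this surgical localization: a generic basis change rewrites an entire column of each $\mB_k$ and so threatens to add or delete many edges at once—especially among the children of $i^*$—which could destroy the common support $\hat{\G}$ or accidentally return a graph isomorphic to $\G$. Controlling the collateral edges forces $\G$, the target edge $p\to i^*$, and the null vector $\vu$ to be matched against the incidence structure of $\G$, and this is also where under-intervention is indispensable: when $s_{i^*}>\absx{\pa_{\G}(i^*)}$ the rows at $i^*$ span fully, no undetectable $\vu$ exists, and the matching-direction mechanism behind \Cref{thm:linear-main-thm} pins the graph down up to $\sim_{\dom}$, so no non-isomorphic competitor is possible. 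Verifying that a single edge can always be excised cleanly while keeping $\hat{\G}$ a DAG with the correct support across all $K$ environments—together with the boundary case $s_{i^*}=\absx{\pa_{\G}(i^*)}$, where one instead inserts a fake parent via the variant shear $\mT^{-1}=\mI+c\,\ve_{i^*}\ve_q^{\top}$—is the technical heart of the argument.
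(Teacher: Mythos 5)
Your construction is essentially the paper's: both take $\G$ to be the complete DAG on a fixed topological order, both observe that when node $i^*$ receives too few interventions the rows $\left\{(\mB_k)_{i^*}\right\}_k$ leave a common null direction $\vu$ supported on $\hpa_{\G}(i^*)$ that the data cannot see, and both use a change of latent basis that zeroes the $(i^*,1)$ entry of every $\hat{\mB}_k$ simultaneously, excising the edge $1\to i^*$ while genericity protects every other entry and every diagonal. The paper parametrizes the basis change as a lower-triangular $\mT$ whose first column is constrained to be orthogonal to $\spanl{\ve_{i^*}-(\mA_k)_{i^*}:k\in\{1\}\cup S_{i^*}}$ and non-orthogonal to everything else; your rank-one shear $\mT^{-1}=\mI+(\vu-\ve_1)\ve_1^\top$ is the same move written more surgically, and your observation that right-multiplication preserves which rows of the $\mB_k$ coincide across environments is exactly how the paper obtains condition (iii$'$).

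One concrete caveat. Your common null vector exists only when $s_{i^*}+1<\absx{\pa_{\G}(i^*)}+1$, i.e.\ $s_{i^*}\leq\absx{\pa_{\G}(i^*)}-1$; in the boundary case $s_{i^*}=\absx{\pa_{\G}(i^*)}$ the weight vectors at $i^*$ generically span the full coordinate subspace and no undetectable direction remains. Your proposed fix for that case --- inserting a fake parent via $\mT^{-1}=\mI+c\,\ve_{i^*}\ve_q^\top$ --- cannot work on the complete DAG: every $q<i^*$ is already a parent and every $q>i^*$ is a descendant, so the added edge either already exists or creates a cycle. Note, however, that the paper's own proof also quietly restricts to $s_{i_0}\leq\absx{\pa_{\G}(i_0)}-1$, so this is an off-by-one tension between the theorem statement and both arguments rather than a defect specific to yours; modulo that shared boundary case, your argument is correct and matches the paper's.
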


In this subsection we give the full proof of \Cref{thm:single-node-lower-bound}. We say that $S\subseteq\R^m$ is a null set if it has zero Lebesgue measure. Obviously, any hyperplanes in $\R^m$ are null sets. We will also need the following simple lemma:
\begin{lemma}
\label{lemma:solution-with-nonzero-first-entry}
    Suppose that $m\in\mathbb{Z}_+$ and $\mV$ is a subspace of $\R^m$. Then for any set of vectors $\vu_i\in\R^m, i=1,2,\cdots,n$ that does not lie in $\mV$, there must exists $\vv\in\R^m$ such that $\vu_i^\top\vv\neq 0, \forall i\in[n]$ but $\vv\in\mV^\perp$, where $\mV^\perp$ is the orthogonal space of $\mV$.
\end{lemma}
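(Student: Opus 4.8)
The plan is to reduce the statement to the elementary fact that a vector space over $\R$ cannot be written as a finite union of proper subspaces. First I would work entirely inside the subspace $\mV^\perp$, which is where the desired vector $\vv$ must live. For each index $i\in[n]$, consider the ``bad'' set $H_i=\left\{\vv\in\mV^\perp:\vu_i^\top\vv=0\right\}$ of candidates that fail the $i$-th nonvanishing requirement. Each $H_i$ is clearly a linear subspace of $\mV^\perp$, and the whole point of the argument is that it is a \emph{proper} subspace.

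The key step is verifying this properness, and it is exactly here that the hypothesis $\vu_i\notin\mV$ enters. Using the standard identity $(\mV^\perp)^\perp=\mV$, the assumption $\vu_i\notin\mV$ says precisely that $\vu_i$ is not orthogonal to all of $\mV^\perp$; hence there exists $\vw\in\mV^\perp$ with $\vu_i^\top\vw\neq 0$, so $\vw\in\mV^\perp\setminus H_i$ and $H_i\subsetneq\mV^\perp$. In particular $\mV^\perp\neq\{\vzero\}$, so $\mV^\perp$ is a nontrivial vector space.

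Finally I would invoke the fact that a vector space over an infinite field is never the union of finitely many proper subspaces; equivalently, phrased in the null-set language already used in this subsection, each proper subspace $H_i$ is a Lebesgue-null set within $\mV^\perp$, and a finite union of null sets cannot exhaust the nontrivial space $\mV^\perp$. Either way, $\mV^\perp\setminus\bigcup_{i=1}^n H_i$ is nonempty, and any $\vv$ in this set lies in $\mV^\perp$ and satisfies $\vu_i^\top\vv\neq 0$ for every $i\in[n]$, as required.

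I expect the only genuinely substantive point to be the translation ``$\vu_i\notin\mV\Rightarrow H_i$ proper'' via $(\mV^\perp)^\perp=\mV$; everything else is bookkeeping. The covering argument itself can be carried out either measure-theoretically (matching the null-set terminology of the surrounding proof of \Cref{thm:single-node-lower-bound}) or purely algebraically, and both routes are entirely routine.
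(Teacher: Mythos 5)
Your proposal is correct and follows essentially the same route as the paper: both arguments reduce to showing that $\vu_i\notin\mV$ forces the kernel of $\vv\mapsto\vu_i^\top\vv$ restricted to $\mV^\perp$ to be a proper subspace (the paper via the nonzero projection $\vw_i$ of $\vu_i$ onto $\mV^\perp$, you via $(\mV^\perp)^\perp=\mV$, which is the same fact), and then observe that finitely many proper subspaces, being null sets, cannot cover $\mV^\perp$. No gaps.
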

\begin{proof}
    Let $\vw_i$ be the orthogonal projection of $\vu_i$ onto $\mV^\perp$. Since $\vu_i\notin\mV$, we know that $\vw_i\neq\bm{0}$. The solution space of each equation $\vw_i^\top\vv=0$ in $\mV^\perp$ must then be a proper subspace of $\mV^\perp$. Equipped with the Lebesgue measure, all these spaces are null sets in $\mV^\perp$, so one can always choose a $\vv\in\mV^\perp$ that does not lie in any of these solution spaces. Such $\vv$ satisfies all the requirements.
\end{proof}

We choose $\G$ to be the graph with $i\to j$ for $\forall 1\leq i<j\leq d$, so that $\G$ has $\frac{d(d-1)}{2}$ edges. Suppose that $i_0\in[d]$ satisfies $s_i \leq \absx{\pa_{\G}(i)}-1$, then we must have $i_0\geq 2$, so there is an edge $1\to i_0$ in $\G$, Let $\hat{\G}$ be the resulting graph obtained via removing the edge $1\to i_0$ in $\G$, then $\G$ and $\hat{\G}$ are clearly non-isomorphic.

Note that the $i$-th row of $\mB_k$ can be written as $\omega_{k,i,i}^{-\frac{1}{2}}\left(\ve_i-(\mA_k)_i\right)$. Let's choose an lower-triangular matrix $\mT = (t_{ij})_{i,j=1}^d\in\R^{d\times d}$ with columns $\vt_i, i\in[d]$ such that the following holds:
\begin{equation}
    \label{eq:single-node-constraints}
        \left(\ve_i-(\mA_k)_i\right)^\top\vt_j = \left\{
        \begin{aligned}
            = 0, &\quad \forall k \in \{1\}\cup S_{i_0}, j=1 \text{ and } i=i_0 \\
            > 0, &\quad \forall i=j \text{ and } k\in\{1\}\cup S_i \\
            \neq 0, &\quad \forall \text{ remaining } (i,j,k)\in \{k=1,j<i\}\cup\{k\geq 2, i=i_k,j<i\}
        \end{aligned}
        \right.
\end{equation}
and
\begin{equation}
    \label{eq:single-node-diagonal-constraints}
    t_{ii} \neq 0,\quad \forall i\in[d].
\end{equation}
We now show that: except from a null set in the weight space, such $\mT$ can always be chosen. To see why this is the case, we first consider all the constraints on $\vt_1$:
\begin{equation}
    \label{eq:single-node-constraints-column1}
        \left(\ve_i-(\mA_k)_i\right)^\top\vt_1 = \left\{
        \begin{aligned}
            = 0, &\quad \forall k \in \{1\}\cup S_{i_0}\text{ and } i=i_0 \\
            > 0, &\quad \forall i=1 \text{ and } k\in\{1\}\cup S_i \\
            \neq 0, &\quad \forall \text{ remaining } (i,k)\in \{k=1,i>1\}\cup\{k\geq 2, i=i_k>1\}
        \end{aligned}
        \right.
\end{equation}
Now let $\mV = \spanl{\ve_i-(\mA_k)_i: k \in \{1\}\cup S_{i_0}\text{ and } i=i_0}$ and $R$ be the set of pairs $(i,k)$ specified in the second and third row of \Cref{eq:single-node-constraints-column1}. For $\forall (i,k)$, let $\vw_k(i)$ be the weight vector of node $i$ in the environment $k$, \emph{i.e.}, the vector of nonzero entries in $(\mA_k)_i$. Then for $\forall (i,k)\in R$, the following set (as a subset of the weight space)
\begin{equation}
    \label{eq:null-set}
    \bigcup_{k^* \in \{1\}\cup S_{i_0}} \left\{ \ve_{i_0}-\vw_{k^*}(i_0) \in \spanl{\ve_i-\vw_{k}(i), \ve_{i_0}-\vw_{k'}(i_0): k'\in \{1\}\cup S_{i_0}\setminus\{k^*\}} \right\}
\end{equation}
must be a null set. Thus
\begin{equation}
    \label{eq:combine-null-set}
    \mE = \bigcup_{(i,k)\in R} \bigcup_{k^* \in \{1\}\cup S_{i_0}} \left\{ \ve_{i_0}-\vw_{k^*}(i_0) \in \spanl{\ve_i-\vw_{k^*}(i), \ve_{i_0}-\vw_{k'}(i_0): k'\in \{1\}\cup S_{i_0}\setminus\{k^*\}} \right\}
\end{equation}
is also a null set. For any weights that are not in $\mE$, we necessarily have
\begin{equation}
    \notag
    \ve_i-\vw_{k}(i)\notin \spanl{\ve_i-(\mA_k)_i: k \in \{1\}\cup S_{i_0}\text{ and } i=i_0} = \mV,\quad (i,k)\in R.
\end{equation}
Let $U = \left\{ \ve_i-\vw_{k}(i): (i,k)\in R\right\}$, then we can apply \Cref{lemma:solution-with-nonzero-first-entry} to deduce that there exists $\vt_1$ such that
\begin{equation}
    \label{eq:single-node-constraints-column1-first-step}
    \left(\ve_i-(\mA_k)_i\right)^\top\vt_1 = \left\{
    \begin{aligned}
        = 0, &\quad \forall k \in \{1\}\cup S_{i_0}\text{ and } i=i_0 \\
        \neq 0, &\quad \forall \text{ remaining } (i,k)\in \{k=1\}\cup\{k\geq 2, i=i_k\}
    \end{aligned}
    \right.
\end{equation}

Note that the only difference between \Cref{eq:single-node-constraints-column1-first-step} and \Cref{eq:single-node-constraints-column1} is that the latter one further requires that 
\begin{equation}
    \notag
    \left(\ve_1-(\mA_k)_1\right)^\top\vt_1 > 0, \quad\forall k\in\{1\}\cup S_i.
\end{equation}
while the former only guarantees that these terms are nonzero. However, recall that $(\mA_k)_{ij}\neq 0 \Rightarrow j\in\pa_{\G}(i)\Rightarrow j<i$, so the above essentially says that $t_{11}>0$. This can be easily guaranteed by replacing the solution $\vt_1$ we obtained satisfying \Cref{eq:single-node-constraints-column1-first-step} with $-\vt_1$ if needed.

Assuming that the weights do not lie in the null set $\mE$ we have shown that $\vt_1$ can always be chosen to satisfy all constraints imposed on it. We now proceed to choose the remaining entries of $\mT$. The remaining entries in $\vt_1$ can be chosen arbitrarily. For $\vt_j, j>1$, we note that the remaining constraints in \Cref{eq:single-node-constraints} that need to be satisfied consist of the "nonzero" part and the "positivity" part. The positivity constrains can always be satisfied by choosing a sufficiently large $t_{jj}$ for $j>1$.

After choosing the $\vt_j$'s satisfying the positivity constraints, the nonzero constraints along with \Cref{eq:single-node-diagonal-constraints} are easy to fulfill by slightly perturbing $\vt_j$ if they are violated; since each of these constraints are only violated in a zero-measure set of the weight space.
Hence, we have shown that except a null set $\mE$ in the weight space, there always exists some $\mT$ satisfying \Cref{eq:single-node-constraints}. Such $\mT$ must be invertible since it is lower-triangular and its diagonal entries are nonzero. Now let $\hat{\mH}=\mT^{-1}\mH$ 
and $\hat{\bm{\Omega}}_k$ be the diagonal matrix with entries $\hat{\omega}_{k,i,i} = t_{ii}^{-2}\cdot\omega_{k,i,i}, i\in[d]$ and 
\begin{equation}
    \label{eq:def-hat-Ak}
    \hat{\mA}_k=\mI-\hat{\bm{\Omega}}_k^{\frac{1}{2}}\bm{\Omega}_k^{-\frac{1}{2}}(\mI-\mA_k)\mT.
\end{equation}
First since $\mT$ is invertible and $\mH$ has full rank, $\hat{\mH}$ must also have full row rank. 
Second, 
\begin{equation}
    \notag
    (\hat{\mA}_k)_{ij} = \left\{
    \begin{aligned}
        1 - \hat{\omega}_{k,i,i}^{\frac{1}{2}}\omega_{k,i,i}^{-\frac{1}{2}}t_{ii} = 0 &\quad \text{if } j=i \\
        - \hat{\omega}_{k,i,i}^{\frac{1}{2}}\omega_{k,i,i}^{-\frac{1}{2}}\left(\ve_i-(\mA_k)_i\right)^\top\vt_j = 0 &\quad \text{if } j> i \\
        - \hat{\omega}_{k,i,i}^{\frac{1}{2}}\omega_{k,i,i}^{-\frac{1}{2}}\left(\ve_i-(\mA_k)_i\right)^\top\vt_j &\quad \text{if } j< i.
    \end{aligned}
    \right.
\end{equation}
where we again recall that both $\mA_k$ and $\mT$ are lower-triangular.
From \Cref{eq:single-node-constraints} we can see that
\begin{itemize}
    \item When $i=i_0$ and $j=1$, we have 
    \begin{itemize}
        \item $(\hat{\mA}_k)_{i_0,1} = 0$ if $k\in\{1\}\cup S_{i_0}$, and
        \item $(\hat{\mA}_k)_{i_0,1} = (\hat{\mA}_1)_{i_0,1} = 0$ if $k\notin\{1\}\cup S_{i_0}$, by definition of $S_{i_0}$ and \Cref{asmp:soft-interventions}.
    \end{itemize}
    \item When $i> j$ and $(i,j)\neq (i_0,1)$, we have
    \begin{itemize}
        \item $(\hat{\mA}_k)_{ij} \neq 0$ if $k=1$ or $i=i_k$, which directly follows from \Cref{eq:single-node-constraints}, and
        \item $(\hat{\mA}_k)_{ij} = (\hat{\mA}_1)_{ij} \neq 0$, by \Cref{asmp:soft-interventions}. 
    \end{itemize}
\end{itemize}
To summarize, for each $k$, $(\mA_k)_{ij}\neq 0 \Leftrightarrow j\in\pa_{\G}(i) \text{ and } (i,j)\neq (i_0,1)$.

Finally, let $\hat{\vw}_k(i)$ be the weight vector of node $i$ in environment $k$ in the hypothetical model \emph{i.e.}, the vector of nonzero entries in $(\mA_k)_i$, and $\mT_S$ be the submatrix of $\mT$ by selecting the rows and columns in the index set $S$, then by \Cref{eq:def-hat-Ak} we have that
\begin{equation}
    \label{eq:old-new-weights-relation}
    \hat{\omega}_{k,i,i}=t_{ii}^{-2}\cdot\omega_{k,i,i},\quad \hat{\omega}_{k,i,i}^{\frac{1}{2}}\omega_{k,i,i}^{-\frac{1}{2}}\vw_k(i)\mT_{\pa_{\G}(i)} = \left\{
    \begin{aligned}
        \hat{\vw}_k(i) &\quad \text{if } i\neq i_0 \\
        \left[0, \hat{\vw}_k(i)\right] &\quad \text{if } i= i_0
    \end{aligned}
    \right.
\end{equation}

By our assumption, for $\forall k\geq 2$, $i\neq i_k \Rightarrow \vw_k(i)=\vw_1(i)$ and $\omega_{k,i,i}=\omega_{1,i,i}$. Thus \Cref{eq:old-new-weights-relation} imply that $\forall k\geq 2$, $i\neq i_k \Rightarrow \hat{\vw}_k(i)=\hat{\vw}_1(i)$ and $\hat{\omega}_{k,i,i}=\hat{\omega}_{1,i,i}$.
In other words, a single-node intervention on node $i_k$ in environment $k$ in the ground-truth model corresponds to a single-node intervention on node $i_k$ in environment $k$ in the hypothetical model, thereby completing the proof.

\subsection{Proof of \Cref{thm:alg-guarantee}}
\label{proof:alg-guarantee}

We first prove two lemmas.

\begin{lemma}
    \label{lemma:span-Mk-rows}
    $\forall i\in[d]$, we have $\spanl{(\mM_k)_i: k\in[K]} = \spanl{\vh_j: j\in\hpa_{\G}(i)}$.
\end{lemma}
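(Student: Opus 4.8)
The plan is to establish the two inclusions separately, reducing the nontrivial direction to a dimension count. First I would record the sparsity pattern of the rows of $\mB_k$: since $\bm{\Omega}_k$ is diagonal with nonzero diagonal and $(\mA_k)_{ij}\neq 0 \Leftrightarrow j\in\pa_{\G}(i)$, the $i$-th row is $(\mB_k)_i = \omega_{k,i,i}^{-\frac{1}{2}}\left(\ve_i - (\mA_k)_i\right)$, which is supported exactly on $\hpa_{\G}(i)$ and has a nonzero entry at coordinate $i$. Writing $\tilde{\vw}_k(i)\in\R^{\absx{\hpa_{\G}(i)}}$ for the restriction of $(\mB_k)_i$ to the coordinates $\hpa_{\G}(i)$ (exactly as in the proof of \Cref{lemma:assumptions-equiv}), the product $(\mM_k)_i = (\mB_k)_i\mH = \sum_{j\in\hpa_{\G}(i)} (\mB_k)_{ij}\,\vh_j$ is a linear combination of the rows $\vh_j$, $j\in\hpa_{\G}(i)$. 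This gives the inclusion $\spanl{(\mM_k)_i: k\in[K]}\subseteq\spanl{\vh_j: j\in\hpa_{\G}(i)}$ for free.

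For the reverse inclusion I would compare dimensions. By \Cref{asmp:full-rank} the matrix $\mH$ has full row rank, so the rows $\{\vh_j: j\in\hpa_{\G}(i)\}$ are linearly independent and the target space has dimension $\absx{\hpa_{\G}(i)} = \absx{\pa_{\G}(i)}+1$. Collecting these rows into a full-row-rank matrix $\mH_i$, the identity above reads $(\mM_k)_i = \tilde{\vw}_k(i)^\top\mH_i$. Now a relation $\sum_k\lambda_k(\mM_k)_i = \bm{0}$ is equivalent to $\left(\sum_k\lambda_k\tilde{\vw}_k(i)\right)^\top\mH_i = \bm{0}$, and full row rank of $\mH_i$ forces $\sum_k\lambda_k\tilde{\vw}_k(i)=\bm{0}$; conversely any coefficient relation among the $\tilde{\vw}_k(i)$ yields one among the $(\mM_k)_i$. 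Hence passing from coefficient vectors to actual rows is dimension-preserving, i.e. $\dim\spanl{(\mM_k)_i:k\in[K]} = \dim\spanl{\tilde{\vw}_k(i):k\in[K]}$.

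Finally, I would invoke the node-level non-degeneracy assumption \Cref{asmp:independent-row} (equivalent to \Cref{asmp:independent-row-orig}), which states precisely that $\dim\spanl{(\mB_k)_i: k\in[K]} = \absx{\pa_{\G}(i)}+1$; since the $(\mB_k)_i$ differ from the $\tilde{\vw}_k(i)$ only in the zero coordinates outside $\hpa_{\G}(i)$, these two spans have equal dimension. Combining the three facts, $\spanl{(\mM_k)_i:k\in[K]}$ is a subspace of $\spanl{\vh_j:j\in\hpa_{\G}(i)}$ of the same dimension $\absx{\pa_{\G}(i)}+1$, so the two coincide. The only real content here is the dimension bookkeeping, and the single point to get right is that the map from coefficient vectors to rows of $\mM_k$ neither raises nor lowers the dimension — this is exactly where the full row rank of $\mH$ is used. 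No step is a genuine obstacle, so I would keep the write-up short.
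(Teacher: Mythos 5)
Your proof is correct and follows essentially the same route as the paper's: the easy inclusion from the sparsity pattern of $(\mB_k)_i$, followed by a dimension count that combines the full row rank of $\mH$ (so multiplication by $\mH$ preserves the dimension of the span) with the node-level non-degeneracy assumption giving $\dim\spanl{(\mB_k)_i: k\in[K]}=\absx{\pa_{\G}(i)}+1$. The only difference is presentational — you restrict to the submatrix $\mH_i$ and check dimension preservation by hand, whereas the paper invokes invertibility of $\mH$ directly — but the argument is the same.
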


\begin{proof}
    Since $(\mM_k)_i=(\mB_k)_i\mH$, and $(\mB_k)_{ij}\neq 0 \Leftrightarrow j\in\hpa_{\G}(i)$, we can see that $(\mM_k)_i\in\spanl{\vh_j: j\in\hpa_{\G}(i)}$. On the other hand, since $\mH$ is invertible, by \Cref{asmp:independent-row} we have $\dim \spanl{(\mM_k)_i: k\in[K]} = \dim \spanl{(\mB_k)_i: k\in[K]} = \absx{\hpa_{\G}(i)}$. Thus we must have $\spanl{(\mM_k)_i: k\in[K]}=\spanl{\vh_j: j\in\hpa_{\G}(i)}$.
\end{proof}

\begin{lemma}
\label{lemma:alg-same-spaces}
    Let $\hat{S}$ be an ancestral set of graph $\G$ and $\hat{\mV}_k = \mathrm{span}\left\langle (\mM_k)_s: s\in \hat{S}\right\rangle, k\in[K]$. Then we have $\mV_1=\mV_2=\cdots=\mV_K=\mathrm{span}\left\langle \vh_s: s\in \hat{S}\right\rangle$.
\end{lemma}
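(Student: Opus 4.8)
The plan is to exploit the factorisation $\mM_k=\mB_k\mH$ together with the sparsity pattern of $\mB_k$ and the ancestral structure of $\hat{S}$, proving the two inclusions separately (or one inclusion plus a dimension count). First I would observe that for each $s\in\hat{S}$ the row $(\mM_k)_s=(\mB_k)_s\mH=\sum_{j\in\hpa_{\G}(s)}(\mB_k)_{sj}\vh_j$, using the assumption that $(\mB_k)_{sj}\neq 0 \Leftrightarrow j\in\hpa_{\G}(s)$. Since $\hat{S}$ is ancestral, $s\in\hat{S}$ forces $\ans_{\G}(s)\subseteq\hat{S}$, and because $\pa_{\G}(s)\subseteq\ans_{\G}(s)$ we get $\hpa_{\G}(s)=\pa_{\G}(s)\cup\{s\}\subseteq\hat{S}$. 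Hence $(\mM_k)_s\in\spanl{\vh_j: j\in\hat{S}}$, and taking the span over $s\in\hat{S}$ yields the inclusion $\mV_k\subseteq\spanl{\vh_j: j\in\hat{S}}$ for every $k\in[K]$. Crucially, the right-hand side does not depend on $k$, so it suffices to promote this inclusion to an equality for each fixed $k$.

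The remaining task is a dimension count. On the right, $\mH$ has full row rank (\Cref{asmp:full-rank}), so the rows $\vh_j$ are linearly independent and $\dim\spanl{\vh_j: j\in\hat{S}}=\absx{\hat{S}}$. On the left, I would argue that the vectors $(\mM_k)_s=(\mB_k)_s\mH$, $s\in\hat{S}$, are also linearly independent. The matrix $\mB_k=\bm{\Omega}_k^{-\frac{1}{2}}(\mI-\mA_k)$ is invertible: $\bm{\Omega}_k$ is diagonal with positive entries, and $\mI-\mA_k$ is invertible because $\mA_k$ encodes a DAG and is therefore strictly lower-triangular after a topological reordering, so $\det(\mI-\mA_k)=1$. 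Consequently the rows $(\mB_k)_s$, $s\in\hat{S}$, are linearly independent. Right-multiplication by $\mH$ preserves this independence, since the map $\vy\mapsto\vy\mH$ on row vectors is injective (its kernel is the left null space of $\mH$, which is trivial as $\mH$ has full row rank). Thus $\dim\mV_k=\absx{\hat{S}}$, matching the dimension of the superspace, and the inclusion from the first paragraph becomes an equality $\mV_k=\spanl{\vh_j: j\in\hat{S}}$.

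Since this equality holds for every $k\in[K]$ with the same right-hand side, we conclude $\mV_1=\mV_2=\cdots=\mV_K=\spanl{\vh_j: j\in\hat{S}}$, as claimed. The main obstacle is purely the non-collapse of dimension under multiplication by $\mH$: the inclusion is immediate from the sparsity pattern plus the ancestral property, but one must rule out any loss of rank when passing from the abstract row vectors $(\mB_k)_s$ to the concrete vectors $(\mM_k)_s=(\mB_k)_s\mH$. This is handled cleanly by combining invertibility of $\mB_k$ (from the DAG structure and positive $\bm{\Omega}_k$) with injectivity of $\vy\mapsto\vy\mH$; no appeal to the node-level non-degeneracy \Cref{asmp:independent-row} is needed here, in contrast to \Cref{lemma:span-Mk-rows}, because we are spanning rows of a single invertible $\mB_k$ rather than the same row across environments.
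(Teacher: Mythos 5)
Your proof is correct and follows essentially the same route as the paper's: the inclusion $\hat{\mV}_k\subseteq\spanl{\vh_s: s\in\hat{S}}$ via the sparsity of $\mB_k$ and the ancestral property, followed by a dimension count using the full rank of $\mM_k=\mB_k\mH$. You merely spell out in more detail why $\mB_k$ is invertible and why right-multiplication by the full-row-rank $\mH$ preserves linear independence, which the paper states more tersely.
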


\begin{proof}
    Recall that $\mM_k = \mB_k\mH$, so for $\forall s\in \hat{S}$, the $s$-th row of $\mM_k$ can be written as
    \begin{equation}
        \label{eq:Mk-in-span-H}
        (\mM_k)_s = \sum_{t=1}^d (\mB_k)_{st}\vh_t = \sum_{t\in\hpa_{\G}(s)} (\mB_k)_{st}\vh_t \in \mathrm{span}\left\langle\vh_s:s\in \hat{S}\right\rangle
    \end{equation}
    where the last equation is because $\hat{S}$ is ancestral $\Rightarrow \hpa_{\G}(s)\subseteq \hat{S}$. Thus, for $\forall k\in[K]$, $\hat{\mV}_k = \mathrm{span}\left\langle (\mM_k)_s: s\in \hat{S}\right\rangle \subseteq \mathrm{span}\left\langle\vh_s:s\in \hat{S}\right\rangle$. On the other hand, recall that both $\mB_k$ and $\mH$ have full rank, so $\mM_k$ has full row rank as well, which implies that $\dim \mV_k = \absx{S} = \dim \mathrm{span}\left\langle\vh_s:s\in \hat{S}\right\rangle$. Hence, $\mV_k = \mathrm{span}\left\langle\vh_s:s\in \hat{S}\right\rangle, \forall k\in[K]$.
\end{proof}

The following two propositions show that our algorithm always maintain an ancestral set, recursively adds a new node into the set and correctly identifies its parents.

\begin{proposition}[\Cref{prop:alg2} restated]
\label{app-prop:alg2}
    The following two propositions hold for \Cref{alg:learn-graph}:
    \begin{itemize}
        \item $\ans_{\G}(i)\subseteq S \Leftrightarrow$ the \texttt{if} condition in line 8 of \Cref{line:if-rank1} is fulfilled;
        \item the set $S$ maintained in \Cref{alg:learn-graph} is always an ancestral set, in the sense that $j\in S \Rightarrow \ans_{\G}(j)\subseteq S$.
    \end{itemize}
\end{proposition}

\begin{proof}
    At the starting point, we have $S=\emptyset$ which is obviously an ancestral set. Now suppose that after the $\ell$-th iteration, $S=\left\{s_1,s_2,\cdots,s_{\ell}\right\}$ is an ancestral set. In the following, we show that $\ans_{\G}(i)\subseteq S \Leftrightarrow$ the \texttt{if} condition in line 8 is fulfilled. This would immediately imply that there always exists a node $i$ that can be added into $S$ in the $(\ell+1)$-th iteration, and that after adding $i$, $S$ is still an ancestral set.

    Suppose that $\ans_{\G}(i)\subseteq S$ for some $i\notin S$, by \Cref{lemma:span-Mk-rows} we know that $(\mM_k)_i \in \mathrm{span}\left\langle \vh_j: j\in \hpa_{\G}(i)\right\rangle$, so there exists $\alpha_k\in\R$ such that $(\mM_k)_i-\alpha_k\vh_i \in \mathrm{span}\left\langle \vh_j: j\in \pa_{\G}(i)\right\rangle$. Moreover, since $(\mM_k)_i = \sum_{j\in\hpa_{\G}(i)}(\mB_k)_{jj}\vh_j$, $(\mB_k)_{ii} = \omega_{k,i,i}^{-\frac{1}{2}}\neq 0$ and $\mH$ has full row rank by assumption, we must have $(\mM_k)_i\notin \mathrm{span}\left\langle \vh_j: j\in \pa_{\G}(i)\right\rangle$ and so $\alpha_k\neq 0$.  
    Thus, we have by the linearity of the projection operator
    \begin{equation}
        \notag
        \vq_k :=\mathrm{proj}_{\mV_k^{\perp}} \left((\mM_k)_i\right) = \mathrm{proj}_{\mV_k^{\perp}} \left((\mM_k)_i-\alpha_k\vh_i\right)+ \mathrm{proj}_{\mV_k^{\perp}} \left(\alpha_k\vh_i\right) = \alpha_k\mathrm{proj}_{\mV_k^{\perp}} \left(\vh_i\right).
    \end{equation}
    Recall that all the $\mV_k$'s are the same and equal $\mathrm{span}\left\langle\vh_s:s\in S\right\rangle$ by \Cref{lemma:alg-same-spaces}. So $\dim\mathrm{span}\left\langle \vq_k: k\in[K]\right\rangle\leq 1$. Since $\mH$ has full row rank, we have $\vh_i\notin \mathrm{span}\left\langle\vh_s:s\in S\right\rangle = \mV_k$, so that $\dim\mathrm{span}\left\langle \vq_k: k\in[K]\right\rangle= 1$ holds, which is exactly the \texttt{if} condition in line 8.

    Conversely, suppose that there is an $i\notin S$ such that $\ans_{\G}(i)\nsubseteq S$ but $\dim\mathrm{span}\left\langle \vq_k: k\in[K]\right\rangle= 1$ holds. Since $S$ is ancestral, we know that there must be some $j\in\pa_{\G}(i)$ such that $j\notin S$. Since $\ve_i$ and $\ve_j$ both have support on the coordinates in $\hpa_{\G}(i)$,  by \Cref{asmp:independent-row} we know that $\mathrm{span}\langle \ve_i,\ve_j\rangle \subseteq \mathrm{span}\langle (\mB_k)_i: k\in[K]\rangle$,
    so that $\mathrm{span}\langle \vh_i,\vh_j\rangle = \mathrm{span}\langle \ve_i,\ve_j\rangle \mH \subseteq \mathrm{span}\langle (\mB_k)_i: k\in[K]\rangle \mH = \mathrm{span}\langle (\mM_k)_i: k\in[K]\rangle$. Since $\dim\mathrm{span}\left\langle \vq_k: k\in[K]\right\rangle= 1$, there must exist some vector $\vu\in\R^n$ and $\alpha_i,\alpha_j\in\R$ such that $\vh_i-\alpha_i\vu, \vh_j-\alpha_j\vu \in \mV_k = \mathrm{span}\langle \vh_s: s\in S\rangle$. Since $i,j\notin S$ and $\mH$ has full row rank, we can deduce that 
    $\vh_i, \vh_j \notin \mathrm{span}\langle \vh_s: s\in S\rangle$, and so both of $\alpha_i$ and $\alpha_j$ are non-zero. Hence $\alpha_j\vh_i-\alpha_i\vh_j \in \mathrm{span}\langle \vh_s: s\in S\rangle$, which is impossible since we know that $\mH$ has full row-rank.
\end{proof}

\begin{proposition}[\Cref{prop:alg1} restated]
\label{app-prop:alg1}
    Given any ordered ancestral set $S$ that contains $\pa_{\G}(i)$ for some $i\notin S$ 
    , \Cref{alg:identify-parents} returns a set $P_i \subseteq S$ that is exactly $\pa_{\G}(i)$.
\end{proposition}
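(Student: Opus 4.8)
The plan is to track how the integer $r_{m'}$ computed in \Cref{line:parents-obtain-rank} of \Cref{alg:identify-parents} evolves as the prefix $S_{m'}:=\{s_1,\dots,s_{m'}\}$ grows, and to show that it drops by exactly one precisely when the newly included node $s_{m'}$ is a parent of $i$. Writing $\vp_k^{(m')}$ for the vectors returned by \texttt{Orthogonal-projections}$(S_{m'},i,\cdot)$ and $r_{m'}=\dim\spanl{\vp_k^{(m')}:k\in[K]}$, I will establish the closed form
\[
r_{m'}=\absx{\pa_{\G}(i)\setminus S_{m'}}+1,\qquad 0\le m'\le m .
\]
Granting this, since $S_{m'}=S_{m'-1}\cup\{s_{m'}\}$ we get $r_{m'}=r_{m'-1}-1$ exactly when $s_{m'}\in\pa_{\G}(i)$ and $r_{m'}=r_{m'-1}$ otherwise, so the \texttt{if} test in \Cref{line:parents-rank} fires on precisely the parent indices. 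Because $\pa_{\G}(i)\subseteq S$ by hypothesis, the routine then returns $P_i=\pa_{\G}(i)\subseteq S$, as claimed.

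The core of the argument is the displayed formula. First I would reduce to the case where \emph{every} prefix $S_{m'}$ is ancestral, not merely the full set $S$: when \texttt{Identify-Parents} is invoked inside \Cref{alg:learn-graph}, $S$ is ordered by the order in which its nodes were appended, and by \Cref{prop:alg2} each node is appended only after all of its ancestors, so every prefix is itself ancestral. Under this assumption \Cref{lemma:alg-same-spaces} applies to each $S_{m'}$ and forces the per-environment spaces $\mW$ in \texttt{Orthogonal-projections} to coincide: $\spanl{(\mM_k)_s:s\in S_{m'}}=\spanl{\vh_s:s\in S_{m'}}=:\mV_{m'}$, independently of $k$. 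Hence the projection direction no longer depends on $k$, and
\[
\spanl{\vp_k^{(m')}:k\in[K]}=\mathrm{proj}_{\mV_{m'}^{\perp}}\!\left(\spanl{(\mM_k)_i:k\in[K]}\right)=\mathrm{proj}_{\mV_{m'}^{\perp}}\!\left(\spanl{\vh_t:t\in\hpa_{\G}(i)}\right),
\]
the last equality being \Cref{lemma:span-Mk-rows}. Since $\mH$ has full row rank, $\vh_1,\dots,\vh_d$ are linearly independent, so the identities $\dim\mathrm{proj}_{\mV^{\perp}}(\mW)=\dim\mW-\dim(\mW\cap\mV)$ and $\spanl{\vh_t:t\in A}\cap\spanl{\vh_t:t\in B}=\spanl{\vh_t:t\in A\cap B}$ give $r_{m'}=\absx{\hpa_{\G}(i)}-\absx{\hpa_{\G}(i)\cap S_{m'}}=\absx{\pa_{\G}(i)\setminus S_{m'}}+1$, using $i\notin S$. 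The boundary values $r_0=\absx{\pa_{\G}(i)}+1$ and $r_m=1$ (the latter from $\pa_{\G}(i)\subseteq S$) serve as consistency checks.

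The step I expect to be the main obstacle is exactly this reduction to ancestral prefixes. \Cref{lemma:alg-same-spaces} is what collapses the $k$-dependent spaces $\mW$ appearing in \texttt{Orthogonal-projections} down to a single subspace $\mV_{m'}$; without ancestrality of $S_{m'}$ the vectors $\vp_k^{(m')}$ are projected onto genuinely different orthogonal complements for different $k$, so no clean dimension count is available and, a priori, even a non-parent could lower the rank. I would therefore make explicit that the ordering supplied by \Cref{alg:learn-graph} is topological, so that the hypothesis of \Cref{lemma:alg-same-spaces} is met at every intermediate step and not only for the full set $S$. Once that structural point is secured, the two linear-algebra identities used above are routine, and the monotone bookkeeping of $r_{m'}$ finishes the proof.
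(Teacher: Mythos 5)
Your proposal is correct and follows essentially the same route as the paper's proof: both establish the closed form $r_{m'}=\absx{\hpa_{\G}(i)\setminus S_{m'}}$ by using the ancestrality of every prefix (via \Cref{prop:alg2}) to invoke \Cref{lemma:alg-same-spaces} and \Cref{lemma:span-Mk-rows}, then count dimensions of the projection. The only cosmetic difference is that you use the rank--nullity identity $\dim\mathrm{proj}_{\mV^{\perp}}(\mW)=\dim\mW-\dim(\mW\cap\mV)$ where the paper splits off the basis vectors outside $S_{m'}$ and applies \Cref{lemma:linalg-proj-dim}.
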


\begin{proof}
    As we have shown in \Cref{prop:alg2}, for each possible input $(S,i)$ to \Cref{alg:identify-parents}, both $S$ and $S\cup\{i\}$ are ancestral sets, so that $\ans_{\G}(i)\subseteq S$. Similarly one can see that inside the set $S := \left\{ s_1,s_2,\cdots,s_{m}\right\}$, all the ancestors of $s_j$ are contained in $\left\{s_1,s_2,\cdots,s_{j-1}\right\}$. In the following, we show that $\forall m'\in \{0, \ldots, m\}$, $r_{m'}=\absx{\hpa_{\G}(i)-\left\{s_j:j\leq m'\right\}}$ (*).

    By \Cref{lemma:alg-same-spaces} we have $\mW_1=\mW_2=\cdots=\mW_K=\mathrm{span}\left\langle \vh_{s_j}: j\leq m'\right\rangle$. Let $t_1,t_2,\cdots,t_{\ell}$ be elements of $\hpa_{\G}(i)$ that are not in $\left\{s_j:j\leq m'\right\}$, then
    \begin{equation}
        \notag
        \begin{aligned}
            r_{m'} &= \dim\mathrm{span}\left\langle \vp_k: k\in[K]\right\rangle = \dim\left(\mathrm{proj}_{\mathrm{span}\left\langle \vh_{s_j}: j\leq m'\right\rangle^{\perp}} \mathrm{span}\left\langle (\mM_k)_i: k\in[K]\right\rangle \right) \\
            &= \dim\left(\mathrm{proj}_{\mathrm{span}\left\langle \vh_{s_j}: j\leq m'\right\rangle^{\perp}} \mathrm{span}\left\langle \vh_j: j\in\hpa_{\G}(i)\right\rangle \right)\quad \text{(by \Cref{lemma:span-Mk-rows})} \\
            &= \dim\left(\mathrm{proj}_{\mathrm{span}\left\langle \vh_{s_j}: j\leq m'\right\rangle^{\perp}} \mathrm{span}\left\langle \vh_{t_1},\vh_{t_2},\cdots,\vh_{t_{\ell}} \right\rangle \right) \\
            &= \ell \quad \text{(by \Cref{lemma:linalg-proj-dim} and non-degeneracy of $\mH$)}
        \end{aligned}
    \end{equation}
    which proves (*). From (*) it is easy to see that $m'\in\hpa_{\G}(i)$ (and thus in $\pa_{\G}(i)$ since $i\notin S$) if and only if $r_{m'}=r_{m'-1}-1$.
\end{proof}

Now we conclude the proof of \Cref{thm:alg-guarantee}. \Cref{prop:alg2,prop:alg1} directly imply that \Cref{alg:learn-graph} is able to exactly recover the ground-truth causal graph $\G$. It remains to show that Line 20 in \Cref{line:intersect-subspaces} produces the correct $\hat{\vh}_i$'s. By \Cref{lemma:span-Mk-rows} we know that $E_j=\spanl{\vh_{\ell}:\ell\in\hpa_{\G}(j)}$, so
\begin{equation}
    \notag
    \cap_{j\in\hch_{\G}(i)} E_j = \cap_{j\in\hch_{\G}(i)} \spanl{\vh_{\ell}:\ell\in\hpa_{\G}(j)} = \spanl{\vh_{\ell}:\ell\in\overline{\dom}_{\G}(i)}.
\end{equation}
where the last step holds because $\mH$ has full row rank and $\cap_{j\in\hch_{\G}(i)}\hpa_{\G}(j)=\overline{\dom}_{\G}(i)$ by definition. Hence, each $\hat{\vh}_i$ is a linear combination of $\vh_\ell, \ell\in \overline{\dom}_{\G}(i)$, completing the proof.

\section{Omitted Proofs from \Cref{sec:non-param}}

\subsection{Proof of \Cref{lemma:assumptions-relation}}
\label{proof:assumptions-relation}

Let $\vw_k(i)\in\R^{\absx{\pa_{\G}(i)}}$ be the vector obtained by removing all zero entries in the $i$-th row of $\mA_k$ and $\omega_{k,i,i}$ be the $i$-th diagonal entry in $\bm{\Omega}_k$
, then for the $k$-th environment we have $\vz_i = \vw_k(i)^\top\vz_{\pa_{\G}(i)}+\omega_{k,i,i}^{\frac{1}{2}}\epsilon_i$, so that
    \begin{equation}
        \notag
        \hat{p}_k\left(\vz_i\mid\vz_{\pa_{\G}(i)}\right) = \omega_{k,i,i}^{-\frac{1}{2}}p_{\epsilon_i}\left(\omega_{k,i,i}^{-\frac{1}{2}}(\vz_i-\langle\vw_k(i),\vz_{\pa_{\G}(i)}\rangle)\right)
    \end{equation}
    where $p_{\epsilon_i}(\cdot)$ is the density of $\epsilon_i$.
    As a result, we have
    \begin{equation}
        \notag
        \begin{aligned}
            \nabla\frac{\hat{p}_1}{\hat{p}_k}\left(\vz_i\mid\vz_{\pa_{\G}(i)}\right) &= \frac{\hat{p}_1}{\hat{p}_k}\left(\vz_i\mid\vz_{\pa_{\G}(i)}\right)\cdot\nabla\log\frac{\hat{p}_1}{\hat{p}_k}\left(\vz_i\mid\vz_{\pa_{\G}(i)}\right) \\
            &= \frac{\hat{p}_1}{\hat{p}_k}\left(\vz_i\mid\vz_{\pa_{\G}(i)}\right)\cdot \left[ c_{i1} (1,-\vw_1(i)) - c_{ik} (1,-\vw_k(i)) \right]
        \end{aligned}
    \end{equation}
    where for convenience we use $\nabla$ to denote the gradient with respect to all variables $\vz_{\hpa_{\G}(i)}$, and $c_{ik}=\omega_{k,i,i}^{-\frac{1}{2}}\cdot\frac{p_{\epsilon_i}'}{p_{\epsilon_i}}\left(\omega_{k,i,i}^{-\frac{1}{2}}(\vz_i-\langle\vw_k(i),\vz_{\pa_{\G}(i)}\rangle\right)$ (we omit the dependency on $\vz$ for simplicity).

    \Cref{non-degeneracy-distribution} implies that $\mathrm{span}\left\langle c_{i1} (1,-\vw_1(i)) - c_{ik} (1,-\vw_k(i)): 2\leq k\leq K\right\rangle = \R^{\absx{\pa_{\G}(i)}+1}$, thus it holds that $\mathrm{span}\langle (1,-\vw_k(i)): k\in[K]\rangle = \R^{\absx{\pa_{\G}(i)}+1}$ as well. By definition of $\mB_k$, this immediately implies that $\dim\left(\mathrm{span}\left\langle(\mB_k)_i: k\in[K] \right\rangle\right)=\absx{\pa_{\G}(i)}+1$ as desired.

\subsection{Proof of \Cref{thm:single-node-soft-nonparam}}
\label{proof:single-node-soft-nonparam}

Define $\bm{\tau} := \hat{\vh}\circ \vh^{-1}: \R^d\mapsto\R^d$, then we have that $\vv = \bm{\tau}(\vz)$. Since both $\vh$ and $\hat{\vh}$ are diffeomorphisms by assumption, so is $\bm{\tau}$. To avoid confusion, in this section we use $\vz$ (resp. $\vv$) to denote random variables while using $\hat{\vz}$ (resp. $\hat{\vv}$) to denote (deterministic) vectors.

Let $\mathfrak{E}_j = \left\{ E_k^{(j)}: k\in[K_j]\right\}$ be the $j$-th collection of environments according to our assumption. We first prove the following lemma:

\begin{lemma}
    \label{lemma:support-relation}
    $\mO_{\vv} = \bm{\tau}(\mO_{\vz})$.
\end{lemma}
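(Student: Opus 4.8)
The plan is to exploit the change-of-variables formula for probability densities, using that $\bm{\tau} = \hat{\vh}\circ\vh^{-1}$ is a diffeomorphism of $\R^d$ whose Jacobian never vanishes. First I would recall that, since $\vv = \bm{\tau}(\vz)$, the density $q_E$ of $\vv$ in any single environment $E\in\mathfrak{E}$ is obtained from the density $p_E$ of $\vz$ by
\[
    q_E(\hat{\vv}) = p_E\!\left(\bm{\tau}^{-1}(\hat{\vv})\right)\,\left|\det J_{\bm{\tau}^{-1}}(\hat{\vv})\right|,\qquad \hat{\vv}\in\R^d.
\]
Because $\bm{\tau}^{-1}$ is itself a diffeomorphism (it is the composition $\vh\circ\hat{\vh}^{-1}$, and both factors are diffeomorphisms), its Jacobian determinant is nonzero at every point, so the weight $\left|\det J_{\bm{\tau}^{-1}}(\hat{\vv})\right|$ is strictly positive everywhere.

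Next I would read off the positivity sets. From the displayed identity, $q_E(\hat{\vv})>0$ if and only if $p_E\!\left(\bm{\tau}^{-1}(\hat{\vv})\right)>0$, that is, if and only if $\bm{\tau}^{-1}(\hat{\vv})$ lies in the positivity set of $p_E$; equivalently $\hat{\vv}\in\bm{\tau}\!\left(\{p_E>0\}\right)$. Invoking assumption (i), every $p_E$ has common positivity set $\mathcal{O}_{\vz}$ and every $q_E$ has common positivity set $\mathcal{O}_{\vv}$, so this reads $\mathcal{O}_{\vv}=\bm{\tau}(\mathcal{O}_{\vz})$. If instead the support is understood as the closure of the positivity set, I would additionally use that a diffeomorphism is a homeomorphism of $\R^d$ and therefore commutes with topological closure, $\bm{\tau}(\overline{A})=\overline{\bm{\tau}(A)}$, which yields the same identity. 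The argument needs only one environment, but it can equally be phrased as an intersection over all $E\in\mathfrak{E}$, with the bijection $\bm{\tau}$ commuting with the intersection.

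I do not expect a genuine obstacle here: the only points that require care are (a) confirming that $\bm{\tau}$ is a well-defined diffeomorphism on all of $\R^d$, which follows from \Cref{asmp:data-generating-process} together with the standing hypothesis that $\vh$ and $\hat{\vh}$ are diffeomorphisms (and is already asserted at the start of this subsection), and (b) being consistent about whether ``support'' denotes the open positivity set or its closure, since the latter case requires the extra homeomorphism remark above. Everything else is routine change-of-variables bookkeeping, and this lemma is intended only as the first step toward comparing the latent structures $\vz$ and $\vv$ on a shared domain in the remainder of the proof of \Cref{thm:single-node-soft-nonparam}.
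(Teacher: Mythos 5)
Your proposal is correct and follows essentially the same route as the paper: both apply the change-of-variables formula relating $p_E$ and $q_E$ through $\bm{\tau}$, observe that the Jacobian determinant of a diffeomorphism never vanishes, and conclude that the positivity sets correspond under $\bm{\tau}$. Your extra remark about closures is unnecessary here since the paper treats $\mO_{\vz}$ and $\mO_{\vv}$ as the (open) positivity sets of the continuous densities, but it does no harm.
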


\begin{proof}
    By the change of variable formula \citep{schwartz1954formula}, for $\forall \hat{\vz}\in\R^d$ and $\forall E\in\mathfrak{E}$ we have $p_E(\hat{\vz})=q_E(\hat{\vv})\absx{\det \mJ_{\bm{\tau}}(\hat{\vz})}$, where $\hat{\vv}=\bm{\tau}(\hat{\vz})$. Since $\bm{\tau}$ is a diffeomorphism, we must have $\absx{\det \mJ_{\bm{\tau}}(\hat{\vz})}\neq 0$, so $\hat{\vz}\in\mO_{\vz} \Leftrightarrow \hat{\vv} = \bm{\tau}(\hat{\vz}) \in \mO_{\vv}$, concluding the proof.
\end{proof}

\begin{lemma}
\label{lemma:single-node-intervention-dist-ratio}
    Let $\hat{\vz}\in\mO_{\vz}$. For $\forall j\in[d]$ and $2\leq k\leq K_j$, we have
    \begin{equation}
        \frac{p_j^{E_k^{(j)}}}{p_j^{E_1^{(j)}}}\left(\hat{\vz}_j\mid \hat{\vz}_{\pa_{\G}(j)}\right) = \frac{q_{j}^{E_k^{(j)}}}{q_{j}^{E_1^{(j)}}}\left(\hat{\vv}_{j}\mid \hat{\vv}_{\pa_{\hat{\G}}(j)} \right),
    \end{equation}
    where $\hat{\vv}=\bm{\tau}(\hat{\vz}) \in \mO_{\vv}$.
\end{lemma}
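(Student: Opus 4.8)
The plan is to exploit the single fact that the map $\bm{\tau}=\hat{\vh}\circ\vh^{-1}$ is \emph{the same across all environments}, so that its Jacobian determinant carries no environment-dependent information and will cancel when we compare two environments drawn from the same intervention group. Throughout I assume, consistently with how the statement is phrased, that the permutations $\pi,\pi'$ of the preceding reduction have been absorbed into the labelling, so that the group $\mathfrak{E}_j=\{E_k^{(j)}:k\in[K_j]\}$ intervenes on node $j$ in both the $\vz$-system and the $\vv$-system; this is legitimate because the coordinate names of $\vz$ and of $\vv$ may be permuted independently, absorbing $\mP_\pi,\mP_{\pi'}$ into the candidate solution.

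First I would apply the change-of-variables formula to each environment $E\in\mathfrak{E}_j$: since $\vv=\bm{\tau}(\vz)$ and $\bm{\tau}$ is a diffeomorphism, for every $\hat{\vz}\in\mO_{\vz}$ we have
\begin{equation}
\notag
p_E(\hat{\vz})=q_E(\hat{\vv})\,\absx{\det\mJ_{\bm{\tau}}(\hat{\vz})},\qquad \hat{\vv}=\bm{\tau}(\hat{\vz}).
\end{equation}
By \Cref{lemma:support-relation}, $\hat{\vv}\in\mO_{\vv}$, so all the joint densities are strictly positive and the Jacobian determinant is nonzero. Crucially, $\mJ_{\bm{\tau}}(\hat{\vz})$ does not depend on $E$; hence, forming the ratio of the above identity for the two environments $E_1^{(j)}$ and $E_k^{(j)}$, the Jacobian factor cancels and I obtain
\begin{equation}
\notag
\frac{p_{E_k^{(j)}}(\hat{\vz})}{p_{E_1^{(j)}}(\hat{\vz})}=\frac{q_{E_k^{(j)}}(\hat{\vv})}{q_{E_1^{(j)}}(\hat{\vv})}.
\end{equation}

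Next I would expand both sides using the Markov factorisations $p_E(\hat{\vz})=\prod_{i}p_i^E(\hat{\vz}_i\mid\hat{\vz}_{\pa_{\G}(i)})$ and $q_E(\hat{\vv})=\prod_{i}q_i^E(\hat{\vv}_i\mid\hat{\vv}_{\pa_{\hat{\G}}(i)})$. Because $\mathfrak{E}_j$ is a single-node soft intervention on node $j$, \Cref{def:intervention} gives $p_i^{E_1^{(j)}}=p_i^{E_k^{(j)}}$ as functions for every $i\neq j$, so each factor with $i\neq j$ is identical in numerator and denominator and cancels pointwise (every factor being positive on the common support). The left ratio therefore collapses to $p_j^{E_k^{(j)}}(\hat{\vz}_j\mid\hat{\vz}_{\pa_{\G}(j)})/p_j^{E_1^{(j)}}(\hat{\vz}_j\mid\hat{\vz}_{\pa_{\G}(j)})$. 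The identical argument in the $\vv$-system, now using that $\mathfrak{E}_j$ intervenes on node $j$ of $\hat{\G}$, collapses the right ratio to $q_j^{E_k^{(j)}}(\hat{\vv}_j\mid\hat{\vv}_{\pa_{\hat{\G}}(j)})/q_j^{E_1^{(j)}}(\hat{\vv}_j\mid\hat{\vv}_{\pa_{\hat{\G}}(j)})$. Equating the two collapsed ratios yields the claim.

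The only genuine obstacle is bookkeeping rather than depth: I must ensure each non-target conditional factor is environment-independent as a \emph{function} (not merely at the point $\hat{\vz}$), which is exactly what \Cref{def:intervention} provides, and that positivity on $\mO_{\vz}$ and $\mO_{\vv}$ legitimises the factor-by-factor division. Securing this well-definedness on the common supports, together with the permutation reduction that aligns the intervention targets to node $j$ on both sides, is where all the care goes; the cancellation of the Jacobian is the conceptual heart and is immediate once the shared-mixing structure is invoked.
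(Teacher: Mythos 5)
Your proposal is correct and follows essentially the same route as the paper: apply the change-of-variables identity $p_E(\hat{\vz})=q_E(\hat{\vv})\absx{\det\mJ_{\bm{\tau}}(\hat{\vz})}$ to each environment in $\mathfrak{E}_j$, cancel the environment-independent Jacobian by taking the ratio across $E_k^{(j)}$ and $E_1^{(j)}$, and then collapse both factorised ratios to the single node-$j$ factor using \Cref{def:intervention}. Your explicit remark about absorbing the permutations $\pi,\pi'$ into the labelling makes precise a step the paper leaves implicit, but it is the same argument.
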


\begin{proof}
    Since $\vv = \bm{\tau}(\vz)$,  
    by the change-of-measure formula \citep{schwartz1954formula} we have that for $\forall \hat{\vz} \in\mO_{\vz}$,
    \begin{equation}
        \label{eq:change-of-measure}
        \begin{aligned}
             \prod_{i=1}^d p_i^E\left(\hat{\vz}_i\mid\hat{\vz}_{\mathrm{pa}_{\G} (i)}\right) &= p_E(\hat{\vz}) = q_E(\hat{\vv})\absx{\det\mJ_{\bm{\tau}}(\hat{\vz})} = \prod_{i=1}^d q_i^E\left(\bm{\tau}_i(\hat{\vz})\mid\bm{\tau}_{\mathrm{pa}_{\hat{\G}}(i)}(\hat{\vz})\right) \absx{\det\mJ_{\bm{\tau}}(\hat{\vz})}
        \end{aligned}
    \end{equation}
    for all $E\in\mathfrak{E}_j$, where $\hat{\vv}=\bm{\tau}(\hat{\vz})$. By Assumption ($\romannumeral 2$) and \Cref{def:intervention}, we know that $p_i^{E_k^{1}}=p_i^{E_1^{(1)}} \Leftrightarrow i \neq 1$ and $q_i^{E_k^{1}}=q_i^{E_1^{(1)}} \Leftrightarrow i \neq 1$ for all $k>1$. Thus, we have that
    \begin{equation}
        \notag
        \prod_{i=1}^d \frac{p_i^{E_k^{(j)}}\left(\hat{\vz}_i\mid\hat{\vz}_{\mathrm{pa}_{\G} (i)}\right)}{p_i^{E_1^{(j)}}\left(\hat{\vz}_i\mid\hat{\vz}_{\mathrm{pa}_{\G} (i)}\right)} = \frac{p_j^{E_k^{(j)}}}{p_j^{E_1^{(j)}}}(\hat{\vz}_j\mid \hat{\vz}_{\hpa_{\G}(j)})
    \end{equation}
    and
    \begin{equation}
        \notag
        \prod_{i=1}^d \frac{q_i^{E_k^{(j)}}\left(\hat{\vv}_i\mid\hat{\vv}_{\mathrm{pa}_{\hat{\G}} (i)}\right)}{q_i^{E_1^{(j)}}\left(\hat{\vv}_i\mid\hat{\vv}_{\mathrm{pa}_{\hat{\G}} (i)}\right)} = \frac{q_j^{E_k^{(j)}}}{q_j^{E_1^{(j)}}}(\hat{\vv}_j\mid \hat{\vv}_{\hpa_{\G}(j)}).
    \end{equation}
    Since the LHS of the above two equations are the same by \Cref{eq:change-of-measure}, the RHS must also be the same, concluding the proof.
\end{proof}

We assume WLOG that the vertices of $\G$ are labelled such that $i\to j \Rightarrow i<j$, and that $\pi(i) =i,\forall i\in[d]$. Also we can assume the nodes are fixed and only consider how they are connected, \emph{i.e.,} $\pi'(i)=i,\forall i\in[d]$. 
\footnote{This is also WLOG because we now have groups of soft interventions where each group corresponds to a single node, so we can just relabel the node in $\hat{\G}$ that corresponds to the $i$-th group as node $i$.}

\begin{lemma}
    \label{lemma:non-degenerate-open}
    We have $\left(\bm{\tau}(\mN_{\vz})\right)^{\mathrm{o}} = \left(\bm{\tau}^{-1}(\mN_{\vv})\right)^{\mathrm{o}} = \emptyset$.
\end{lemma}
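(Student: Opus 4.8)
The plan is to exploit the single fact that $\bm{\tau}$ is a diffeomorphism of $\R^d$ (established at the start of the proof of \Cref{thm:single-node-soft-nonparam} from the assumption that both $\vh$ and $\hat{\vh}$ are diffeomorphisms), hence in particular a homeomorphism, and to recall that homeomorphisms commute with the interior operator. Concretely, I would first record the elementary topological claim: if $f:\R^d\mapsto\R^d$ is a homeomorphism and $S\subseteq\R^d$ is arbitrary, then $\left(f(S)\right)^{\mathrm{o}} = f\!\left(S^{\mathrm{o}}\right)$.

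To prove this claim I would argue by two inclusions. Since $f$ is a homeomorphism it is an open map, so $f(S^{\mathrm{o}})$ is an open set contained in $f(S)$; by maximality of the interior this gives $f(S^{\mathrm{o}})\subseteq (f(S))^{\mathrm{o}}$. For the reverse inclusion, I would apply the same reasoning to the homeomorphism $f^{-1}$ and the set $f(S)$: the set $(f(S))^{\mathrm{o}}$ is open and contained in $f(S)$, so $f^{-1}\!\left((f(S))^{\mathrm{o}}\right)$ is open (as $f^{-1}$ is continuous) and contained in $S$, whence $f^{-1}\!\left((f(S))^{\mathrm{o}}\right)\subseteq S^{\mathrm{o}}$; applying $f$ yields $(f(S))^{\mathrm{o}}\subseteq f(S^{\mathrm{o}})$.

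With the claim in hand the lemma is immediate. Taking $f=\bm{\tau}$ and $S=\mN_{\vz}$ gives $\left(\bm{\tau}(\mN_{\vz})\right)^{\mathrm{o}} = \bm{\tau}\!\left(\mN_{\vz}^{\mathrm{o}}\right) = \bm{\tau}(\emptyset)=\emptyset$, using the hypothesis $\mN_{\vz}^{\mathrm{o}}=\emptyset$ from assumption (iii) of the theorem. Since $\bm{\tau}^{-1}$ is also a diffeomorphism, and hence a homeomorphism, taking $f=\bm{\tau}^{-1}$ and $S=\mN_{\vv}$ gives $\left(\bm{\tau}^{-1}(\mN_{\vv})\right)^{\mathrm{o}} = \bm{\tau}^{-1}\!\left(\mN_{\vv}^{\mathrm{o}}\right)=\emptyset$ in the same way.

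There is no genuinely hard step here; the only point requiring a moment's care is the direction of the two inclusions in the topological claim, together with the observation that it must be invoked once for $\bm{\tau}$ and once for its inverse (rather than for $\bm{\tau}$ alone), since the two assertions of the lemma push the null set forward and pull it back respectively. I would also note that differentiability plays no role beyond guaranteeing that $\bm{\tau}$ and $\bm{\tau}^{-1}$ are continuous bijections, so mere bicontinuity already suffices for the argument.
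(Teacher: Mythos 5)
Your proposal is correct and follows essentially the same route as the paper, which simply observes that the claim ``immediately follows'' from $\mN_{\vz}^{\mathrm{o}}=\mN_{\vv}^{\mathrm{o}}=\emptyset$ and the fact that $\bm{\tau}$ is a diffeomorphism; you have merely made explicit the underlying topological fact that a homeomorphism commutes with the interior operator. Your added remark that only bicontinuity (not differentiability) is needed is accurate but does not change the argument.
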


\begin{proof}
    The result immediately follows from the assumption that $\mN_{\vz}^{\mathrm{o}}=\mN_{\vv}^{\mathrm{o}}=\emptyset$ and that $\tau$ is a diffeomorphism.
\end{proof}

For any vertex set $V$, we use $\G_V$ to denote its corresponding induced subgraph of $\G$. We first prove the following statements by induction on $j$: 
    \begin{enumerate}[label={(\arabic*)}]
        \item $\forall i\neq j$, $i\in\mathrm{pa}_{\G}(j) \Leftrightarrow i\in\mathrm{pa}_{\G'}\left(j\right)$;
        \item $\forall j\in[d]$, there exists a continuously differentiable function $\phi_i$ such that $\vv_{j}=\phi_j\left(\vz_{\hpa_{\G}(j)}\right)$. Moreover, $\frac{\partial \phi_j}{\partial\vz_j} \not\equiv 0$ (\emph{i.e.}, not always zero). 
        \item $\forall j\in[d]$, there exists a continuously differentiable function $\Upsilon_j$ such that $\vv_{\hpa_{\G}(j)} = \Upsilon_j(\vz_{\hpa_{\G}(j)})$.
    \end{enumerate}

    For $j=1$, by assumption $\mathrm{pa}_{\G}(j)=\emptyset$. 
    \Cref{lemma:single-node-intervention-dist-ratio} implies that for any $\hat{\vz}\in\mO_{\vz}$,
    \begin{equation}
        \label{eq:single-node-intervention-dist-ratio}
        \frac{p_1^{E_k^{(1)}}}{p_1^{E_1^{(1)}}}(\hat{\vz}_1) = \frac{q_{1}^{E_k^{(1)}}}{q_{1}^{E_1^{(1)}}}\left(\hat{\vv}_{1}\mid \hat{\vv}_{\mathrm{pa}_{\hat{\G}}(1)} \right),\forall 2\leq k\leq K_1.
    \end{equation}
    Then for $\forall i \in\hpa_{\hat{\G}}\left(1\right)$, taking the partial derivative w.r.t $\vv_j$ gives
    \begin{equation}
        \notag
        \frac{\partial}{\partial \hat{\vv}_i}\frac{q_{1}^{E_k^{(1)}}}{q_{1}^{E_1^{(1)}}}\left(\hat{\vv}_{1}\mid \hat{\vv}_{\mathrm{pa}_{\hat{\G}}(1)} \right) = \left(\frac{p_1^{E_k^{(1)}}}{p_1^{E_1^{(1)}}}\right)' (\hat{\vz}_1)\cdot\frac{\partial \hat{\vz}_1}{\partial \hat{\vv}_i}  \Rightarrow  \nabla_{\vv_{\hpa_{\hat{\G}}(1)}} \frac{q_{1}^{E_k^{(1)}}}{q_{1}^{E_1^{(1)}}}\left(\hat{\vv}_{1}\mid \hat{\vv}_{\mathrm{pa}_{\hat{\G}}(1)} \right) = \left(\frac{p_1^{E_k^{(1)}}}{p_1^{E_1^{(1)}}}\right)' (\hat{\vz}_1)\cdot\nabla_{\vv_{\hpa_{\hat{\G}}(1)}}\hat{\vz}_1.
    \end{equation}
    Thus, 
    \begin{equation}
        \notag
        \mathrm{rank}\left[\nabla_{\vv_{\hpa_{\hat{\G}}(1)}}\frac{q_{1}^{E_k^{(1)}}}{q_{1}^{E_1^{(1)}}}\left(\hat{\vv}_{1}\mid \hat{\vv}_{\mathrm{pa}_{\hat{\G}}(1)} \right): 2\leq k\leq K_1\right]\leq 1.
    \end{equation}
    Note that the above inequality holds for $\forall \hat{\vv}\in\mO_{\vv}$.
    If $\mathrm{pa}_{\hat{\G}}(1)\neq \emptyset$, then this would contradict the non-degeneracy assumption  \textit{(\romannumeral 3)} which implies that the above matrix should have rank $\geq 2$ at some point $\hat{\vv}\in \mO_{\vv}$. Hence we must have $\mathrm{pa}_{\hat{\G}}(1) = \emptyset$, implying that (1) holds for $j=1$.

    Taking the derivative of both sides of \Cref{eq:single-node-intervention-dist-ratio} w.r.t $\vz_i, i\geq 2$ implies that $\left(\frac{q_{1}^{E_k^{(1)}}}{q_{1}^{E_1^{(1)}}}\right)^{'}(\hat{\vv}_1)\cdot\frac{\partial\hat{\vv}_1}{\partial\hat{\vz}_i}=0$. By our assumption  \textit{(\romannumeral 3)}, for $\forall \hat{\vv}\in\mO_{\vv}\setminus\mN_{\vv}$, there exists $2\leq k\leq K_1$ such that $\left(\frac{q_{1}^{E_k^{(1)}}}{q_{1}^{E_1^{(1)}}}\right)^{'}(\hat{\vv}_1)\neq 0$, and thus we have $\frac{\partial\hat{\vv}_1}{\partial\hat{\vz}_i}=0, \forall \hat{\vz}\in\bm{\tau}^{-1}\left(\mO_{\vv}\setminus\mN_{\vv}\right)$.
    Since $\bm{\tau}$ is a diffeomorphism, we can deduce that $\bm{\tau}^{-1}\left(\mO_{\vv}\setminus\mN_{\vv}\right) = \mO_{\vz}\setminus\bm{\tau}^{-1}\left(\mN_{\vv}\right)$ and $\left(\bm{\tau}^{-1}\left(\mN_{\vv}\right)\right)^{\mathrm{o}}=\emptyset$ by \Cref{lemma:non-degenerate-open}. As a result, we actually have $\frac{\partial\hat{\vv}_1}{\partial\hat{\vz}_i}=0, \forall \hat{\vz}\in\mO_{\vz}$. Hence in $\mO_{\vz}$ there exists a continuous differentiable function $\phi_1$ such that $\vv_1 = \phi_1(\vz_1)$, proving (2). Finally, (3) directly follows from (2) since $\pa_{\G}(1)=\emptyset$, concluding the proof for $j=1$.

    Now suppose that the statement holds up to $j-1$, and we need to prove it for $j$. 
    Again by \Cref{lemma:single-node-intervention-dist-ratio} we have for $\forall \hat{\vz}\in\mO_{\vz}$ that
    \begin{equation}
        \label{eq:single-node-intervention-dist-ratio-2}
        \frac{p_j^{E_k^{(j)}}}{p_j^{E_1^{(j)}}}\left(\hat{\vz}_j\mid \hat{\vz}_{\mathrm{pa}_{\G}(j)}\right) = \frac{q_{j}^{E_k^{(j)}}}{q_{j}^{E_1^{(j)}}}\left(\hat{\vv}_{j}\mid \hat{\vv}_{\mathrm{pa}_{\hat{\G}}(j)} \right),\quad\forall 2\leq k\leq K_j.
    \end{equation}
    For all $i\notin\hpa_{\G}(j)$, taking partial derivative w.r.t. $\vz_i$ gives
    \begin{equation}
        \notag
        0 = \sum_{\ell\in\hpa_{\hat{\G}}\left(j\right)} \frac{\partial}{\partial \hat{\vv}_{\ell}} \frac{q_{j}^{E_k^{(j)}}}{q_{j}^{E_1^{(j)}}}\left(\hat{\vv}_{j}\mid \hat{\vv}_{\mathrm{pa}_{\hat{\G}}(j)} \right)\cdot\frac{\partial \hat{\vv}_{\ell}}{\partial\hat{\vz}_i},\quad\forall 2\leq k\leq K_j,
    \end{equation}
    \emph{i.e.},
    \begin{equation}
        \notag
        \left[ \nabla_{\vv_{\hpa_{\hat{\G}}(j)}} \frac{q_{j}^{E_k^{(j)}}}{q_{j}^{E_1^{(j)}}}\left(\hat{\vv}_{j}\mid \hat{\vv}_{\mathrm{pa}_{\hat{\G}}(j)} \right): 2\leq k\leq K_j\right]^\top \frac{\partial \hat{\vv}_{\hpa_{\hat{\G}}(j)}}{\partial\hat{\vz}_i} = 0.
    \end{equation}
    Similar to the $j=1$ case, by assumption \textit{(\romannumeral 3)}, we know that the above corfficient matrix has full row rank for $\forall \hat{\vv}\in\mO_{\vv}\setminus\mN_{\vv}$, so for $\forall \vz\in\bm{\tau}^{-1}\left(\mO_{\vv}\setminus\mN_{\vv}\right) = \mO_{\vz}\setminus\bm{\tau}^{-1}\left(\mN_{\vv}\right)$, we have $\frac{\partial \hat{\vv}_{\hpa_{\hat{\G}}(j)}}{\partial\hat{\vz}_i} = 0$. Since $\left(\bm{\tau}^{-1}(\mN_{\vv})\right)^{\mathrm{o}}=\emptyset$ by \Cref{lemma:non-degenerate-open}, for all $\hat{\vz}\in\mN_{\vz}$ we can choose a sequence of points $\hat{\vz}^{(i)}, i=1,2,\cdots$ in $\mO_{\vz}$ such that $\hat{\vz}^{(i)}\to\hat{\vz}$. Since $\bm{\tau}$ is a diffeomorphism, its derivatives are continuous and we can deduce that $\frac{\partial \hat{\vv}_{\hpa_{\hat{\G}}(j)}}{\partial\hat{\vz}_i} = \lim_{\ell\to+\infty} \frac{\partial \hat{\vv}_{\hpa_{\hat{\G}}(j)}^{(\ell)}}{\partial\hat{\vz}_i^{(\ell)}} = 0$. As a result, $\frac{\partial \hat{\vv}_{\hpa_{\hat{\G}}(j)}}{\partial\hat{\vz}_i} = 0$ actually holds for all $\vz\in\mO_{\vz}$. Hence, there exists a continuous differentiable function $\Upsilon_j$ such that $\vv_{\hpa_{\hat{\G}}(j)} = \Upsilon_j\left(\vz_{\hpa_{\G}(j)}\right)$.
    
    By our assumption, $\mathrm{pa}_{\G}(j)\subseteq [j-1]$. Suppose that $\mathrm{pa}_{\hat{\G}}(j) \nsubseteq \left\{i:i<j\right\}$, let $\ell\in \mathrm{pa}_{\hat{\G}}(j) \setminus \left\{i:i<j\right\}$, then by induction hypothesis, $\hat{\vv}_{t} = \bm{\tau}_t(\hat{\vz}), \hat{\vz}\in\mO_{\vz}$, $t=1,2,\cdots,j,\ell$ are all functions of $\hat{\vz}_1,\cdots,\hat{\vz}_j$. Since $\bm{\tau}$ is a diffeomorphism and $\mO_{\vz}$ is the support of the distributions $p_{E}, E\in\mathfrak{E}$, we can deduce that the support of the latent variables $\left(\vv_{t}: t=1,2,\cdots,j,\ell\right)$ lie on a submanifold with dimension $\leq j$, which is impossible since $\vv$ is supported on the open set $\mO_{\vv}\subseteq\R^d$ by assumption \textit{(\romannumeral 1)}. 

    Hence, we must have $\mathrm{pa}_{\hat{\G}}(j) \subseteq \left\{i:i<j\right\}$. Furthermore, if there exists $i\in \mathrm{pa}_{\hat{\G}}(j)$ such that $i\notin\pa_{\G}(j)$ 
    , then the induction hypothesis implies that $\frac{\partial \vv_{i}}{\partial \vz_i} \not\equiv 0$, but $\vv_{i}$ is a function of $\vz_{\hpa_{\G}(j)}$ as previously derived, which is also a contradiction. Thus we actually have $\mathrm{pa}_{\hat{\G}}(j) \subseteq \pa_{\G}(j)$. 
    
    In a completely symmetric manner, we can take the derivatives of \Cref{eq:single-node-intervention-dist-ratio-2} w.r.t. $\vv_i, \forall i\in\hpa_{\hat{\G}}(j)$ and obtain that $\pa_{\G}(j) \subseteq \mathrm{pa}_{\hat{\G}}(j)$. 
    Hence, $\mathrm{pa}_{\hat{\G}}(j) = \pa_{\G}(j)$, completing the proof of (1) and (3) for the $j$ case. 

    Finally, if $\frac{\partial\vv_{j}}{\partial\vz_j}\equiv 0$, then by (3) and the induction hypothesis, $\vv_{1},\cdots,\vv_{j}$ 
    are all functions of $\vz_{[j-1]}$, which implies that $\left(\vv_{1},\cdots,\vv_{j}\right)$ 
    lies on a submanifold with dimension $\leq j-1$, again contradicting assumption \textit{(\romannumeral 1)}. Thus $\frac{\partial\vv_{j}}{\partial\vz_j}\not\equiv 0$. This completes the proof of our inductive step.

    To recap, we now know that
    \begin{itemize}
        \item $\G=\hat{\G}$, and
        \item For $\forall i\in[d]$, there exists a function $\Upsilon_i$ such that 
        $\vv_{\hpa_{\G}(i)} = \Upsilon_i\left(\vz_{\hpa_{\G}(i)}\right)$.
    \end{itemize}
    It remains to show that for $\forall k\in\mathrm{pa}_{\G}(i)\setminus\dom_{\G}(i)$, $\Upsilon_i$ doesn't depend on $\vz_k$.

    By definition, if $k\in\mathrm{pa}_{\G}(i)\setminus\dom_{\G}(i)$, we know that there exists $j\in\mathrm{ch}_{\G}(i)$ such that $j\notin\mathrm{ch}_{\G}(k)$. We have shown that $\vv_{i}$, as a component of $\vv_{\hpa_{\G}(j)}$, is a function of $\vz_{\hpa_{\G}(j)}$. By the choice of $k$, we have $k\notin\hpa_{\G}(j)$, so that $\vv_{i}$ does not depend on $\vz_k$. The conclusion follows.

\section{Omitted Proofs for \Cref{thm:informal-linear-ambiguity} and \Cref{thm:informal-non-param-ambiguity}}
\label{appsec:ambiguity}

In this section we provide detailed proofs of main ambiguity results. 

\begin{definition}
\label{def:effect-respecting-matrix}
    We say that a matrix $\mM\in\R^{d\times d}$ is effect-respecting for a causal graph $\G$, or $\mM\in\mathcal{M}_{\dom}(\G)$, if $\mM_{ij}\neq 0 \Leftrightarrow j\in\overline{\dom}_{\G}(i)$. We also write $\mM\in\mathcal{M}_{\dom}^0(\G)$ if $\mM$ is invertible and $\mM_{ij}\neq 0 \Rightarrow j\in\overline{\dom}_{\G}(i)$. Finally, we write $\mM\in\overline{\mathcal{M}}_{\dom}(\G)$ if $\mM_{ij}\neq 0 \Rightarrow j\in\overline{\dom}_{\G}(i)$.
\end{definition}

\begin{remark}
\label{remark:measure}
    By definition $\mathcal{M}_{\dom}^0(\G)$ is the set of all matrices $\mM$ where $\mM_{ij}\neq 0, \forall j\notin\overline{\dom}_{\G}(i)$, so it can be identified as $\R^{d+d_{\G}}$ where $d_{\G}=\sum_{i=1}^d\absx{\dom_{\G}(i)}$. Equipped with the Lebesgue measure, we have $\mathcal{M}_{\dom}(\G)\subset\mathcal{M}_{\dom}^0(\G)\subset\overline{\mathcal{M}}_{\dom}(\G)$ and $\overline{\mathcal{M}}_{\dom}(\G)\setminus\mathcal{M}_{\dom}(\G)$ is a null set. In the remaining part of this section, we will use measure-theoretic statement for $\mM\in \mathcal{M}_{\dom}(\G)$ in the above sense.
\end{remark}

We first present a result that serves as a good starting point to understand why this is the case. It states that latent representations that are equivalent under $\sim_{\dom}$ are essentially generated from the same causal graph.

\begin{proposition}
\label{prop:sp-linear-same-graph}
    Let $\mM$ be an invertible matrix such that $\mM_{ij}\neq 0 \Rightarrow j\in\overline{\dom}_{\G}(i)$. Suppose that the latent variables $\vz\in\R^d$ are generated from any distributions $p_i\left(\vz_i\mid\vz_{\pa_{\G}(i)}\right), i\in[d]$ with joint density $p(\vz)=\prod_{i=1}^d p_i\left(\vz_i\mid\vz_{\pa_{\G}(i)}\right)$ 
    , then the joint density of $\vv = \mM \vz$ can be written as $q(\vv)=\prod_{i=1}^d q_i\left(\vv_i\mid\vv_{\pa_{\G}(i)}\right)$ for some density functions $q_i, i\in[d]$.
\end{proposition}

\subsection{Proof of \Cref{prop:sp-linear-same-graph}}

We first prove the following lemma:

\begin{lemma}
\label{lemma:dom-transform-same-parents}
    Let $\mM\in\mathcal{M}_{\dom}^0(\G)$ and latent variables $\vv=\mM\vz$, then for $\forall i\in[d]$, there exists invertible matrices $\mM_i$ and $\mM_i^-$ such that $\vv_{\pa_{\G}(i)}=\mM_i^-\vz_{\pa_{\G}(i)}$ and $\vv_{\hpa_{\G}(i)}=\mM_i\vz_{\hpa_{\G}(i)}$.
\end{lemma}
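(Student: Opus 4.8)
The plan is to exhibit $\mM_i$ and $\mM_i^-$ explicitly as principal submatrices of $\mM$ and then establish their invertibility by a triangularity argument. First I would fix a topological order on the nodes of $\G$ so that $j\in\pa_{\G}(\ell)\Rightarrow j<\ell$; relabeling the indices accordingly amounts to a simultaneous permutation of the rows and columns of $\mM$ and preserves membership in $\mathcal{M}_{\dom}^0(\G)$ relative to the relabeled graph, exactly as in the proof of \Cref{prop:sp-respecting-inverse}. Since $\mM\in\mathcal{M}_{\dom}^0(\G)$ means $\mM_{\ell j}\neq 0\Rightarrow j\in\overline{\dom}_{\G}(\ell)$, and since $\dom_{\G}(\ell)\subseteq\pa_{\G}(\ell)$ directly from \Cref{def:effect-domination}, every nonzero entry $\mM_{\ell j}$ has either $j=\ell$ or $j\in\pa_{\G}(\ell)$, hence $j\leq\ell$. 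Thus $\mM$ is lower triangular, and because it is invertible its diagonal entries $\mM_{\ell\ell}$ are all nonzero.

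The key structural input is \Cref{prop:sp-chain}: for any $\ell\in\pa_{\G}(i)$ we have $\dom_{\G}(\ell)\subseteq\pa_{\G}(i)$, so that $\overline{\dom}_{\G}(\ell)=\dom_{\G}(\ell)\cup\{\ell\}\subseteq\pa_{\G}(i)$. Expanding $\vv_\ell=\sum_{j}\mM_{\ell j}\vz_j=\sum_{j\in\overline{\dom}_{\G}(\ell)}\mM_{\ell j}\vz_j$, this shows that for every $\ell\in\pa_{\G}(i)$ the coordinate $\vv_\ell$ depends only on the $\vz_j$ with $j\in\pa_{\G}(i)$. Likewise, since $\overline{\dom}_{\G}(i)\subseteq\hpa_{\G}(i)$ follows immediately from $\dom_{\G}(i)\subseteq\pa_{\G}(i)$, the coordinate $\vv_i$ involves only $\vz_j$ with $j\in\hpa_{\G}(i)$. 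I would therefore define $\mM_i^-:=(\mM_{\ell j})_{\ell,j\in\pa_{\G}(i)}$ and $\mM_i:=(\mM_{\ell j})_{\ell,j\in\hpa_{\G}(i)}$, which gives $\vv_{\pa_{\G}(i)}=\mM_i^-\vz_{\pa_{\G}(i)}$ and $\vv_{\hpa_{\G}(i)}=\mM_i\vz_{\hpa_{\G}(i)}$ with no leakage from coordinates outside these index sets.

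Finally I would establish invertibility. Any principal submatrix of a lower-triangular matrix, extracted while preserving the global order, is itself lower triangular, and its diagonal is a subset of the diagonal entries of $\mM$. Since those are all nonzero, both $\mM_i^-$ and $\mM_i$ are lower triangular with nonzero diagonal, hence invertible, completing the proof.

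I do not anticipate a serious obstacle: the computation is routine once the triangular structure is in place. The one point genuinely requiring care is verifying that the index sets $\pa_{\G}(i)$ and $\hpa_{\G}(i)$ are closed under $\overline{\dom}_{\G}(\cdot)$, so that the extracted blocks act on $\vz_{\pa_{\G}(i)}$ and $\vz_{\hpa_{\G}(i)}$ without pulling in outside coordinates. This closure is exactly what \Cref{prop:sp-chain} supplies, and it is the single structural fact that makes the submatrix extraction well-defined; the triangularity observation then reduces invertibility to the nonvanishing of the diagonal.
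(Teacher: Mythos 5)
Your proposal is correct and follows essentially the same route as the paper: both rest on \Cref{prop:sp-chain} to show that $\pa_{\G}(i)$ and $\hpa_{\G}(i)$ are closed under $\overline{\dom}_{\G}(\cdot)$, and both obtain invertibility by exhibiting the relevant submatrix as triangular with nonzero diagonal under a topological ordering (you fix a global order up front, the paper uses a local permutation of $\hpa_{\G}(i)$, which is an immaterial difference). No gaps.
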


\begin{proof}
    $\forall j\in\hpa_{\G}(i)$, we know that $\vv_j$ is a linear function of $\vz_{\ell}, \ell\in\overline{\dom}_{\G}(j)$.
    By \Cref{prop:sp-chain}, we know that $\overline{\dom}_{\G}(j)\subseteq\hpa_{\G}(i)$, so each $\vv_j, j\in\hpa_{\G}(i)$ is a linear function of $\vz_{\hpa_{\G}(i)}$. Thus we can write $\vv_{\hpa_{\G}(i)}=\mM_i\vz_{\hpa_{\G}(i)}$. In the following we argue that $\mM_i$ is invertible. Let $\pi$ be a permutation on $\hpa_{\G}(i)$ such that $k\in\pa_{\G}(\ell) \Rightarrow \pi(k) < \pi(\ell)$ (such $\pi$ can always be chosen since $\G$ is acyclic), then we can write
    \begin{equation}
        \label{eq:impossibility-invertible}
        \left(\hat{\vv}_{\pi(j)}:j\in\hpa_{\G}(i)\right)^\top = \tilde{\mM}_i \left(\hat{\vz}_{\pi(j)}:j\in\hpa_{\G}(i)\right)^\top
    \end{equation}
    where $\tilde{\mM}_i$ is an upper triangular matrix with non-zero diagonal entries 
    by our choice of $\mM$. Since $\mM_i$ can be obtained from $\tilde{\mM}_i$ be exchanging a few rows and columns, $\mM_i$ is invertible as well.

    Similarly, using the fact that $\forall j\in\pa_{\G}(i)$, $\overline{\dom}_{\G}(j)\subseteq\pa_{\G}(i)$, we can prove the existence of an invertible matrix $\mM_i^-$ such that $\vv_{\pa_{\G}(i)}=\mM_i^-\vz_{\pa_{\G}(i)}$.
\end{proof}

Returning to the proof of \Cref{prop:sp-linear-same-graph}. Assume WLOG that the nodes of $\G$ are ordered in a way such that $i\in\pa_{\G}(j) \Rightarrow i<j$, so that $\mM$ is a lower-triangular matrix. 
The joint density of $\vv$ can be written as
\begin{equation}
    \notag
    q(\vv) = \prod_{i=1}^d q\left(\vv_i\mid \vv_1,\cdots,\vv_{i-1}\right).
\end{equation}
Since $\vv=\mM z$ and $\mM$ is lower triangular and invertible (hence, with non-zero diagonals), we know that $(\vv_1,\vv_2,\cdots,\vv_{i-1})$ is an invertible linear function of $(\vz_1,\vz_2,\cdots,\vz_{i-1})$ and $(\vv_1,\vv_2,\cdots,\vv_{i})$ is an invertible linear function of $(\vz_1,\vz_2,\cdots,\vz_{i})$. Let $\hat{\vv}=\mM\hat{\vz}\in\R^d$, then we have
\begin{equation}
    \notag
    \begin{aligned}
        q\left(\hat{\vv}_i\mid \hat{\vv}_1,\cdots,\hat{\vv}_{i-1}\right) &= \frac{q(\hat{\vv}_1,\hat{\vv}_2,\cdots,\hat{\vv}_{i})}{q(\hat{\vv}_1,\hat{\vv}_2,\cdots,\hat{\vv}_{i-1})} = \frac{p(\hat{\vz}_1,\hat{\vz}_2,\cdots,\hat{\vz}_i)\det\hat{\mM}_{1:i,1:i}}{p(\hat{\vz}_1,\hat{\vz}_2,\cdots,\hat{\vz}_{i-1})\det\hat{\mM}_{1:i-1,1:i-1}}\\ 
        &\propto \frac{p(\hat{\vz}_1,\hat{\vz}_2,\cdots,\hat{\vz}_{i})}{p(\hat{\vz}_1,\hat{\vz}_2,\cdots,\hat{\vz}_{i-1})} = p\left(\hat{\vz}_i\mid \hat{\vz}_1,\cdots,\hat{\vz}_{i-1}\right) = p_i\left(\hat{\vz}_i\mid\hat{\vz}_{\pa_{\G}(i)}\right),
    \end{aligned}
\end{equation}
where $\hat{\mM}_{1:i,i:i}$ denotes that top-left submatrix of $\hat{\mM}$ of size $i\times i$, and the last step follows from the causal Markov condition (\Cref{asmp:causal-markov-condition}). On the other hand, let $q_i\left(\hat{\vv}_i\mid\hat{\vv}_{\pa_{\G}(i)}\right)$ be the conditional density of $\vv_i$ on its parents at $\hat{\vv}\in\R^d$. For $\forall j\in\pa_{\G}(i)$, from $\vv=\mM\vz$  
we know that $\vv_j$ is a linear function of $\vz_{\overline{\dom}_{\G}(j)}$. 
By \Cref{lemma:dom-transform-same-parents} we know that $\hat{\vv}_{\pa_{\G}(i)}$ is a linear function of $\hat{\vz}_{\pa_{\G}(i)}$ and $\hat{\vv}_{\hpa_{\G}(i)}$ is a linear function of $\hat{\vz}_{\hpa_{\G}(i)}$, so that
\begin{equation}
    \notag
    q\left(\hat{\vv}_{\pa_{\G}(i)}\right)\propto p\left(\hat{\vz}_{\pa_{\G}(i)}\right)\quad \text{and} \quad q\left(\hat{\vv}_{\hpa_{\G}(i)}\right)\propto p\left(\hat{\vz}_{\hpa_{\G}(i)}\right)
\end{equation}
and
\begin{equation}
    \notag
    q_i\left(\hat{\vv}_i\mid\hat{\vv}_{\pa_{\G}(i)}\right) \propto \frac{p\left(\hat{\vz}_{\hpa_{\G}(i)}\right)}{p\left(\hat{\vz}_{\pa_{\G}(i)}\right)} = p_i\left(\hat{\vz}_i\mid \hat{\vz}_{\pa_{\G}(i)}\right).
\end{equation}
Hence, we have $q_i\left(\hat{\vv}_i\mid\hat{\vv}_{\pa_{\G}(i)}\right) \propto q\left(\hat{\vv}_i\mid \hat{\vv}_1,\cdots,\hat{\vv}_{i-1}\right)$, so that
\begin{equation}
    \notag
    q(\hat{\vv})= \prod_{i=1}^d q_i\left(\hat{\vv}_i\mid\hat{\vv}_{\pa_{\G}(i)}\right) \propto \prod_{i=1}^d q_i\left(\hat{\vv}_i\mid\hat{\vv}_{\pa_{\G}(i)}\right).
\end{equation}
Since both sides integrate to $1$, it turns out that they are equal, as desired.

\subsection{Formal version and proof of \Cref{thm:informal-linear-ambiguity}: the linear case}

\begin{theorem}[Counterpart to \Cref{thm:linear-main-thm}]
\label{thm:linear-ambiguity}
    For any causal model $(\mH,\G)$ and any set of environments $\mathfrak{E}=\left\{E_k: k\in[K]\right\}$, suppose that we have observations $\left\{P_{\mX}^E\right\}_{E\in\mathfrak{E}}$ satisfying \Cref{asmp:data-generating-process}:
    \begin{equation}
        \notag
        \forall k\in[K],\quad \vz = \mA_k \vz + \bm{\Omega}_k^{\frac{1}{2}}\epsilon,\quad \vx = \mH^{\dagger}\vz
    \end{equation}
    such that
    \begin{enumerate}[label={(\roman*)}]
        \item the unmixing matrix $\mH\in\R^{d\times n}$ has full row rank;
        \item $\forall k\in[K]$ and $i,j\in[d]$, $(\mA_k)_{ij}\neq 0 \Leftrightarrow j\in\pa_{\G}(i)$ and $\bm{\Omega}_k$ is a diagonal matrix with positive entries;
        \item $\left\{\mB_k=\bm{\Omega}_k^{-\frac{1}{2}}(\mI-\mA_k)\right\}_{k=1}^K$ are node level non-degenerate in the sense of \Cref{asmp:independent-row},
    \end{enumerate}
    then there must exist a candidate solution $(\hat{\mH},\G)$ and a hypothetical data generating process
    \begin{equation}
        \notag
        \forall k\in[K],\quad \vv = \hat{\mA}_k \vv + \hat{\bm{\Omega}}_k^{\frac{1}{2}}\epsilon,\quad \vx = \hat{\mH}^{\dagger}\vv
    \end{equation}
    such that
    \begin{enumerate}[label={(\roman*$'$)}]
        \item the unmixing matrix $\hat{\mH}\in\R^{d\times n}$ has full row rank;
        \item $\forall k\in[K]$ and $i,j\in[d]$, $(\hat{\mA}_k)_{ij}\neq 0 \Leftrightarrow j\in\pa_{\G}(i)$ and $\hat{\bm{\Omega}}_k$ is a diagonal matrix with positive entries;
        \item $\left\{\hat{\mB}_k=\hat{\bm{\Omega}}_k^{-\frac{1}{2}}(\mI-\hat{\mA}_k)\right\}_{k=1}^K$ are node level non-degenerate in the sense of \Cref{asmp:independent-row},
    \end{enumerate}
    but
    \begin{equation}
        \notag
        \frac{\partial \vv_i}{\partial \vz_j} \neq 0,\quad \forall j\in\overline{\dom}_{\G}(i).
    \end{equation}
    Finally, if we additionally assume that
    \begin{enumerate}[label={(\roman*)}]
        \setcounter{enumi}{2}
        \item the environments are groups of single-node interventions: there exists a partition $\mathfrak{E}=\cup_{i=1}^d \mathfrak{E}_i$ such that $\I_{\vz}^{\mathfrak{E}_i}=\{i\}$ (see \Cref{def:intervention}),
    \end{enumerate}
    then we can guarantee the existence of $(\hat{\mH},\G)$ and weight matrices which, besides the properties listed above, also satisfy
    \begin{enumerate}[label={(\roman*$'$)}]
        \setcounter{enumi}{2}
        \item for the same partition $\mathfrak{E}=\cup_{i=1}^d \mathfrak{E}_i$, we have $\I_{\vv}^{\mathfrak{E}_i}=\{i\}$.
    \end{enumerate}
    In other words, additionally assuming that the environments are from single-node interventions does not resolve the ambiguity.
\end{theorem}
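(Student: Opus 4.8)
The plan is to produce the alternative model by a single invertible, \emph{effect-respecting} reparametrization of the latents, using \Cref{prop:sp-linear-same-graph} as the conceptual guarantee that this keeps the graph $\G$ fixed. Concretely, I would fix an invertible $\mM\in\mathcal{M}_{\dom}(\G)$ with strictly positive diagonal (so that every admissible entry $\mM_{ij}$, $j\in\overline{\dom}_{\G}(i)$, is genuinely nonzero), set $\hat{\mH}=\mM\mH$, and define $\vv=\mM\vz=\hat{\mH}\vx$. Keeping the \emph{same} noise $\epsilon$, the precision matrices are forced to be $\hat{\mB}_k=\mB_k\mM^{-1}$, since then $\hat{\mB}_k\vv=\mB_k\mM^{-1}\mM\vz=\mB_k\vz=\epsilon$. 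Because $\mM$ is invertible and $\mH$ has full row rank, $\hat{\mH}$ has full row rank and the same row space as $\mH$; as $\vv=\mM\vz$ is a deterministic bijective reparametrization of the latents, the induced observation law in each environment is identical to $P_{\mX}^{E_k}$. Finally $\partial\vv_i/\partial\vz_j=\mM_{ij}\neq0$ for every $j\in\overline{\dom}_{\G}(i)$, which is exactly the genuine-entanglement conclusion.

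It remains to verify that $(\hat{\mH},\G)$ with the $\hat{\mB}_k$ satisfies (i')--(iii'). The crux is the sparsity of $\hat{\mB}_k=\mB_k\mM^{-1}$. First, $\mM^{-1}\in\overline{\mathcal{M}}_{\dom}(\G)$: since domination is transitive (if $i\in\dom_{\G}(j)$ and $j\in\dom_{\G}(k)$ then $\ch_{\G}(k)\subseteq\ch_{\G}(j)\subseteq\ch_{\G}(i)$ and $k\in\ch_{\G}(i)$, so $i\in\dom_{\G}(k)$), the pattern $\overline{\mathcal{M}}_{\dom}(\G)$ is transitively closed, hence preserved under inversion. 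Then I would prove the combinatorial lemma
\begin{equation}
\notag
l\in\overline{\pa}_{\G}(i)\ \text{and}\ j\in\overline{\dom}_{\G}(l)\ \Longrightarrow\ j\in\overline{\pa}_{\G}(i),
\end{equation}
by case analysis on $l=i$ versus $l\in\pa_{\G}(i)$ and on $j=l$ versus $j\in\dom_{\G}(l)$; the only nontrivial case uses $i\in\ch_{\G}(l)\subseteq\ch_{\G}(j)$. As $\mB_k$ is supported on $\overline{\pa}_{\G}$ and $\mM^{-1}$ on $\overline{\dom}_{\G}$, this forces $\hat{\mB}_k$ to again be supported on $\overline{\pa}_{\G}$. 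Both factors are lower triangular in any topological order of $\G$, so the diagonal of the product is the product of diagonals, giving $(\hat{\mB}_k)_{ii}=\omega_{k,i,i}^{-1/2}/\mM_{ii}>0$; setting $\hat{\bm{\Omega}}_k^{-1/2}=\diag{(\hat{\mB}_k)_{ii}}$ and $\hat{\mA}_k=\mI-\hat{\bm{\Omega}}_k^{1/2}\hat{\mB}_k$ yields positive $\hat{\bm{\Omega}}_k$ and a zero-diagonal $\hat{\mA}_k$ supported on $\pa_{\G}$. For the converse support direction $(\hat{\mA}_k)_{ij}\neq0$ whenever $j\in\pa_{\G}(i)$, I would use genericity: for each $(k,i,j)$ the set of $\mM$ causing an accidental cancellation is a proper subvariety of $\mathcal{M}_{\dom}(\G)$, so a generic admissible $\mM$ avoids all finitely many. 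Node-level non-degeneracy (iii') is preserved since $(\hat{\mB}_k)_i=(\mB_k)_i\mM^{-1}$ and right-multiplication by the invertible $\mM^{-1}$ is a linear isomorphism, whence $\dim\spanl{(\hat{\mB}_k)_i:k\in[K]}=\absx{\pa_{\G}(i)}+1$.

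For the additional single-node statement, the key observation is that the construction never mixes rows, so it preserves which rows vary across a group. If $\I_{\vz}^{\mathfrak{E}_i}=\{i\}$, then for $j\neq i$ the row $(\mB_k)_j$ is constant over $k\in\mathfrak{E}_i$, hence so is $(\hat{\mB}_k)_j=(\mB_k)_j\mM^{-1}$, and $q_j$ does not change inside $\mathfrak{E}_i$. Meanwhile $(\mB_k)_i$ genuinely varies over the group, and since $\mM^{-1}$ is invertible so does $(\hat{\mB}_k)_i$; distinct rows correspond to distinct conditionals $q_i$ because the noise $\epsilon_i$ is held fixed, so the row determines the conditional. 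Hence $\I_{\vv}^{\mathfrak{E}_i}=\{i\}$ for the very same partition.

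The main obstacle is precisely the support/closure combinatorics of $\hat{\mB}_k=\mB_k\mM^{-1}$: establishing that $\overline{\mathcal{M}}_{\dom}(\G)$ is closed under inversion and proving the parent--domination lemma above are what make the perturbed model land back inside the admissible class. Everything else (full rank, positivity of $\hat{\bm{\Omega}}_k$, the genericity for genuine edges, and the interventional bookkeeping) is routine once those two facts are in place.
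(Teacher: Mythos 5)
Your proposal is correct and follows essentially the same route as the paper: reparametrize the latents by a generic effect-respecting matrix $\mM$, read off $\hat{\mB}_k=\mB_k\mM^{-1}$, use the closure of the $\overline{\dom}_{\G}$ sparsity pattern under inversion together with the fact that $\ell\in\hpa_{\G}(i),\,j\in\overline{\dom}_{\G}(\ell)\Rightarrow j\in\hpa_{\G}(i)$ to keep the support on $\hpa_{\G}$, handle the reverse support inclusion by genericity, and observe that span dimensions and the per-row intervention structure are preserved under right-multiplication by $\mM^{-1}$. The only step stated a bit tersely is ``transitively closed, hence preserved under inversion,'' which does need a short argument (the pattern forms a unital matrix subalgebra, so Cayley--Hamilton, or the paper's direct triangular computation in \Cref{prop:sp-respecting-inverse}), but this is exactly the lemma the paper supplies.
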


\begin{remark}
    Compared with our identifiability guarantee \Cref{thm:linear-main-thm}, \Cref{thm:linear-ambiguity} actually demonstrates a stronger form of impossibility. Specifically, it states that the SNA cannot be resolved even if both the ground-truth causal graph and the noise variables are known.
\end{remark}

We define
\begin{equation}
    \label{eq:impossibility-v-z-relation}
    \vv = \mM\vz
\end{equation}
where $\mM$ is an effect-respecting matrix. At this point we do not make any other restrictions on $\mM$, but we will specify the appropriate choise of $\mM$ later.

By assumption, the latent variables in the $k$-th environment are generated by
\begin{equation}
    \notag
    \vz=\mA_k\vz+\bm{\Omega}_k^{\frac{1}{2}}\epsilon,
\end{equation}
then $\vv=\mM(\mI-\mA_k)^{-1}\bm{\Omega}_k^{\frac{1}{2}}\epsilon$. Let $\hat{\bm{\Omega}}_k$ be the diagonal matrix with entries $\mM_{ii}^2\cdot(\bm{\Omega}_k)_{ii}, i\in[d]$ and $\hat{\mA}_k=\mI-\hat{\bm{\Omega}}_k^{\frac{1}{2}}\bm{\Omega}_k^{-\frac{1}{2}}(\mI-\mA_k)\mM^{-1}$, then $\vv=\hat{\mA}_k\vv+\hat{\bm{\Omega}}_k^{\frac{1}{2}}\epsilon$. Note that the choice of $\hat{\bm{\Omega}}_k$ here is to that the diagonal entries of $\hat{\mA}_k$ are zero, as we show below. 
It remains to show that: for almost all $\mM\in\mathcal{M}_{\dom}^0(\G)$, it holds for $\forall k\in[K]$ that $(\hat{\mA}_k)_{ij}= 0 \Leftrightarrow j\notin\pa_{\G}(i)$.

For the $\Leftarrow$ direction, since $\mM\in\mathcal{M}_{\dom}^0(\G)$, $\mM^{-1}\in\mathcal{M}_{\dom}^0(\G)$ as well. Thus, $\forall j\notin\pa_{\G}(i)$ we have
\begin{equation}
    \notag
    \begin{aligned}
        \left[(\mI-\mA_k)\mM^{-1}\right]_{ij} &= \sum_{\ell=1}^d (\mI-\mA_k)_{i\ell}\cdot(\mM^{-1})_{\ell j} = \sum_{\ell\in\hpa_{\G}(i)\cap\{\ell':j\in\overline{\dom}_{\G}(\ell')\}} (\mI-\mA_k)_{i\ell}\cdot(\mM^{-1})_{\ell j} \\
        &= \left\{
            \begin{aligned}
                0 &\quad \text{if } j\notin\hpa_{\G}(i) \\
                (\mM^{-1})_{ii} &\quad \text{if } j=i
            \end{aligned}
        \right.
    \end{aligned}
\end{equation}
where the last step holds because $\forall \ell\in [d]$, $\ell\in\hpa_{\G}(i), j\in\overline{\dom}_{\G}(\ell) \Rightarrow j\in\hpa_{i}$, and when $j=i$, the only such $\ell$ is $\ell=i$. Hence, we can see that our choice of $\hat{\mA}_k$ satisfies 
\begin{equation}
    \notag
    \left(\hat{\mA}_k\right)_{ij}= \left\{
    \begin{aligned}
        0-0=0 &\quad \text{if } j\notin\hpa_{\G}(i) \\
        1-\hat{\omega}_{k,i,i}^{\frac{1}{2}}\omega_{k,i,i}^{-\frac{1}{2}}(\mM^{-1})_{ii} = 0 &\quad \text{if } j=i,
    \end{aligned}
    \right.
\end{equation}
so $\left(\hat{\mA}_k\right)_{ij}\neq 0 \Rightarrow j\in\pa_{\G}(i)$.

Conversely, for $\forall j\in\pa_{\G}(i)$,
\begin{equation}
    \label{eq:zero-entry-constraints}
    (\hat{\mA}_k)_{ij}= 0 \Leftrightarrow \sum_{s\in\hpa_{\G}(i)} (\mI-\mA_k)_{is}(\mM^{-1})_{sj} = 0 \Leftrightarrow \sum_{s\in\hpa_{\G}(i)} (-1)^s (\mI-\mA_k)_{is} \det \mM_{sj}^- = 0
\end{equation}
where $\mM_{sj}^-$ is the $(d-1)\times(d-1)$ matrix obtained by removing the $s$-th row and $j$-th column of $\mM$, and the second step in the equation above follows from the fact that $\mM^{-1}=\det(\mM)^{-1}\mathrm{adj}(\mM)$, where $\mathrm{adj}(\mM)$ denotes the adjugate matrix of $\mM$ whose $(i,j)$-th entry is $(-1)^{i+j}\det\mM_{ij}^-$. 

\Cref{eq:zero-entry-constraints} holds if only if $\mM$ takes values on a lower-dimensional algebraic manifold of its embedded space $\R^{d+d_{\G}}$ (see \Cref{remark:measure}).  As a result, for almost every $\mM\in\mathcal{M}_{\dom}^0(\G)$, $\vv$ is generated from a linear causal model with graph $\G$ as defined in \Cref{latent-original}. Moreover, let $\hat{\mB}_k=\mB_k\mM^{-1}, k\in[K]$ , so that $\epsilon=\hat{\mB}_k\vv$ in the $k$-th environment. Then for all nodes $i \in [d]$ and $S\subseteq \mathrm{pa}(i)\cup\{i\}$, we have
\begin{align*}
\dim\spanl{\left(\hat{\mB_k}^\top\ve_i\right)_{S}: k\in[K]} =~& \dim\spanl{\mM^{-\top}\left(\left(\mB_k^\top\ve_i\right)_{S}: k\in[K] \right)}\\
=~& \dim\spanl{\left(\mB_k^\top\ve_i\right)_{S}: k\in[K]}
= \absx{\pa_{\G}(i)}+1,
\end{align*}
implying that $\hat{\mB}_k, k\in[K]$ satisfy \Cref{asmp:independent-row}. 

Now we have shown that for almost every $\mM\in\mathcal{M}_{\dom}^0(\G)$, we can construct a hypothetical data generating process with latent variables $\vv=\mM\vz$ that satisfies all requirements in \Cref{thm:linear-ambiguity}. Choose an arbitrary $\mM$ that is in $\mathcal{M}_{\dom}(\G)$, then we have that
\begin{equation}
    \notag
    \frac{\partial \vv_i}{\partial \vz_j}\neq 0, \quad j\notin\overline{\dom}_{\G}(i).
\end{equation}

Finally, if we additionally assume single-node interventions, $\forall k,\ell\in\mathfrak{E}_i$, we have that $(\mB_k)_j\neq(\mB_{\ell})_j \Leftrightarrow j=i$. For any $\mM\in\mathcal{M}_{\dom}^0(\G)$ (and specifically the $\mM$ that we have already chosen above), we have $(\hat{\mB}_k)_j = (\mB_k)_j\mM^{-1}$ and $(\hat{\mB}_\ell)_j = (\mB_\ell)_j\mM^{-1}, \forall j\in[d]$. Thus, $(\hat{\mB}_k)_j\neq (\hat{\mB}_\ell)_j \Leftrightarrow j=i$ as well, implying that $\mathfrak{E}_i$ is also a group of single-node interventions on $\vv$, concluding the proof.

\subsection{Formal statement and proof of \Cref{thm:non-param-ambiguity}: the non-parametric case}
\label{proof:non-param-ambiguity}

\begin{theorem}[Counterpart to \Cref{thm:single-node-soft-nonparam}]
\label{thm:non-param-ambiguity}
    For any causal model $(\vh,\G)$ and any set of environments $\mathfrak{E}$, suppose that we have observations $\left\{P_{\mX}^E\right\}_{E\in\mathfrak{E}}$ satisfying \Cref{asmp:data-generating-process}:
    \begin{equation}
        \notag
        \forall E\in\mathfrak{E}, \vz\sim p_E(\hat{\vz})=\prod_{i=1}^d p_i^E\left(\hat{\vz}_i\mid \hat{\vz}_{\pa_{\G}(i)}\right), \vx = \vh^{-1}(\vz)
    \end{equation}
    such that
    \begin{enumerate}[label={(\roman*)}]
        \item all densities $p_i^E$ are continuously differentiable and the joint density $p_E$ is positive everywhere;
        \item the environments are groups of single-node interventions: there exists a partition $\mathfrak{E}=\cup_{i=1}^d \mathfrak{E}_i$ such that $\I_{\vz}^{\mathfrak{E}_i}=\{i\}$;
        \item the intervention distributions on each node are non-degenerate: $\forall i\in[d]$, the set of distributions $\left\{p_i^E:E\in\mathfrak{E}_i\right\}$ satisfy \Cref{non-degeneracy-distribution} at any point $\hat{\vz}\in\R^d$,
    \end{enumerate}
    then there must exist a candidate solution $(\hat{\vh},\G)$ and a hypothetical data generating process
    \begin{equation}
        \notag
        \forall E\in\mathfrak{E}, \vv\sim q_E(\hat{\vv})=\prod_{i=1}^d q_i^E\left(\hat{\vv}_i\mid \hat{\vv}_{\pa_{\G}(i)}\right), \vx = \hat{\vh}^{-1}(\vv)
    \end{equation}
    such that
    \begin{enumerate}[label={(\roman*$'$)}]
        \item all densities $q_i^E$ are continuously differentiable and the joint density $q_E$ is positive everywhere;
        \item for the same partition $\mathfrak{E}=\cup_{i=1}^d \mathfrak{E}_i$, we have $\I_{\vv}^{\mathfrak{E}_i}=\{i\}$;
        \item $\forall i\in[d]$, the set of distributions $\left\{q_i^E:E\in\mathfrak{E}_i\right\}$ satisfy \Cref{non-degeneracy-distribution} at any point $\hat{\vv}\in\R^d$,
    \end{enumerate}
    but
    \begin{equation}
        \notag
        \frac{\partial \vv_i}{\partial \vz_j} \neq 0,\quad \forall j\in\overline{\dom}_{\G}(i).
    \end{equation}
\end{theorem}

\begin{remark}
    Similar to the case of \Cref{thm:linear-ambiguity}, \Cref{proof:non-param-ambiguity} also establishes a stronger form of identifiability. First, it is assumed that the causal graph $\G$ is known. Second, we only focus on a special case of the setting of \Cref{thm:single-node-soft-nonparam} by assuming that the support is the whole space, and the non-degeneracy condition \Cref{non-degeneracy-distribution} holds at any point. Even in this case, we show that our identification guarantee up to SNA cannot be improved.
\end{remark}

We state and prove a stronger version of \Cref{thm:non-param-ambiguity}:

\begin{theorem}
\label{thm:restate-non-param-ambiguity}
    For any causal model $(\vh,\G)$ and any set of environments $\mathfrak{E}$, suppose that we have observations $\left\{P_{\mX}^E\right\}_{E\in\mathfrak{E}}$ satisfying \Cref{asmp:data-generating-process}:
    \begin{equation}
        \notag
        \forall E\in\mathfrak{E},\quad \vz\sim p_E(z)=\prod_{i=1}^d p_i^E\left(z_i\mid z_{\pa_{\G}(i)}\right),\quad \vx = \vh^{-1}(\vz)
    \end{equation}
    such that
    \begin{enumerate}[label={(\roman*)}]
        \item all densities $p_i^E$ are continuously differentiable and the joint density $p_E$ is positive everywhere;
        \item the environments are groups of single-node interventions: there exists a partition $\mathfrak{E}=\cup_{i=1}^d \mathfrak{E}_i$ such that $\I_{\vz}^{\mathfrak{E}_i}=\{i\}$;
        \item the intervention distributions on each node are non-degenerate: $\forall i\in[d]$, the set of distributions $\left\{p_i^E:E\in\mathfrak{E}_i\right\}$ satisfy \Cref{non-degeneracy-distribution},
    \end{enumerate}
    then there must exist a candidate solution $(\hat{\vh},\G)$ and a hypothetical data generating process
    \begin{equation}
        \notag
        \forall E\in\mathfrak{E},\quad \vv\sim q_E(v)=\prod_{i=1}^d q_i^E\left(v_i\mid v_{\pa_{\G}(i)}\right),\quad \vx = \hat{\vh}^{-1}(\vv)
    \end{equation}
    such that
    \begin{enumerate}[label={(\roman*$'$)}]
        \item all densities $q_i^E$ are continuously differentiable and the joint density $q_E$ is positive everywhere;
        \item for the same partition $\mathfrak{E}=\cup_{i=1}^d \mathfrak{E}_i$, we have $\I_{\vv}^{\mathfrak{E}_i}=\{i\}$;
        \item $\forall i\in[d]$, the set of distributions $\left\{q_i^E:E\in\mathfrak{E}_i\right\}$ satisfy \Cref{non-degeneracy-distribution},
    \end{enumerate}
    but
    \begin{equation}
        \notag
        \frac{\partial \vv_i}{\partial \vz_j} \neq 0,\quad \forall j\in\overline{\dom}_{\G}(i).
    \end{equation}
    Finally, if we additionally assume minimality (\Cref{asmp:minimality}) and/or faithfulness (\Cref{asmp:faithfulness}) of all $p_E$'s, we can guarantee the existence of $(\hat{\vh},\G)$ and $q_E$'s satisfying minimality and/or faithfulness in addition to the properties listed above. In other words, assuming minimality and/or faithfulness does not resolve the ambiguity.
\end{theorem}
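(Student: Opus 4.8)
The plan is to realize the ambiguity by a single \emph{linear} reparametrization of the latents, mirroring the proof of \Cref{thm:linear-ambiguity} but now leaning on \Cref{prop:sp-linear-same-graph} to supply the non-parametric factorization. Fix an effect-respecting matrix $\mM\in\mathcal{M}_{\dom}(\G)$ and set $\hat{\vh}=\mM\circ\vh$, so that the competing latents are $\vv=\mM\vz$ and $\vx=\hat{\vh}^{-1}(\vv)$ reproduces the \emph{same} observed laws $\{P_{\mX}^E\}$. Since $\mM$ is invertible and $\vh$ is a diffeomorphism, $\hat{\vh}$ is a diffeomorphism; and as $\mM$ is a linear bijection, $q_E(\vv)=|\det\mM^{-1}|\,p_E(\mM^{-1}\vv)$ is positive and $C^1$ with $C^1$ conditionals $q_i^E$, which gives the regularity half of (i$'$). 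Because $\mathcal{M}_{\dom}(\G)\subset\mathcal{M}_{\dom}^0(\G)$ (\Cref{remark:measure}), \Cref{prop:sp-linear-same-graph} applies in each environment and yields $q_E(\vv)=\prod_{i=1}^{d}q_i^E(\vv_i\mid\vv_{\pa_{\G}(i)})$, i.e.\ $\vv$ is generated from the \emph{same} graph $\G$, completing (i$'$).

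Next I would extract the explicit form of the transported conditionals. Ordering nodes so that $i\in\pa_{\G}(j)\Rightarrow i<j$ makes $\mM$ lower triangular, and the computation inside the proof of \Cref{prop:sp-linear-same-graph} gives, pointwise, $q_i^E(\vv_i\mid\vv_{\pa_{\G}(i)})=\mM_{ii}\,p_i^E(\vz_i\mid\vz_{\pa_{\G}(i)})$, where by \Cref{lemma:dom-transform-same-parents} the arguments are linked by the fixed invertible maps $\vv_{\hpa_{\G}(i)}=\mM_i\vz_{\hpa_{\G}(i)}$ and $\vv_{\pa_{\G}(i)}=\mM_i^-\vz_{\pa_{\G}(i)}$, none of which depends on $E$. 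For (ii$'$): within a group $\mathfrak{E}_t$ the prefactor $\mM_{ii}$ and the change of variables are identical across environments, so $q_i^{E}=q_i^{E'}\Leftrightarrow p_i^{E}=p_i^{E'}$ and hence $\I_{\vv}^{\mathfrak{E}_t}=\I_{\vz}^{\mathfrak{E}_t}=\{t\}$. For (iii$'$): the ratio $q_1^E/q_k^E=p_1^E/p_k^E$ drops the $\mM_{ii}$ factor, and by the chain rule the matrix $[\partial(q_1^E/q_k^E)/\partial\vv_j]_{k,j}$ equals its $\vz$-counterpart right-multiplied by $\mM_i^{-1}$; right multiplication by an invertible matrix preserves column rank, so the non-degeneracy of \Cref{non-degeneracy-distribution} is inherited.

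The entanglement conclusion is exactly where $\mM\in\mathcal{M}_{\dom}(\G)$, as opposed to the larger $\mathcal{M}_{\dom}^0(\G)$, is needed: by definition $\mM_{ij}\neq0$ for every $j\in\overline{\dom}_{\G}(i)$, so $\partial\vv_i/\partial\vz_j=\mM_{ij}\neq0$ there, which is the required claim. At this point the base statement (without the axioms) holds for every $\mM\in\mathcal{M}_{\dom}(\G)$.

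For the strengthened assertion I would argue by genericity inside the embedded parameter space $\mathcal{M}_{\dom}^0(\G)\cong\R^{d+d_{\G}}$ of \Cref{remark:measure}, equipped with Lebesgue measure, showing that the $\mM$'s for which $q_E$ fails minimality or faithfulness form a null set. Minimality of $q_E$ asks that each $q_i^E$ depend on every parent $\vv_j$, $j\in\pa_{\G}(i)$; evaluating $\partial q_i^E/\partial\vv_j$ at a point where the $\vz$-conditional genuinely varies in $\vz_j$ (available by minimality of $p_E$) and expanding through $\mM_i^-$ exhibits it as a non-trivial rational function of $\mM$, nonzero at $\mM=\mI$, hence nonzero off a null set. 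Faithfulness of $q_E$ asks that no conditional independence $\vv_i\indep\vv_j\mid\vv_S$ hold for a triple $(i,j,S)$ that is not $d$-separated in $\G$; since $\mM=\mI\in\mathcal{M}_{\dom}^0(\G)$ returns the faithful law $p_E$, each such ``spurious-CI'' locus is proper and should again be null. Intersecting the finite union of these bad sets (over nodes, parents, environments, and triples $(i,j,S)$) with the full-measure set $\mathcal{M}_{\dom}(\G)$ leaves a positive-measure set of admissible $\mM$, any of which yields the desired $(\hat{\vh},\G)$ inheriting minimality and/or faithfulness. The main obstacle is precisely this faithfulness step: for general $C^1$ densities a conditional independence is not an algebraic constraint in the entries of $\mM$, so showing each spurious-CI set is a genuine null set requires establishing real-analytic (or at least non-degenerate) dependence of an appropriate CI-defect functional on $\mM$ and invoking its non-vanishing at $\mM=\mI$ — a Meek-style faithfulness-genericity argument that must be executed carefully in the non-parametric regime.
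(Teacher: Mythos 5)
Your construction for the main body of the theorem is exactly the paper's: set $\vv=\mM\vz$ for an effect-respecting $\mM$, transport the conditionals via \Cref{lemma:dom-transform-same-parents}, and check (i$'$)--(iii$'$) by noting that the change-of-variables prefactor is a constant independent of $E$ and that non-degeneracy is preserved under right-multiplication of the gradient matrix by the invertible $\mM_i^{-1}$. (One small slip: the prefactor is $\absx{\det\mM_i^-}/\absx{\det\mM_i}=\absx{\mM_{ii}}^{-1}$, not $\mM_{ii}$; this does not affect anything since it cancels in ratios and is constant within each group $\mathfrak{E}_t$.) The entanglement conclusion from $\mM\in\mathcal{M}_{\dom}(\G)$ is also as in the paper.

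The genuine gap is in your treatment of minimality and faithfulness, and you have correctly sensed where it lies but not resolved it --- and the same problem afflicts your minimality argument, not just faithfulness. The CI-defect and minimality-defect functionals are only continuous (or $C^1$) in $\mM$ because the densities $p_i^E$ are merely $C^1$: the zero set of a continuous function that is nonzero at $\mM=\mI$ can still have positive Lebesgue measure, so ``nonzero at $\mI$ hence nonzero off a null set'' is false without real-analyticity, which the hypotheses do not supply. In particular your claim that $\partial q_i^E/\partial\vv_j$ is a ``non-trivial rational function of $\mM$'' is not right, since the evaluation point $\vz=\mM^{-1}\vv$ moves with $\mM$ and $p_i^E$ is not rational. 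The paper's proof avoids genericity entirely with a \emph{local} argument: for each of the finitely many witnesses (a non-$d$-separated triple $(i,j,S)$ and a point $\hat{\vz}$ with nonzero CI defect for faithfulness; an edge $j\to i$, a node $\alpha_{ijk}$, and a point $\hat{\vz}^{ijk}$ for minimality), continuity of the transported densities in $\mM$ yields a $\delta>0$ such that the strict inequality persists for all $\mM\in\overline{\mathcal{M}}_{\dom}(\G)$ with $\normx{\mM-\mI}_F\leq\delta$; taking the minimum over the finitely many witnesses gives a single neighbourhood $O$ of $\mI$, and one then picks any $\mM\in O\cap\mathcal{M}_{\dom}(\G)$, which is nonempty because $\overline{\mathcal{M}}_{\dom}(\G)\setminus\mathcal{M}_{\dom}(\G)$ is a null set (\Cref{remark:measure}). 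Replacing your measure-theoretic step with this neighbourhood-of-the-identity argument closes the gap; no Meek-style genericity machinery is needed.
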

\begin{proof}
We define
\begin{equation}
    \label{eq:impossibility-v-z-relation2}
    \vv = \mM\vz
\end{equation}
where $\mM$ is an effect-respecting matrix. At this point we do not make any other restrictions on $\mM$, and we will choose appropriate $\mM$ later. By \Cref{lemma:dom-transform-same-parents}, there exists invertible matrices $\mM_i$ and $\mM_i^-$ such that $\vv_{\pa_{\G}(i)}=\mM_i^-\vz_{\pa_{\G}(i)}$ and $\vv_{\hpa_{\G}(i)}=\mM_i\vz_{\hpa_{\G}(i)}$, so for all environment $E\in\mathfrak{E}$ we have
\begin{equation}
    \notag
    q_i^E(\vv_{\pa_{\G}(i)}) = p_i^E(\vz_{\pa_{\G}(i)})\cdot\absx{\det(\mM_i^-)^{-1}}, \quad q_i^E(\vv_{\hpa_{\G}(i)}) = p_i^E(\vz_{\hpa_{\G}(i)})\cdot\absx{\det(\mM_i)^{-1}}
\end{equation}
so that
\begin{equation}
    \label{eq:impossibility-change-of-variable}
    q_i^E\left(\vv_i\mid\vv_{\mathrm{pa}_{\G}(i)}\right) = p_i^E\left(\vz_i\mid\vz_{\mathrm{pa}_{\G}(i)}\right)\frac{\absx{\det \mM_i^{-1}}}{\absx{\det (\mM_i^-)^{-1}}}, \quad \forall i\in[d].
\end{equation}
In the following, assuming that $\left(p_i^E: E\in\mathfrak{E}\right)$ satisfies any of the listed assumptions, we show that $\left(q_i^E: E\in\mathfrak{E}\right)$ satisfies the same assumption as well.

Firstly, \Cref{eq:impossibility-change-of-variable} immediately implies that the density of $\vv$ is continuous differentiable and positive everywhere. Secondly, $\forall k,\ell\in\mathfrak{E}_i$, we have that
\begin{equation}
    \notag
    p_j^{E_k}\left(\vz_j\mid\vz_{\mathrm{pa}_{\G}(j)}\right) = p_j^{E_{\ell}}\left(\vz_j\mid\vz_{\mathrm{pa}_{\G}(j)}\right) \Leftrightarrow j=i.
\end{equation}
By \Cref{eq:impossibility-change-of-variable} it is easy to see that
\begin{equation}
    \notag
    q_j^{E_k}\left(\vv_j\mid\vv_{\mathrm{pa}_{\G}(j)}\right) = q_j^{E_\ell}\left(\vv_j\mid\vv_{\mathrm{pa}_{\G}(j)}\right) \Leftrightarrow j=i
\end{equation}
as well, \emph{i.e.,} $q^k, k\in \mathfrak{E}_i$ are single-node interventions on $\vv_i$ according to \Cref{def:intervention}.

Thirdly, we verify the non-degeneracy condition for $q_i^E$'s. Indeed we have for $\forall k\geq 2$ that
\begin{equation}
\notag
\begin{aligned}
    \nabla_{\vv_{\hpa_{\G}(i)}}\frac{q_i^{E_1}}{q_i^{E_k}}\left(\vv_i\mid\vv_{\mathrm{pa}_{\G}(i)}\right) 
    &= \frac{\partial\vz_{\hpa_{\G}(i)}}{\partial\vv_{\hpa_{\G}(i)}}\nabla_{\vz_{\hpa_{\G}(i)}}\frac{q_i^{E_1}}{q_i^{E_k}}\left(\vz_i\mid\vz_{\mathrm{pa}_{\G}(i)}\right) 
    = \mM_i^{-1} \nabla_{\vz_{\hpa_{\G}(i)}}\frac{q_i^{E_1}}{q_i^{E_k}}\left(\vz_i\mid\vz_{\mathrm{pa}_{\G}(i)}\right).
\end{aligned}
\end{equation}
Since $\mM_i$ is invertible, the above equation and the non-degeneracy of $p^{E_k},k\in[K]$ immediately implies that non-degeneracy of $q^{E_k},k\in[K]$.

Thus, for arbitrary $\mM\in\mathcal{M}_{\dom}(\G)$, we have constructed a hypothetical data generating process with latent variable $\vv=\mM\vz$ that satisfies all given conditions. It remains to show that such construction is still possible under additional minimality and faithfulness conditions.
\\

\textbf{Claim 1. There exists a neighbourhood $O$ of the identity matrix $\mI$ in $\overline{\mathcal{M}}_{\dom}(\G)$ (in the sense of \Cref{remark:measure}) such that for $\forall \mM\in O\cap\mathcal{M}_{\dom}^0(\G)$, $p^{E_k}, k\in[K]$ satisfy \Cref{asmp:faithfulness} $\Rightarrow$ $q^{E_k}, k\in[K]$ satisfy \Cref{asmp:faithfulness}.}

For $\forall i,j$ not $d$-separated by $S\subseteq[d]$, for all $k\in[K]$ there exists $\hat{\vz}\in\R^d$ such that $\Delta_k^{(i,j,S)}=p^{E_k}\left(\hat{\vz}_i,\hat{\vz}_j\mid \hat{\vz}_S\right)-p^{E_k}\left(\hat{\vz}_i\mid \hat{\vz}_S\right)p^{E_k}\left(\hat{\vz}_j\mid \hat{\vz}_S\right)\neq 0$. By continuous differentiability of $p^{E_k}$, we know that there exists $\delta_k^{(i,j,S)}>0$ such that for all $\mM\in\overline{\mathcal{M}}_{\dom}(\G)$ such that $\normx{\mM-\mI}_F\leq\delta_k^{(i,j,S)}$, the density of the variable $\vv=\mM\vz$ satisfies $q^{E_k}\left(\hat{\vv}_i,\hat{\vv}_j\mid \hat{\vv}_S\right)\neq q^{E_k}\left(\hat{\vv}_i\mid \hat{\vv}_S\right)q^k\left(\hat{\vv}_j\mid \hat{\vv}_S\right)$ for $\hat{\vv}=\mM\hat{\vz}$, which implies that $\vv_i$ and $\vv_j$ are dependent given $\vv_S$. Now choose $\delta=\min_{k,i,j,S}\delta_k^{(i,j,S)}>0$, then for all $\mM\in\overline{\mathcal{M}}_{\dom}(\G)$ such that $\normx{\mM-\mI}_F\leq\delta$, the resulting distributions $q^{E_k}, k\in[K]$ satisfy assumption \Cref{asmp:minimality}.
\\

\textbf{Claim 2. There exists a neighbourhood $O$ of $\mI$ in $\overline{\mathcal{M}}_{\dom}(\G)$ (in the sense of \Cref{remark:measure}) such that for almost all $\mM\in O\cap\mathcal{M}_{\dom}^0(\G)$, $p^{E_k}, k\in[K]$ satisfies \Cref{asmp:minimality} $\Rightarrow$ $p^{E_k}, k\in[K]$ satisfies \Cref{asmp:minimality}.}

The proof is similar to the previous statement. Since \Cref{asmp:minimality} causal minimality is satisfied for $\vz$, for $\forall k\in[K], i\in[d]$, let $\G_{ij}$ be the resulting graph obtained by removing the edge $j\to i$ from $\G$, then there must exists some $\alpha_{ijk}\in[d]$ such that $\vz_{\alpha_{ijk}}\not\independent\vz_{\nd_{\G_{ij}}(\alpha_{ijk})}\mid \vz_{\pa_{\G_{ij}}(\alpha_{ijk})}$. Hence, there exists $\hat{\vz}^{ijk}\in\R^d$ such that
\begin{equation}
    \notag
    p^{E_k}\left(\hat{\vz}_{\alpha_{ijk}}^{ijk} \mid \hat{\vz}_{\pa_{\G_{ij}}(\alpha_{ijk})}^{ijk}\right) p^{E_k}\left(\hat{\vz}_{\nd_{\G_{ij}}(\alpha_{ijk})}^{ijk}\mid \hat{\vz}_{\pa_{\G_{ij}}(\alpha_{ijk})}^{ijk}\right) \neq p^{E_k}\left(\hat{\vz}_{\overline{\nd}_{\G_{ij}}(\alpha_{ijk})}^{ijk}\mid \hat{\vz}_{\pa_{\G_{ij}}(\alpha_{ijk})}^{ijk}\right).
\end{equation}
By continuous differentiability of $p^{E_k}$, there exists $\delta_k^{(i,j)}>0$ such that for all $\mM\in\bar{\mathcal{M}}_{\dom}(\G)$ such that $\normx{\mM-\mI}_F\leq\delta_k^{(i,j)}$, the density $q_{ij}^{E_k}$ of the variable $\hat{\vv}^{ijk}=\mM\hat{\vz}^{ijk}$ satisfies 
\begin{equation}
    \notag
    q^{E_k}\left(\hat{\vv}_{\alpha_{ijk}}^{ijk} \mid \hat{\vv}_{\pa_{\G_{ij}}(\alpha_{ijk})}^{ijk}\right) q^{E_k}\left(\hat{\vv}_{\nd_{\G_{ij}}(\alpha_{ijk})}^{ijk}\mid \hat{\vv}_{\pa_{\G_{ij}}(\alpha_{ijk})}^{ijk}\right) \neq q^{E_k}\left(\hat{\vv}_{\overline{\nd}_{\G_{ij}}(\alpha_{ijk})}^{ijk}\mid \hat{\vv}_{\pa_{\G_{ij}}(\alpha_{ijk})}^{ijk}\right).
\end{equation}
for $\hat{\vv}^{ijk}=\mM\hat{\vz}^{ijk}$.
This implies that removing the edge $j\to i$ in $\G$ would break the causal Markov condition for $q^{E_k}$. Now let $\delta=\min_{k,i,j}\delta_k^{(i,j)}>0$, then for all $\mM\in\bar{\mathcal{M}}_{\dom}(\G)$ such that $\normx{\mM-\mI}_F\leq\delta$, the resulting distributions $q^{E_k}, k\in[K]$ satisfy assumption \Cref{asmp:data-generating-process}.
\\

Combining the above two statements and what we have proven before, it is straightfoward to see that one can choose some $\mM\in\mathcal{M}_{\dom}(\G)$ in a small neighbourhood of $\mI$ that satisfies all the requirements, completing the proof.
\end{proof}

\end{appendices}

\end{document}